\def\eqref#1{equation~\ref{#1}}
\def\1{\bm{1}}
\DeclareMathAlphabet{\mathsfit}{\encodingdefault}{\sfdefault}{m}{sl}
\SetMathAlphabet{\mathsfit}{bold}{\encodingdefault}{\sfdefault}{bx}{n}
\DeclareMathOperator*{\argmin}{arg\,min}
\definecolor{myblue}{RGB}{0,0, 150}
\definecolor{mygray}{RGB}{90,90,110}
\newtheorem{proposition}{Proposition}[section]
\crefname{theorem}{theorem}{theorems}
\Crefname{theorem}{Theorem}{Theorems}
\crefname{proposition}{proposition}{propositions}
\Crefname{proposition}{Proposition}{Propositions}
\newtheorem{corollary}{Corollary}[proposition]
\crefname{definition}{Definition}{Definitions}
\Crefname{definition}{Definition}{Definitions}
\crefname{assumption}{Assumption}{Assumptions}
\Crefname{assumption}{Assumption}{Assumptions}
\renewcommand{\ALG@beginalgorithmic}{\small}
\newcounter{tfaenum}
\renewcommand{\thetfaenum}{\arabic{tfaenum}}
\newenvironment{tfaitem}[4][]
{%
  \refstepcounter{tfaenum}%
  \setlength\intextsep{2pt}%
  \begin{wrapfigure}{r}{0.45\textwidth}
    \centering
    \includegraphics[width=0.95\linewidth]{#2}
    \vspace{-8pt}
    \caption{#3}
    \label{#4}
  \end{wrapfigure}
  \noindent\textbf{\thetfaenum.\ #1}%
}
{\par\vspace{0.75\baselineskip}}
\title{Priors in Time: Missing Inductive Biases for Language Model Interpretability}
\author{Ekdeep Singh Lubana$^{1*}$, Can Rager$^{2*}$, Sai Sumedh R.\ Hindupur$^{3*}$,\\ 
\textbf{Valérie Costa$^{4}$,\, Greta Tuckute$^{5}$,\, Oam Patel$^{3}$,\, Sonia Krishna Murthy$^{5}$,\, Thomas Fel$^{5}$,}\\
\textbf{Daniel Wurgaft$^{1,6}$,\, Eric J.\ Bigelow$^{1,7}$,\, Johnny Lin$^8$,\, Demba Ba$^{3,5}$, }\\
\textbf{Martin Wattenberg$^{3}$, Fernanda Viegas$^{3}$, Melanie Weber$^{3}$, Aaron Mueller$^{9}$}\vspace{6pt}\\
$^1$Goodfire AI, $^2$Independent, $^3$SEAS, Harvard University, $^4$EPFL\\
$^5$Kempner Institute at Harvard University, $^6$Department of Psychology, Stanford University,\\
$^7$Department of Psychology, Harvard University, $^8$Decode Research, $^{9}$Boston University,\\ $^*$Co-first authors
}
\begin{document}

\doparttoc 
\faketableofcontents 

\maketitle

\vspace{-14pt}
\begin{abstract}

Recovering meaningful concepts from language model activations is a central aim of interpretability. While existing feature extraction methods aim to identify concepts that are independent directions, it is unclear if this assumption can capture the rich temporal structure of language. Specifically, via a Bayesian lens, we demonstrate that Sparse Autoencoders (SAEs) impose priors that assume independence of concepts across time, implying stationarity. Meanwhile, language model representations exhibit rich temporal dynamics, including systematic growth in conceptual dimensionality, context-dependent correlations, and pronounced non-stationarity, in direct conflict with the priors of SAEs. Taking inspiration from computational neuroscience, we introduce a new interpretability objective---Temporal Feature Analysis---which possesses a temporal inductive bias to decompose representations at a given time into two parts: a predictable component, which can be inferred from the context, and a residual component, which captures novel information unexplained by the context. Temporal Feature Analyzers correctly parse garden path sentences, identify event boundaries, and more broadly delineate abstract, slow-moving information from novel, fast-moving information, while existing SAEs show significant pitfalls in all the above tasks. Overall, our results underscore the need for inductive biases that match the data in designing robust interpretability tools. Code available at \url{https://github.com/eslubana/TemporalFeatureAnalysis}.

\vspace{-5pt}
\end{abstract}

\section{Introduction}
\vspace{-5pt}

Given the success of Language Models (LMs)~\citep{bubeck2023sparks, geminigold}, there is growing interest in understanding how such models incrementally update over sequences of tokens to exhibit complex behaviors~\citep{murthy2025inside, lindsey2025emergent, lindsey2025biology, lepori2025just, bigelow2025belief, tuckute2024driving, klindt2025superposition}. 
Interpretability research aims to make such analyses tractable, offering tools for hypothesis design, testing, and intervention based on evaluation of intermediate activations~\citep{geiger2025abstraction, sharkey2025open, bereska2024mechanistic}.
Often, such work builds on hypothesized computational models of how concepts are encoded in a neural network's representations, e.g., the linear representation hypothesis (LRH)~\citep{elhage2022superposition, arora2018linear}, correspondingly motivating tools such as sparse autoencoders (SAEs)~\citep{gao2024scaling, cunningham2023sparse} for unsupervised extraction of a dictionary of vectors that (ideally) mediate human-interpretable concepts~\citep{mueller2025questrightmediatorsurveying}.

A central challenge in ``bottom-up'' approaches to interpretability, like SAEs, is the mismatch between the assumptions of their underlying implementational account and the precise behavior or computation they intend to explain~\citep{jonas2017could, geiger2025abstraction, costa2025flat} (see Fig.~\ref{fig:intro}).
For instance, since LRH posits that different concepts correspond to directions in activation space that can be independently manipulated, it implicitly claims the data distribution can be factorized into independently varying latent variables~\citep{allen2024unpicking}.
This mismatch between the structure of the data distribution and strong priors codified in LRH can lead to misleading or pathological explanations when using SAEs to understand neural networks~\citep{chanin2025absorptionstudyingfeaturesplitting, bricken2023monosemanticity, hindupur2025projecting}. 
This raises a set of critical questions for using SAEs to interpret models trained on sequential data like language.
Specifically, since language exhibits rich temporal structure at \textit{multiple scales}~\citep{marsenwilson-1980, thompson1999temporal}---e.g., sentences contain dependencies that link words across time \citep{gibson2000dependency, mcelree2003memory}, upcoming words can be anticipated from context \citep{hale2001probabilistic, levy2008expectation}, and discourse imposes structure over longer timescales through phenomena like event boundaries \citep{zacks2007event, baldassano2017discovering}---one can ask \textit{what assumptions about temporal structures do SAEs make? How do these assumptions align with the actual temporal structure present in a LM's activations?}  

\begin{figure}
  \begin{center}
    \includegraphics[width=\linewidth]{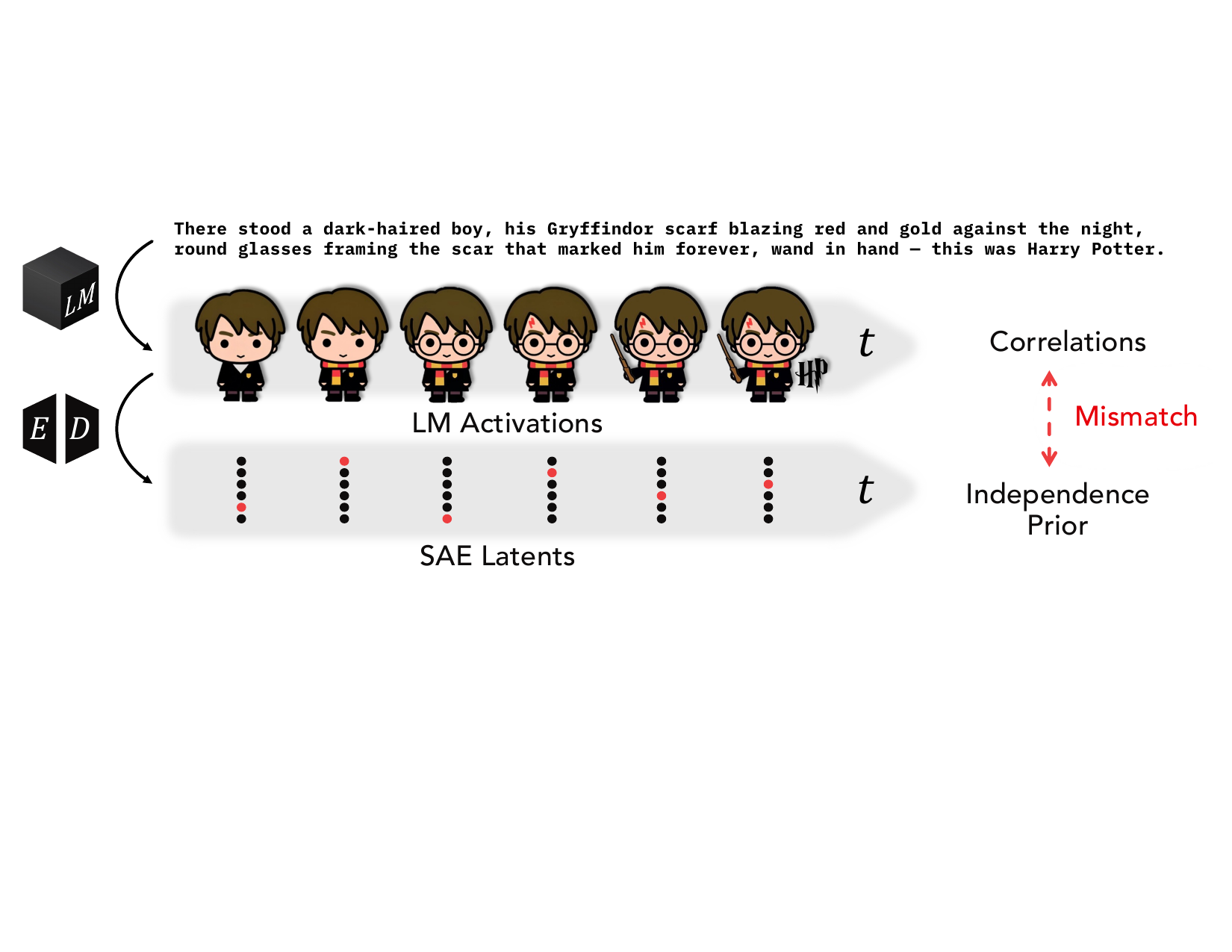}
  \end{center}
\vspace{-10pt}
\caption{\label{fig:intro}
\textbf{The mismatch between SAE assumptions and temporal structure of language}. An illustrative sentence describing attributes of Harry Potter is shown. When passed into a language model (LM), it leads to activations $\x_t$ that include concepts within them (possibly entangled): note the presence of large numbers of shared attributes over time, which manifest as correlations across time of activations. Sparse Autoencoders (SAEs) implicitly have an independence (i.i.d.) prior across time $t$ over their latents and thereby over concepts, which clashes with the true structure of language.}
  \label{fig:priordatamismatch-and-harrypotter}
  \vspace{-10pt}
\end{figure}

\paragraph{This work.}
Building on the Bayesian interpretation of sparse coding~\citep{olshausen1996emergence, olshausen1997sparse}---the framework that motivates SAEs---we rephrase the optimization objective of SAEs as a MAP (maximum a posteriori) estimation problem. 
This allows us to make explicit prior assumptions about temporal structure embedded in SAEs, showing they implicitly assume concepts are \textit{uncorrelated across time} and the number of concepts necessary to explain an activation is \textit{time-invariant}---that is, the information present at each token position is independent of information at other positions and uniformly distributed.
As we empirically show, these assumptions stand in stark contrast to the actual temporal structure present in language and language model representations, and can result in empirically observed pathologies in SAEs, such as feature splitting \citep{bricken2023monosemanticity, chanin2025absorptionstudyingfeaturesplitting, bussmann2025learning}.

These results then motivate us to draw a broader parallel between SAEs and computational neuroscience approaches for understanding neural data.
Specifically, population-level analyses of neural recordings have revealed that representations often lie on structured manifolds~\citep{khona2022attractor, nogueira2023geometry, sohn2019bayesian}, challenging the reductionist assumption in sparse coding that computations occur via independently firing, monosemantic features~\citep{eichenbaum2018barlow, saxena2019towards, barack2021two}. 
This motivated a paradigm shift towards more structured analysis protocols---methods designed around the generative process of the behavior one aims to explain \citep{schneider2023learnable, chen2018sparse}. 
Motivated by this and similar findings of intricate geometrical structure in neural network representations~\citep{fel2025into, gurnee2025when, modell2025origins}, we propose \textbf{Temporal Feature Analysis}, a new protocol for interpreting language model activations that incorporates explicit inductive biases about temporal structure. 
Our approach decomposes activations at each timestep into two orthogonal components: a \textit{predictable component}, obtained by projecting current representations onto past context using a learned attention mechanism, and a \textit{novel component}, representing residual information orthogonal to the predictable component. 
That is, we assume the \emph{novel} component---not the total representation---is uncorrelated over time. 
This allows correlations between total codes and hence enables our method to capture the temporal structure of LM activations.
Overall, we make the following contributions in this paper.

\begin{itemize}
    
    \item \textbf{Highlighting temporal structure in language model (LM) representations (Sec.~\ref{sec:llmtemporalstructure}).} We demonstrate rich temporal dynamics in LM representations, which show nonstationarity through systematic growth in conceptual dimensions with time, time-varying activation correlations, and strong correlations with recent context, that are at odds with the implicit priors of SAEs.
    
    \item \textbf{Precise characterization of prior assumptions of existing SAEs (Sec.~\ref{sec:saepriors}).} Adopting a Bayesian perspective on the SAE objective, we show that SAEs implicitly assume an independent and identically distributed (i.i.d.) prior across time on their latents, and thereby on concepts. As a consequence, SAEs fail to capture local correlations in LM activations.

    \item \textbf{Incorporating data-informed temporal inductive biases into a novel SAE architecture (Sec.~\ref{sec:temporalsaearch}).} Using empirically observed temporal structure in LM activations, we design a novel interpretability protocol---called \textit{Temporal Feature Analysis}---which decomposes the model activation into two components: a predictive component, which captures slow-moving contextual information, and a novel component, which captures fast-moving stimulus-driven information.
    
    \item \textbf{Using Temporal Feature Analysis to reveal temporal structures in LM representations (Sec.~\ref{sec:temporalresults}).} We show Temporal Feature Analysis's ability to extract temporal structures by studying linguistic input with temporal structure at different scales: garden path sentences, stories, and in-context dialogue. Specifically, we show the extracted predictive codes: (1) correctly parse garden path sentences, (2) decompose stories into events, (3) capture precise structure of in-context representations, and (4) delineate user-assistant interactions through smooth trajectories. Meanwhile, the novel codes allow automatic interpretability pipelines, akin to SAEs.
\end{itemize}


\section{Preliminaries}
\label{sec:prelims}

\paragraph{Notations.} Let bold, lowercase letters represent vectors (e.g., $\z$). Subscripts on vectors denote different samples (e.g., $\z_i$), while superscripts denote the index within the vector, leading to a scalar (e.g., $z^k$). We denote model activations by $\x \in \mathbb{R}^n$, SAE latents (sparse code) by $\z \in \mathbb{R}^M$, and the dictionary by $\D \in \mathbb{R}^{n \times M}$ ($M$ is the dictionary size). 

\paragraph{Sparse Coding.} Sparse dictionary learning \citep{olshausen1996emergence, olshausen1997sparse} expresses data as a sparse linear combination of dictionary elements, where both the weights and dictionary are learned from data. Intuitively, the dictionary behaves as a data-adaptive overcomplete basis; i.e., it typically has more elements that the dimension of ambient space. The optimization problem involved in this framework is $\argmin_{\D, \z} \frac{1}{N} \sum_{i=1}^{N} \|\x_{i} - \D \z_{i}\|_2^2 + \lambda \regsparse(\z_{i})$, where $\regsparse(\cdot)$, typically chosen to be the $\ell_1$-norm, is a sparsity-inducing regularizer. Sparsity assists in picking the fewest most relevant dictionary atoms to explain a given data point. 

\paragraph{Sparse Autoencoders (SAEs).} SAEs aim to disentangle~\citep{bengio2013representation, higgins2018towards, olahreps} neural network activations into human-interpretable concepts~\citep{cunningham2023sparse, bricken2023monosemanticity}. Specifically, SAEs transform their inputs (i.e., neural network activations) into a latent representation which is encouraged to be sparse. As shown by \citet{hindupur2025projecting}, this is achieved by solving the sparse coding problem using a specific parametric form for the sparse codes: 
\begin{equation}
\label{eq:sae-sparsecoding}
\begin{split}
    \argmin_{\D, \z} \frac{1}{T} \sum_{i=1}^{T} \|\x_{i} - \D \z_{i}\|_2^2 + \lambda \regsparse(\z_{i}),\quad
    \text{s.t. }\quad \z_{k} = f_{\mathtt{SAE}}(\x_{k})\; \forall k, \; \Tilde{g}(\z_1, \dots, \z_T)=0,
\end{split}
\end{equation}
where $\regsparse(\cdot)$ is the regularizer (typically the $L_1$ norm), $f_{\mathtt{SAE}}$ is the SAE encoder architecture, and $\Tilde{g}(\cdot)$ captures SAE-specific sparsity constraints on $\z$. 
$f_{\mathtt{SAE}}$ is typically a single hidden layer as in the ReLU SAE \citep{bricken2023monosemanticity, cunningham2023sparse}, TopK SAE \citep{gao2024scaling, makhzani2013k}, JumpReLU SAE \citep{rajamanoharan2024jumpingaheadimprovingreconstruction} and BatchTopK SAE \citep{bussmann2024batchtopksparseautoencoders}, 
though recent work has also explored alternative architectures inspired by sparse coding algorithms to capture specific structures, e.g., hierarchies \citep{muchane2025incorporatinghierarchicalsemanticssparse, costa2025flat}.

\section{Temporal Structure in Language Model Activations}
\label{sec:llmtemporalstructure}

\begin{figure}
    \centering
    \vspace{-5pt}
    \includegraphics[width=1\linewidth]{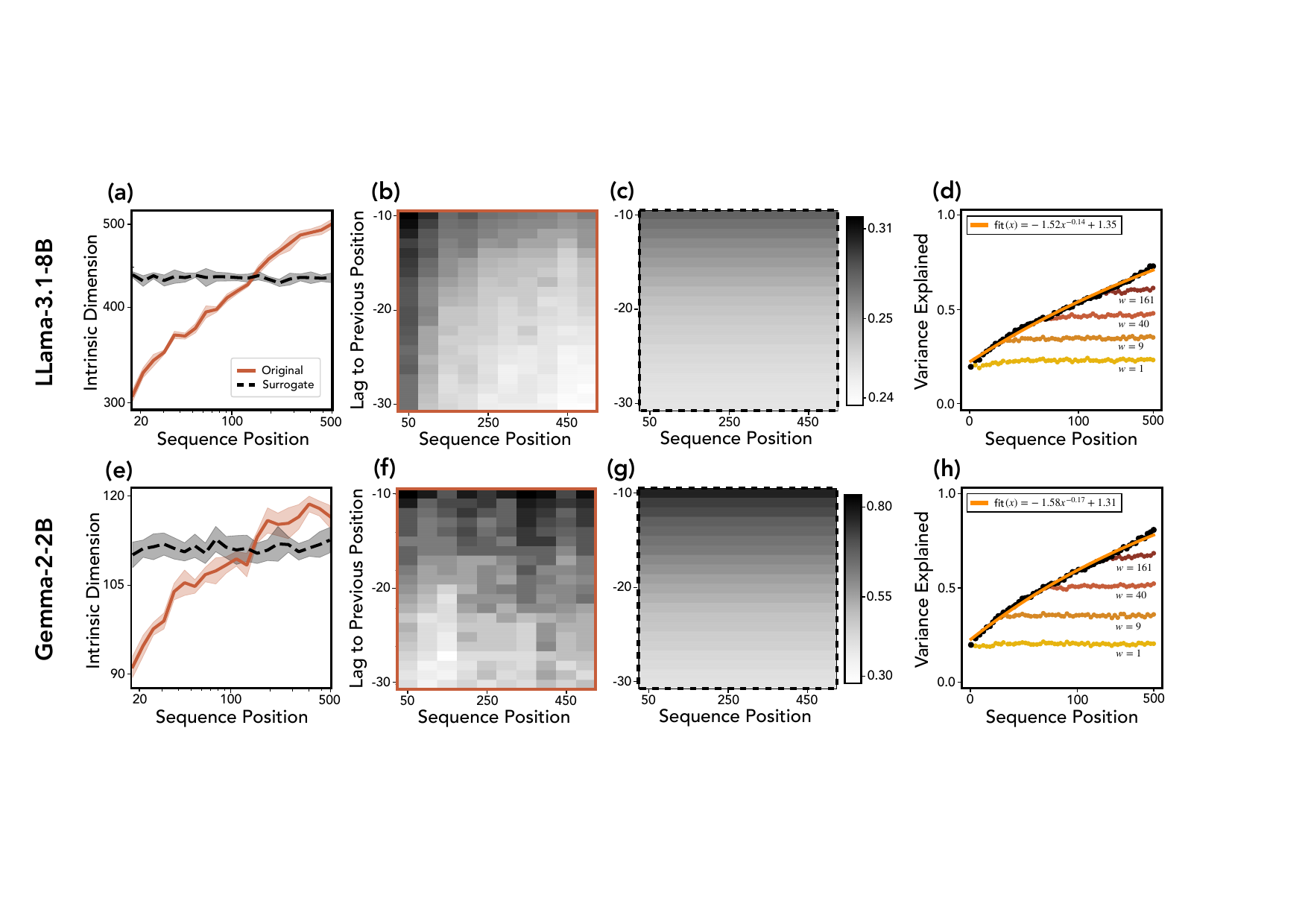}
    \vspace{-15pt}
        \caption{\textbf{Temporal structure of LLM activations reveals nonstationarity.} We use Pile samples~\citep{monology2021pile-uncopyrighted} to analyze temporal structure from activations of two pretrained LMs, comparing it to a surrogate signal that is stationary in nature (see App.~\ref{app:surrogate}). (\textbf{a, e}) Intrinsic dimension of model activations and stationary surrogate. (\textbf{b, f}) Autocorrelations $A(\x_t, \x_{t-\tau})$ as a function of sequence position ($t$) and lag ($\tau$). (\textbf{c, g}) Autocorrelation of the stationary surrogate. (\textbf{d, h}) Variance explained by projecting current representation $\x_t$ onto past context window $\{\x_{t-1}, \dots, \x_{t-w}\}$ with different sizes $w$, along with a baseline. Results consistently show representations getting `denser' over time and being significantly more structured than a stationary surrogate.
    \vspace{-10pt}
    }
    \label{fig:llm_temporal_structure}
\end{figure}

To contextualize the prior assumptions made by SAEs about temporal structure in LMs' activations, we first perform an empirical characterization of such temporal structure in pretrained LMs.
Specifically, since LMs are trained to generate coherent text by learning the distribution of natural language, one can expect their representations capture the rich phenomenology of its sequential structure~\citep{elman1990finding}; indeed, recent work has in fact found LM representations to be predictive of human neural recordings during language comprehension~\citep{hosseini2024universality, schrimpf2021neural, tuckute2024driving, hong2024scale, georgiou2023using}.
Motivated by this, we perform two experiments relevant to our discussion (see Fig.~\ref{fig:llm_temporal_structure}): (i) \textit{measuring intrinsic dimensionality}---an approximation of the number of concepts necessary to explain the data, which can be expected to increase in a monotonic manner with time~\citep{zhong2024random, can2025statistical, barak2014working,meister2021revisiting}---and (ii) \textit{signal nonstationarity}---which assesses whether model activations reflect the contextual relations between phrases of a passage ~\citep{zacks2007event}.

\paragraph{Increasing intrinsic dimensionality.} Fig.~\ref{fig:llm_temporal_structure}~(a,e) show the dimensionality of the underlying manifold structure (intrinsic dimension) in model activations. We estimate the intrinsic dimensionality at a fixed position across a set of sequences with the U-statistic (App.\ Sec.\ \ref{appendixsubsection:ustat}). For language model activations, this metric increases steadily with sequence position. On the other hand, a stationary surrogate of the data (see App.\ Sec.\ \ref{app:surrogate}) shows nearly constant intrinsic dimension over time. This indicates that model activations get `denser', i.e., they possess more information over time. Correspondingly, the number of concepts needed to explain them varies with context.

\paragraph{Non-stationarity: Context explains bulk of signal variance.} Subplots (b), (f) show the autocorrelation of model activations (App.\ Sec.\ \ref{app:autocorr}), which is noticeably different at different sequence positions (x-axis), while the stationary surrogate, as expected, shows nearly position-invariant autocorrelation values (subplots (c), (g)). This finding is a clear signature of time-dependent correlation structure, and therefore of non-stationarity. We quantify the similarity of a representation with its context in subplots (d), (h). Specifically, we project representations of token $\x_t$ at a given time $t$ onto the subspace spanned by preceeding representations in the context $\{\x_{<t}\}$. These subplots show that up to $80\%$ variance of $\x_t$ is explained by a context of 500 tokens, further highlighting strong cross-temporal correlations. Significant variance in the representation at time $t$, $\x_t$, can be predicted (expressed) using representations from the past context.

\section{Temporal Priors of Sparse Autoencoders}
\label{sec:saepriors}
We now state the prior assumptions made by existing SAEs regards sequential structure in an input, contrasting these assumptions with the empirical results shown in Sec.~\ref{sec:llmtemporalstructure}.
Specifically, building on the arguments used by \citet{olshausen1997sparse} to formalize the problem of sparse coding, we note that the SAE training objective (Eq.~\ref{eq:sae-sparsecoding}) can be interpreted from a Bayesian lens: minimize the negative log posterior $\argmin_{\{\z_t\}} -\log P(\z_1, \dots, \z_T \mid \x_1, \dots, \x_T)$ of the data, which, by Bayes' rule, can be written as the sum of log likelihood (MSE) and log prior (the regularizer $\regsparse$).
From this lens, SAEs' prior assumptions on sequential structure in LM activations can be described as follows. 

\begin{proposition}[Independence prior over time] Consider the SAE maximum aposteriori (MAP) objective from Eq.~\ref{eq:sae-sparsecoding}. Since the sparsity constraints are additive over time, this objective has an independent and identically distributed (i.i.d.) prior over time:\vspace{-2pt}
\begin{equation}    
    P(\z_1, \dots, \z_T) \propto \prod_{t=1}^T \exp \left( - \lambda \regsparse(\z_i) -\Tilde{\lambda}\Tilde{g}(\z_i) \right) =  \prod_i P(\z_i).
\end{equation}
\label{thm:priors-over-time}
\end{proposition}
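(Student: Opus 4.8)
The plan is to unwind the Bayesian reinterpretation of the SAE objective and observe that the additivity of the regularizer and the per-token constraints is exactly what produces a product-form prior. First I would make precise the correspondence, already sketched in the text, between the optimization in Eq.~\ref{eq:sae-sparsecoding} and a MAP estimate: writing $-\log P(\z_1,\dots,\z_T \mid \x_1,\dots,\x_T) = -\log P(\x_1,\dots,\x_T \mid \z_1,\dots,\z_T) - \log P(\z_1,\dots,\z_T) + \mathrm{const}$, the squared-error term $\frac{1}{T}\sum_i \|\x_i - \D\z_i\|_2^2$ matches (up to scaling) the negative log likelihood under an isotropic Gaussian observation model applied independently at each $t$, and hence the remaining terms $\lambda\sum_i \regsparse(\z_i)$ together with the constraint term $\tilde\lambda \sum_i \tilde g(\z_i)$ (interpreting the hard constraint $\tilde g(\z_1,\dots,\z_T)=0$ as a limiting penalty that itself decomposes over $t$, e.g. TopK or JumpReLU thresholds act coordinate- and token-wise) must be identified with $-\log P(\z_1,\dots,\z_T)$.

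The key step is then purely structural: since the total ``energy'' on the latents is a sum $\sum_{t=1}^T \big(\lambda\regsparse(\z_t) + \tilde\lambda\tilde g(\z_t)\big)$ with each summand depending only on $\z_t$, exponentiating and normalizing gives
\begin{equation}
P(\z_1,\dots,\z_T) = \frac{1}{\gZ}\exp\!\Big(\!-\sum_{t=1}^T \big(\lambda\regsparse(\z_t)+\tilde\lambda\tilde g(\z_t)\big)\Big) = \prod_{t=1}^T \frac{1}{\gZ_t}\exp\!\big(\!-\lambda\regsparse(\z_t)-\tilde\lambda\tilde g(\z_t)\big),
\end{equation}
where the partition function factorizes as $\gZ = \prod_t \gZ_t$ precisely because the integral (or sum) over $(\z_1,\dots,\z_T)$ of a product of per-token factors is the product of the per-token integrals — assuming, as is implicit, that the latent domain is itself a product space with no coupling constraints across $t$. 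Defining $P(\z_t) := \gZ_t^{-1}\exp(-\lambda\regsparse(\z_t)-\tilde\lambda\tilde g(\z_t))$ yields $P(\z_1,\dots,\z_T)=\prod_t P(\z_t)$, and because every factor has the identical functional form, the prior is i.i.d. over $t$, which is the claim.

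The main obstacle — really the only subtle point — is justifying that the constraint term $\tilde g$ genuinely decomposes over time, since nothing in Eq.~\ref{eq:sae-sparsecoding} forces this in general. I would handle it by appealing to the concrete SAE variants enumerated in Sec.~\ref{sec:prelims} (ReLU, TopK, JumpReLU, BatchTopK): in each case the architectural constraint is applied to $f_{\mathtt{SAE}}(\x_t)$ position-by-position (BatchTopK couples across a batch, not across time within a sequence, so it is i.i.d. in the relevant index), so $\tilde g(\z_1,\dots,\z_T) = \sum_t \tilde g_t(\z_t)$ with a common $\tilde g_t \equiv \tilde g_0$. A secondary, more cosmetic issue is the $\frac{1}{T}$ prefactor on the likelihood versus an unnormalized prior: this only rescales $\lambda$ and the effective observation noise and does not affect the factorization, so I would absorb it into the constants. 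Modulo these bookkeeping remarks, the proof is a one-line consequence of ``sum in the exponent $\Rightarrow$ product of exponentials,'' which is exactly why the proposition is stated as an immediate corollary of the Bayesian reading rather than something requiring real work.
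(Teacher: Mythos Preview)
Your proposal is correct and follows essentially the same route as the paper: identify the MSE with a Gaussian log-likelihood, identify the remaining penalty-plus-constraint terms with $-\log P(\z_1,\dots,\z_T)$, and then observe that additivity over $t$ gives a product-form, hence i.i.d., prior. The paper makes the additive decomposition of $\tilde g$ explicit by tabulating $\regsparse$ and $\tilde g$ for ReLU, JumpReLU, TopK, and BatchTopK before invoking the Lagrangian, which is exactly the check you defer to Sec.~\ref{sec:prelims}.

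One small correction on BatchTopK: in the paper's setup the batch \emph{is} the time sequence (the index runs over $t=1,\dots,T$), so BatchTopK does couple across time through the single constraint $\tfrac{1}{T}\sum_t \|\z_t\|_0 = K$. The reason it still factorizes is not that the coupling is over a different axis, but that this constraint is itself a sum over $t$, so lifting it with a \emph{single} Lagrange multiplier $\tilde\lambda$ produces an additive penalty $\tilde\lambda\sum_t(\|\z_t\|_0 - K)/T$ and hence a product prior; your argument goes through once you make this adjustment.
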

\vspace{-3pt}

A more precise version of the claim for specific SAE architectures is provided in Appendix~\ref{appsec:saepriors-derived}.
Intuitively, the claim above says that SAEs assume an independence of latents, and hence the concepts underlying the generative process of language, over time. 
This directly conflicts with the rich contextual structure of LM activations we empirically observed in Fig.~\ref{fig:llm_temporal_structure}b--d, f--h.
Crucially, this also implies that SAEs assume the sparsity of latent codes necessary to explain model activations to be time-invariant, as stated formally in the corollary below. 

\begin{corollary}[Assumptions of time-invariant sparsity] As a consequence of the i.i.d. priors over time from Prop.~\ref{thm:priors-over-time}, standard SAEs assume that sparsity of representations emerges from a fixed distribution independently over time (i.i.d.), i.e., $P(\|\z_1\|_0, \dots, \|\z_T\|_0) = \prod_t P(\|\z_t\|_0).$
\label{corr:time-invariant-sparsity}
\end{corollary}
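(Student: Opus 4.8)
The plan is to derive the corollary directly from the factorization established in Proposition~\ref{thm:priors-over-time}, by pushing the product structure of the prior through the map that sends a code $\z_t$ to its support size $\|\z_t\|_0$. The key observation is that $\|\z_t\|_0$ is a deterministic function of $\z_t$ alone (it does not depend on $\z_s$ for $s\neq t$), so independence of the $\z_t$'s is inherited by any collection of per-coordinate functionals of them.

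Concretely, I would proceed as follows. First, recall from Prop.~\ref{thm:priors-over-time} that $P(\z_1,\dots,\z_T)=\prod_{t=1}^T P(\z_t)$, i.e., the $\z_t$ are mutually independent under the prior. Second, fix the measurable map $\phi:\R^M\to\{0,1,\dots,M\}$ given by $\phi(\z)=\|\z\|_0$, the number of nonzero entries. For each $t$, set $S_t=\phi(\z_t)=\|\z_t\|_0$. Third, invoke the standard fact that functions of independent random variables are independent: since $S_t=\phi(\z_t)$ and the $\z_t$ are independent, the $S_t$ are independent, and hence their joint law factorizes as $P(S_1,\dots,S_T)=\prod_{t=1}^T P(S_t)$, which is exactly the claimed identity $P(\|\z_1\|_0,\dots,\|\z_T\|_0)=\prod_t P(\|\z_t\|_0)$. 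Fourth, to get the "identically distributed" half of the "i.i.d." statement, note that in the standard SAE objective the per-timestep prior $P(\z_t)\propto\exp(-\lambda\regsparse(\z_t)-\Tilde\lambda\Tilde g(\z_t))$ has the same functional form for every $t$ (the penalty $\regsparse$ and constraint $\Tilde g$ are time-homogeneous), so $P(\z_t)$ does not depend on $t$; pushing forward through $\phi$ then gives that $P(S_t)$ is the same distribution for all $t$, completing the "identically distributed" claim.

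I do not anticipate a genuine obstacle here — the corollary is essentially a restatement of Prop.~\ref{thm:priors-over-time} under a measurable map — but the one point that warrants care is being explicit that $\|\cdot\|_0$ acts on each $\z_t$ separately, so that no cross-temporal coupling can sneak back in; in particular one should note that the SAE's sparsity constraint $\Tilde g$ is itself additive/separable over time (as used in proving Prop.~\ref{thm:priors-over-time}), which is what guarantees the pushforward remains a product measure rather than merely a marginal factorization. A secondary subtlety, worth a sentence, is that architectures like BatchTopK technically couple codes within a batch; for those the claim should be read at the level of the idealized per-sequence objective of Eq.~\ref{eq:sae-sparsecoding}, consistent with the scope already flagged after Prop.~\ref{thm:priors-over-time}. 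The contrast with Sec.~\ref{sec:llmtemporalstructure} — where intrinsic dimension, a proxy for $\|\z_t\|_0$, grows systematically with $t$ — then follows as an immediate corollary-of-the-corollary, since a time-varying (and in particular monotonically increasing) sparsity level is incompatible with the $S_t$ being identically distributed.
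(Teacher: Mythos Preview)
Your proposal is correct and follows essentially the same approach as the paper's own proof: the paper simply notes that since the prior over $\z_t$ is i.i.d.\ and sparsity $\|\z_t\|_0$ is a function of $\z_t$, the sparsities are themselves i.i.d., which is exactly the pushforward-of-independence argument you spell out more carefully. Your added remarks about measurability of $\phi$, time-homogeneity of the penalty, and the BatchTopK caveat are sound elaborations but go beyond what the paper includes.
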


SAEs thus assume that sparsity---which, in their underlying generative model of activations corresponds to the number of concepts necessary for explaining the data~\citep{bricken2023monosemanticity, elhage2022superposition}---remains approximately constant over time. 
This again does not align with the increasing dimensionality of representations observed in LM activations (see Fig.~\ref{fig:llm_temporal_structure}a,e). 
Correspondingly, if enough concepts aggregate over context such that a model's activations become `denser' than the assumed sparsity budget, the assumption of time-invariant sparsity implies SAEs can fail to capture the temporal structure inherent in language, as formally stated below.

\begin{proposition}[Restrictive Sparsity Budget Leads to Support Switching in SAEs] Suppose data $\x$ lies on a $C^1$ manifold $\mathcal{M}\subset \mathbb{R}^d$. For $\x \in \mathcal{M}$, let $T_{\x}\mathcal{M}$ denote the tangent space at $\x$, and $m(\x)=\dim T_{\x}\mathcal{M}$ its dimension. Suppose an SAE $S$ with latent code $S(x)$ has a sparsity budget $|S(\x)|=K$. If $K \le m(\x)$, under the assumption of low error (Eq.~\ref{eq:sae-sparsecoding}), in some neighborhood $\mathcal{N}_{\x}$ of $\x$, $\exists\; \x_1, \x_2 \in \mathcal{N}_{\x}$ s.t. $S(\x_1)\neq S(\x_2)$, i.e., support switching occurs in the SAE latents. 
\label{prop:supportswitching}
\end{proposition}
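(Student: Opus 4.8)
The plan is to argue by contradiction: suppose $S$ is (locally) constant on support near $\x$, i.e., there is a neighborhood $\mathcal{N}_{\x}$ on which every point's latent code uses the same set $\Lambda$ of $K$ active dictionary atoms. Under the low-error assumption of Eq.~\ref{eq:sae-sparsecoding}, the decoder reconstruction $\D S(\cdot)$ agrees with the identity on $\mathcal{M}$ up to negligible error, so locally $\mathcal{M} \cap \mathcal{N}_{\x}$ lies (approximately) in the image of the map $\vz_\Lambda \mapsto \D_\Lambda \vz_\Lambda$, where $\D_\Lambda \in \mathbb{R}^{d\times K}$ is the submatrix of columns indexed by $\Lambda$. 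That image is contained in the linear subspace $\mathrm{span}(\D_\Lambda)$, which has dimension at most $K$.

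First I would make the ``low error'' hypothesis precise as $\|\x - \D S(\x)\| = 0$ (or $\le \eps$ and then take a limit), so that on $\mathcal{N}_{\x}$ the manifold patch is exactly a subset of the $\le K$-dimensional subspace $V_\Lambda := \mathrm{span}(\D_\Lambda)$. Next, since $\mathcal{M}$ is a $C^1$ manifold and $m(\x) = \dim T_{\x}\mathcal{M}$, the tangent space $T_{\x}\mathcal{M}$ is the limit of secant directions $(\x' - \x)/\|\x'-\x\|$ for $\x' \in \mathcal{M}$ near $\x$; all such secants lie in $V_\Lambda$ (a closed set), so $T_{\x}\mathcal{M} \subseteq V_\Lambda$. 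Therefore $m(\x) = \dim T_{\x}\mathcal{M} \le \dim V_\Lambda \le K$. This contradicts the hypothesis $K \le m(\x)$ unless $K = m(\x)$ — so I would state the result for the strict regime $K < m(\x)$, or handle the boundary case $K = m(\x)$ by noting it forces $T_{\x}\mathcal{M} = V_\Lambda$ exactly, a measure-zero/non-generic coincidence (the cleanest fix is to phrase the conclusion as: support switching occurs whenever $K \le m(\x)$ and the atoms in $\Lambda$ do not happen to exactly span $T_{\x}\mathcal{M}$, which one can fold into the low-error assumption, or simply assume $K < m(\x)$). Having reached a contradiction, I conclude that no such $\Lambda$ works uniformly on any neighborhood, i.e., there exist $\x_1, \x_2 \in \mathcal{N}_{\x}$ with $\mathrm{supp}(S(\x_1)) \neq \mathrm{supp}(S(\x_2))$, hence $S(\x_1) \neq S(\x_2)$.

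The main obstacle I anticipate is the precise handling of the error term: with only $\|\x - \D S(\x)\| \le \eps$ rather than exact reconstruction, the manifold patch lies in an $\eps$-neighborhood of $V_\Lambda$ rather than in $V_\Lambda$ itself, and a thin $(m(\x))$-dimensional patch can fit inside an $\eps$-tube around a $K$-dimensional subspace when $K < m(\x)$ only if its extent in the orthogonal directions is $O(\eps)$. To make the argument robust I would either (i) take the idealized $\eps = 0$ version as the formal statement (matching how Eq.~\ref{eq:sae-sparsecoding} is invoked as ``the assumption of low error''), or (ii) quantify it: if the reconstruction error is uniformly $\le \eps$ on a ball of radius $r$ and $\mathcal{M}$ has reach bounded below, then the projection of the patch onto $V_\Lambda^\perp$ has diameter $\le 2\eps$, so choosing $r \gg \eps$ and using that a genuine $m(\x)$-dimensional $C^1$ patch has bounded distortion forces some pair of points to differ in their active set. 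I would present version (i) in the main text for clarity and defer the quantitative refinement (ii) to the appendix.
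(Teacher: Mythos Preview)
Your argument is correct and follows essentially the same route as the paper: argue by contradiction that a constant support set $\Lambda$ on a neighborhood forces the reconstructions into the $\le K$-dimensional subspace $\mathrm{span}(\D_\Lambda)$, which cannot accommodate an $m(\x)$-dimensional manifold patch under the low-error assumption. Your treatment is in fact more careful than the paper's own proof---you make explicit the tangent-space containment via secants, flag the $K=m(\x)$ boundary case (the appendix version of the proposition indeed assumes the strict inequality $K<m(\x)$), and discuss how to handle $\eps>0$ error, none of which the paper spells out.
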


\setlength\intextsep{1pt}
\begin{wrapfigure}{}{0.46\textwidth}
    \vspace{-2pt}
    \centering
    \includegraphics[width=\linewidth]{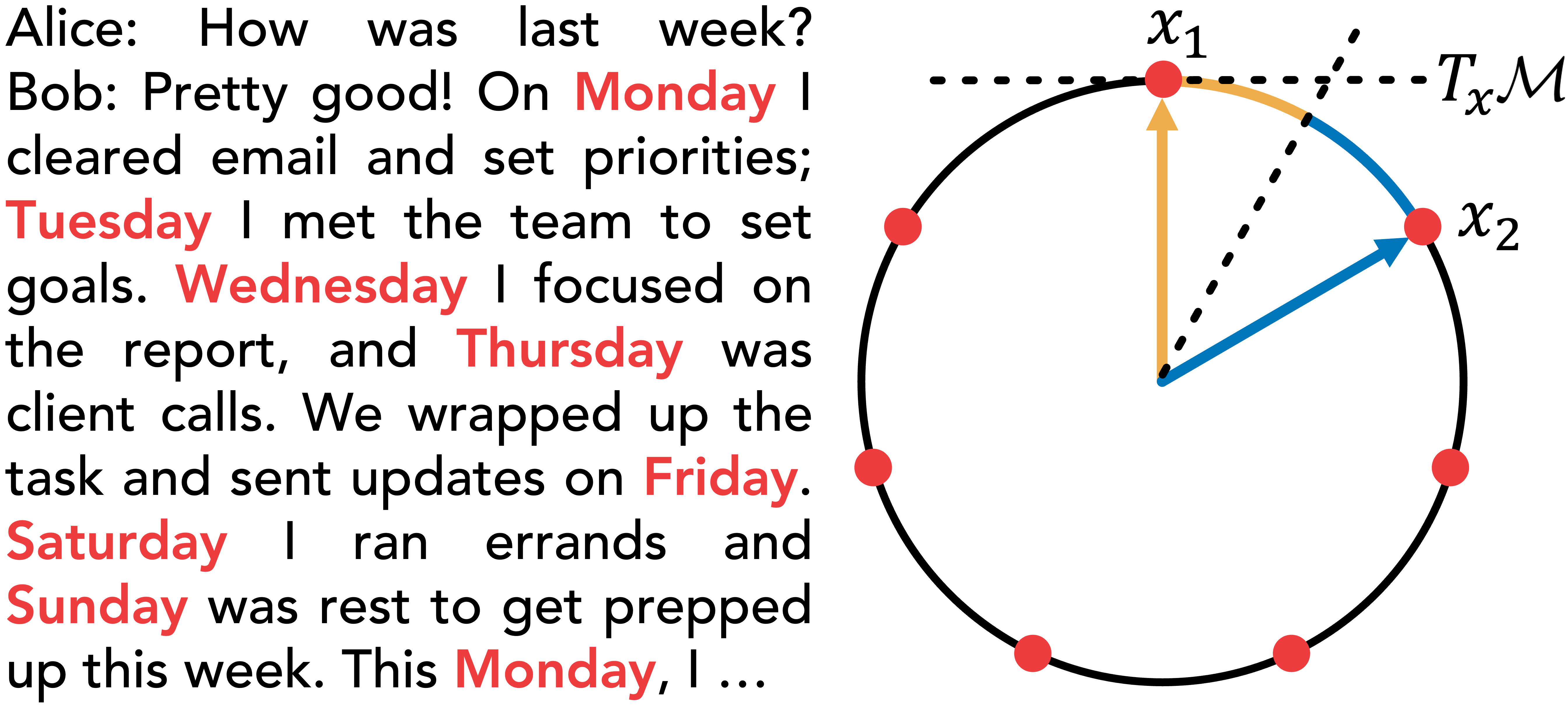}
    \vspace{-18pt}
    \caption{\textbf{Sparsity splits local structure.} (left) Temporally correlated inputs can yield geometrically structured activations. (right) If the sparsity budget is lower than the intrinsic dimensionality of the activation geometry, an SAE is incentivized to partition the manifold into local regions such that even nearby points map to disjoint codes and local structure is lost.
    }
    \label{fig:theory}
\end{wrapfigure}

Intuitively, the claim above says that even if an SAE achieves low reconstruction error, its specific geometric assumptions---use of independently firing directions and restricted intrinsic dimensionality of data---will lead to a ``splitting'' of the input data into several regions such that different sets of latents will activate for even locally nearby points (see Fig.~\ref{fig:theory}).
Consequently, if the sparsity budget is too little or if model activations are more dense than the assumed budget, then local geometry present in the activations will be lost in the SAE latent codes---as is arguably already observed empirically with phenomena like feature splitting~\citep{chanin2025absorptionstudyingfeaturesplitting, bussmann2025learning, bricken2023monosemanticity}.
For the context of our paper, as per our results in Fig.~\ref{fig:llm_temporal_structure}~(a,e), temporal correlations can increase activations' intrinsic dimensionality and, when viewed from the lens of the claim above, we arrive at the conclusion that SAEs can fail to capture temporal structure present in activations.
We empirically validate this claim in Sec.~\ref{sec:temporalresults}. 
More broadly, we note there is growing evidence that neural network representations possess rich geometric structures across a broad set of modalities~\citep{gurnee2025when, fel2025into, park2025iclrincontextlearningrepresentations, engels2024not, costa2025flat, kantamneni2025language, pearce2025tree}. 
As per Prop.~\ref{prop:supportswitching}, the use of SAEs for interpreting behaviors that invoke such structures can be misleading and should be approached carefully (cf.~\citep{sengupta2018manifold}).

\section{Temporal Feature Analysis: Towards Predictive Approaches for Interpreting Neural Representations}
\label{sec:temporalsaearch}

\begin{figure}
    \centering
    \includegraphics[width=\linewidth]{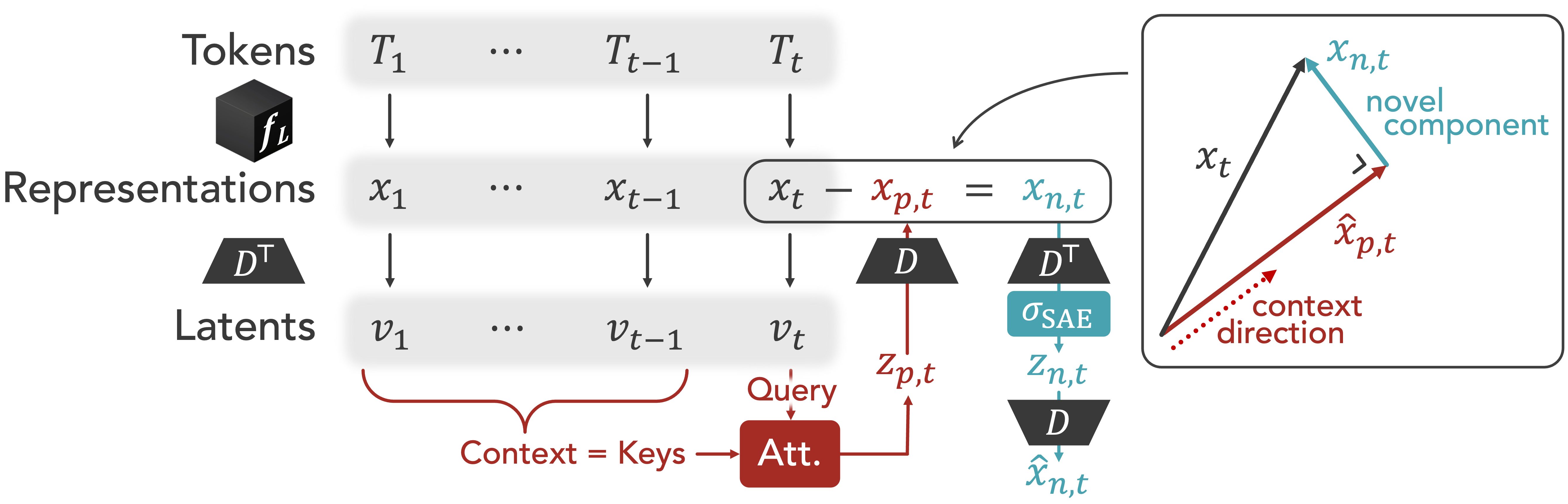}
    \vspace{-15pt}
    \caption{
\textbf{Schematic of Temporal Feature Analysis.} Temporal Feature Analyzers decompose activations $\x_t$ into two components: a predictable component, obtained by projecting $\x_t$ onto a context direction (derived from the past $\x_{<t}$ using attention), and a sparse, novel component orthogonal to the predictable component that captures new information seen at time $t$. 
}
\label{fig:architecture}
\end{figure}

As stated in Sec.~\ref{sec:prelims}, sparse coding, a framework designed in computational neuroscience to understand neural representations in biological brains~\citep{olshausen1996emergence, olshausen1997sparse}, inspired SAEs as a framework for interpreting artificial neural networks~\citep{bricken2023monosemanticity, cunningham2023sparse}.
In fact, the parallels between these communities can be made deeper: motivated by observations of intricate geometry of neural representations derived out of multi-dimensional population analyses~\citep{khona2022attractor, nogueira2023geometry, sohn2019bayesian}, there were calls in computational neuroscience to discard the limiting reduction assumed in sparse coding that computations occur via a set of independently firing, monosemantic features~\citep{eichenbaum2018barlow, saxena2019towards, barack2021two, seung1996brain, chung2021neural}---similar to our arguments in Sec.~\ref{sec:llmtemporalstructure},~\ref{sec:saepriors} (and results that follow in Sec.~\ref{sec:temporalresults}).
Correspondingly, a need for more structured protocols was suggested~\citep{eichenbaum2018barlow, barack2021two, chen2019sparse, sengupta2018manifold}, leading to methods that were motivated by the generative process of the behavior one is trying to explain~\citep{schneider2023learnable, chen2018sparse, berkes2005slow, wiskott2002slow, linderman2017bayesian, yu2008gaussian}.
We argue a similar paradigm shift is needed in language model interpretability: given that we train models to learn the distribution of highly structured data, we ought to embrace the fact that neural network activations can exhibit intricate geometrical organization.
These geometrical objects may in fact be \textit{the} units of computation that are necessary for describing model computation~\citep{perich2025neural, eichenbaum2018barlow, modell2025origins}, and hence we must design interpretability approaches to identify them.
In what follows, as an attempt to qualify our arguments, we propose \textit{one} such approach, titled \textit{Temporal Feature Analysis}, that focuses on the temporal structure of LM activations.
We also point the reader to relevant related work from identifiability literature~\citep{joshi2025identifiable, klindt2020towards}.

\paragraph{Temporal Feature Analysis.} In computational neuroscience, when analyzing data from dynamical domains (e.g., audio, language, or video), a commonly made assumption is that there is contextual information present in the recent history that informs the next state---this part of the signal is deemed \textit{predictable}~\citep{chen2019sparse, millidge2024predictive}, \textit{slow-changing}~\citep{berkes2005slow, klindt2020towards}, \textit{invariant}~\citep{olshausen2007learning, hyvarinen2003bubbles}, or \textit{dense}~\citep{tasissa2022improvingdiscriminativereconstructionsimultaneous}.
Meanwhile, the remaining signal corresponds to new bits of information added by the observed state at the next timestep---this part can be deemed \textit{novel}, \textit{fast-changing}, \textit{variant}, or \textit{sparse} with respect to the context.
We argue LM activations are amenable to a similar generative model.
Specifically, our observations in Sec.~\ref{sec:llmtemporalstructure} show that activations $\x_t$ at time $t$ are correlated with the context and can be decomposed into two such parts as well.
Motivated by this, we propose the following model of LM activations:
\begin{align}
\label{eq:tfa_model}
    \x_t = \x_{p, t} + \x_{n, t}, \quad \text{s.t. } \quad \x_{p,t}= \D\z_{p, t}\,\, \text{and}\,\, \x_{n,t}= \D\z_{n, t},
\end{align}
where $\x_{p,t}$ denotes a \textit{predictable} component of the signal that captures the correlations of $\x_t$ with past data $\left\{\x_{<t}\right\}$, while $\x_{n,t}$ denotes a \textit{novel} component that represents new, i.e., uncorrelated with the past, information added by the current token $\x_t$. 
To obtain $\z_{p, t}$, we project $\x_t$ onto $\{\x_{<t}\}$ to explain the predictable variance in $\x_t$ as a convex combination of past data. 
Specifically, we apply an encoder (ReLU plus a linear map) to take the inputs to a latent space $\v$ and perform an attention operation $f$ in that space, yielding $\z_{p,t} = f( \ \{ \phi(D^{T}\x_1), \dots, \phi(D^{T}\x_{t-1})\}, \phi(D^{T}\x_{t}))$; here $\x_{t}$ defines the query and the remaining context serves as keys.
Meanwhile, $\z_{n,t}$ is defined using a standard SAE: $\z_{n,t} = \Tilde{f}_{\mathtt{SAE}}(\x_t, \z_{p,t}) = \sigma(\D^T(\x_t - \D \z_{p, t}))$; we use a standard SAE nonlinearity (either TopK or BatchTopK) to instantiate $\sigma$, applying it to $\x_t - \D\z_{p,t}$ to derive $\z_{n,t}$. 
See Fig.~\ref{fig:architecture} for an overall schematic of the encoding process.
The learning objective in Temporal Feature Analysis follows.
\vspace{-5pt}
\begin{equation}
\label{eq:tfa_loss}
\begin{split}
    \argmin_{\D, \z} \frac{1}{T} &\sum_{i=1}^{T} \|\x_{i} - \D(\z_{p,i} + \z_{n,i})\|_2^2 + \lambda \regsparse(\z_{n,i}),\\
    \text{s.t. } \,\,\, \z_{p,k} &= f_{\mathtt{SAE}}(\{\x_{<  k} \} \x_k)\,\,\, \text{and}\,\, \z_{n,k} = \Tilde{f}_{\mathtt{SAE}}(\x_k, \z_{p,k})\,\, \forall k. 
\end{split}
\end{equation}

Relating to Sec.~\ref{sec:saepriors}, we note the prior assumption in Temporal Feature Analysis is that the residual $\z_{n,t} = \z_t - \z_{p,t}$, which captures the novel information in $\x_t$ remaining after removing the projections onto the past context, is \textit{i.i.d.}\ over time. 
This prior allows correlations between the total codes, and thereby between concepts, across time, instead of assuming that all concepts are time-independent. 
However, we do note this prior is still an assumption: one can reasonably argue the residuals need not be \textit{i.i.d.}. 
We nevertheless state this point to be explicit about assumptions involved in our work.

\setlength\intextsep{1pt}
\begin{wrapfigure}{}{0.5\textwidth}
  \centering
  \begin{minipage}[t]{\linewidth}
    \centering
    \footnotesize 
    \setlength{\tabcolsep}{3pt}
    \renewcommand{\arraystretch}{1.05}
    \captionof{table}{\label{tab:comparing_saes_nmse}Temporal Feature Analysis and SAEs achieves similar NMSE across domains (Simple Stories, Webtext, Code).
    \vspace{-5pt}
    }
\begin{tabular}{@{}l|ccc|c|c@{}}
  \toprule
  & {ReLU} & {TopK} & {BTopK} & {Pred. Only} & {Temporal} \\
  \midrule
  Story & 0.20 & 0.155 & 0.152 & 0.34 & 0.139 \\
  Web & 0.19 & 0.144 & 0.139 & 0.36 & 0.139 \\
  Code & 0.20 & 0.154 & 0.149 & 0.38 & 0.152 \\
  \bottomrule
\end{tabular}  
  \end{minipage}
\vfill
\vspace{5pt}
  \begin{minipage}[t]{\linewidth}
    \centering
    \footnotesize 
    \setlength{\tabcolsep}{3pt}
    \renewcommand{\arraystretch}{1.05}
    \captionof{table}{\label{tab:comparing_saes_var}Temporal Feature Analysis explains similar amount of signal variance as SAEs.
    \vspace{-5pt}
    }
\begin{tabular}{@{}l|ccc|c|c@{}}
  \toprule
  & {ReLU} & {TopK} & {BTopK} & {Pred. Only} & {Temporal} \\
  \midrule
  Story & 0.60 & 0.71 & 0.72 & 0.29 & 0.73 \\
  Web & 0.69 & 0.78 & 0.79 & 0.40 & 0.79 \\
  Code & 0.65 & 0.75 & 0.75 & 0.33 & 0.75 \\
  \bottomrule
\end{tabular}  
  \end{minipage}
  \vfill
  \vspace{5pt}
  \begin{minipage}[t]{\linewidth}
    \centering
    \footnotesize 
    \setlength{\tabcolsep}{2pt}
    \renewcommand{\arraystretch}{1.05}
    \captionof{table}{\label{tab:comparing_pred_novel}Predictive and novel codes explain different parts of the input. See main text for details.
    \vspace{-5pt}
    }
\begin{tabular}{@{}l|c|cc|cc|cc@{}}
  \toprule
  & $\langle \x_p, \x_n \rangle$ & \multicolumn{2}{c|}{{\% Norm}} & \multicolumn{2}{c|}{{NMSE}} & \multicolumn{2}{c}{{Var. Expl.}}\\
  \cmidrule(lr){3-4} \cmidrule(lr){5-6} \cmidrule(lr){7-8}
  & & {Pred.} & {Novel} & {Pred.} & {Novel} & {Pred.} & {Novel} \\
  \midrule
  Story & -0.02 & 76.2 & 23.5 & 0.53 & 4.03 & 0.11 & 0.64 \\
  Web & -0.02 & 80.5 & 19.5 & 0.49 & 4.28 & 0.17 & 0.66 \\
  Code & -0.02 & 74.2 & 26.0 & 0.57 & 3.84 & 0.14 & 0.65 \\
  \bottomrule
\end{tabular}
\end{minipage}
\end{wrapfigure}

\paragraph{Sanity Checking Temporal Feature Analysis.}
Before analyzing how different approaches represent the temporal structure of language, we demonstrate that Temporal Feature Analysis performs on par with SAEs on standard metrics such as reconstruction error.
Specifically, we train a Temporal Feature Analyzer and standard SAEs (ReLU, TopK, BatchTopK) on $1$B token activations extracted from Gemma-2-2B~\citep{team2024gemma} from the Pile-Uncopyrighted dataset~\citep{monology2021pile-uncopyrighted}.
We also analyze a baseline of the prediction only module from Temporal Feature Analysis, reported as `Pred. only', which can be expected to underperform since predicting the next-token representation is likely to be more difficult than reconstructing it.
Results are provided in Tab.~\ref{tab:comparing_saes_nmse},~\ref{tab:comparing_saes_var} and show competitive performance between all protocols, except Pred. only.
One can also assess which part of a fully trained Temporal Feature Analyzer is more salient in defining its performance, i.e., does the estimated predictive part $\hat{\x}_{p,t}$ contribute more to the reconstruction $\hat{\x}$ or does the estimated novel part $\hat{\x}_{n,t}$.
Results are reported in Tab.~\ref{tab:comparing_pred_novel}.
We see that the error vectors, i.e., $\x - \hat{\x}_p$ and $\x-\hat{\x}_n$, are approximately orthogonal, suggesting the modules computing predictive and novel codes capture separate bits of information from the input signal.
Furthermore, we find that a bulk of the reconstructed signal $\hat{\x}_t$ (in the sense of norm) is captured by the predictive code---in fact, the percentage contribution of the predictive code is $\sim$80\%, in line with numbers observed in Fig.~\ref{fig:llm_temporal_structure}d.
However, analyzing the reconstruction performance, we see the predictive component primarily contributes to achieving a good NMSE, while the novel component is more responsible for explaining the input signal variance.
These results align with the generative model assumed in Temporal Feature Analysis (Eq.~\ref{eq:tfa_model}).
Specifically, NMSE captures the average reconstruction, and hence a slower moving, contextual signal can expect to dominate its calculation; in fact, we see the attention layer used for defining the prediction module produces block structured attention maps related to sub-event chunks (see Fig.~\ref{fig:attn_pattern_pred}), suggesting more granular temporal dynamics are being captured by predictive part (we elaborate on this last point in the next section). 
Meanwhile, variance assesses changes per dimension and timestamp in the signal, which better matches the inductive bias imposed on the novel part.

\section{Capturing dynamic structure with Temporal Feature Analysis}
\label{sec:temporalresults}

\begin{figure}
\vspace{-15pt}
    \centering
    \includegraphics[width=\linewidth]{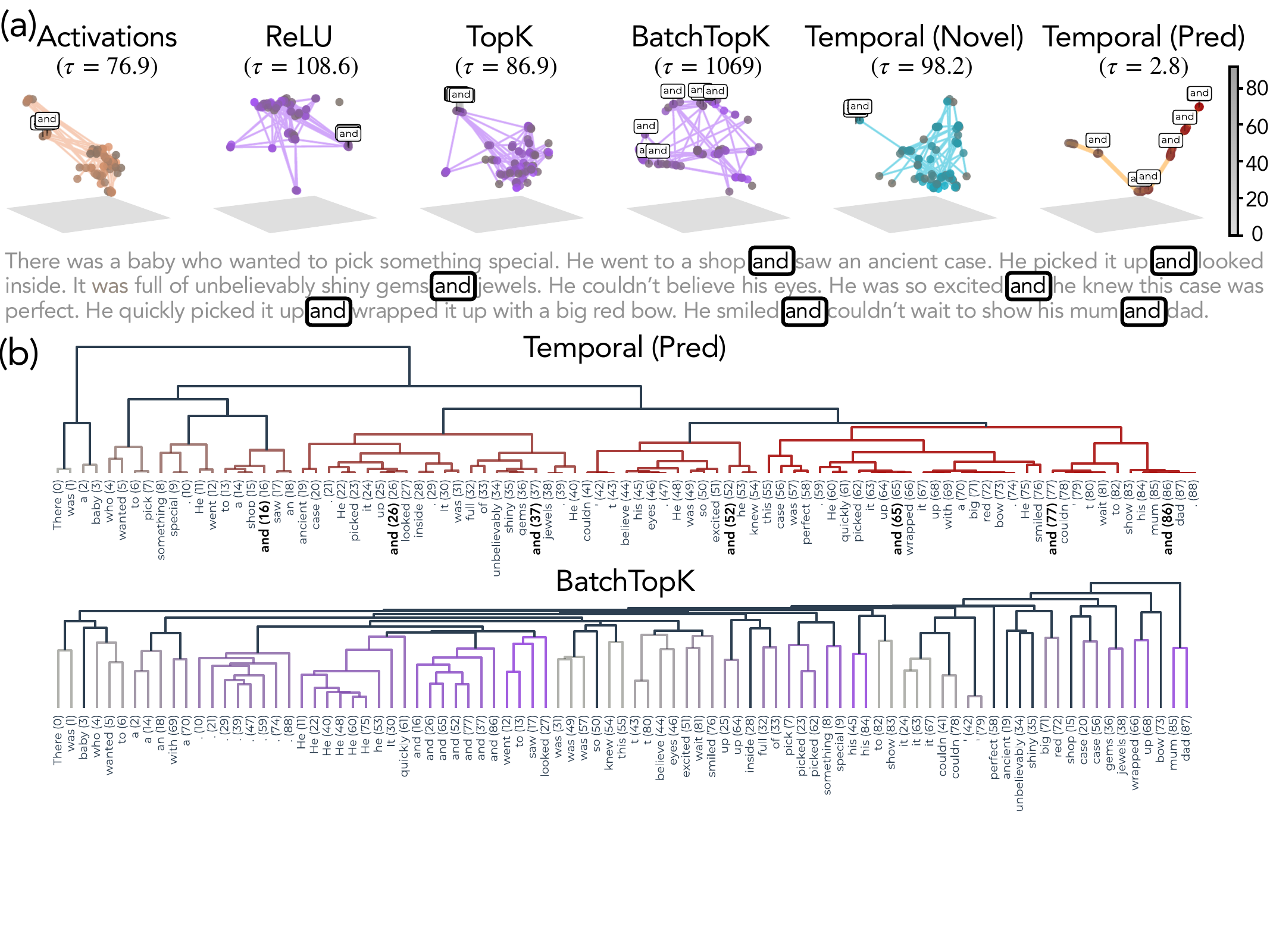}
    \vspace{-10pt}
    \caption{
\textbf{Temporal Feature Analyzers unroll stories, decomposing into events.} We consider model activations from a story and compute pairwise similarity of codes extracted from different interpretability protocols. 
(a) We see predictive codes from Temporal Feature Analysis organizes in hierarchical block structures that seem to align with (sub)event boundaries in the analyzed story, while the novel code primarily emphasizes sudden changes in the narrative; meanwhile, standard SAEs show a mixture of the two structures, with a stronger similarity to the structure exhibited by the novel codes.
(b) We confirm the alignment of predictive codes with event boundaries by running an off-the-shelf hierarchical clustering algorithm, finding the token clusters indeed correspond to (sub)events occurring in the story as the narrative proceeds. Running this process on SAEs, we find this process yields temporally incoherent clusters that are primarily defined by lexical information.
\vspace{-10pt}
}
\label{fig:ind_story_geometry}
\end{figure}

We now assess the ability of different SAEs towards capturing temporal structure in language model representations, offering empirical evidence for our arguments in Sec.~\ref{sec:saepriors}.
We contextualize these results with respect to our proposed protocol of Temporal Feature Analysis, hence assessing what new information about model activations a temporally-informed interpretability protocol buys us.

\paragraph{Experimental Setup.} We analyze standard SAEs (ReLU, TopK, and BatchTopK) and Temporal Feature Analyzers trained on 1B model activations from Layer 15 of a Gemma-2-2B model extracted from the Pile Uncopyrighted dataset~\citep{monology2021pile-uncopyrighted}. 
For evaluation domains, we again seek inspiration from cognitive and computational neuroscience literature on assessing humans' abilities to process temporal structure in language.
These settings capture a spectrum of interaction between a system's prior knowledge versus temporally offered information about a concept, i.e., how the concept gets used within the given context.
Specifically, we first use \textbf{stories} as a testbed for evaluation.
Motivated by prior work on understanding how narrative structures are parsed by humans~\citep{rumelhart1975notes, thorndyke1977cognitive, baldassano2017discovering, nastase2021narratives}, we first analyze whether local event structures from stories are reflected in latent codes derived out of SAEs and Temporal Feature Analysis.
This help us evaluate how different protocols represent local vs.\ global (over time) semantic information. 
To investigate a more syntactic variant of this experiment, we analyze latent codes extracted from \textbf{garden path sentences}, i.e., grammatically valid but ambiguous phrases in which the constituents, when parsed according to their typical syntactic role, lead to an incorrect overall sentence parse (e.g., ``The old man the road'').
Thus, this setting captures a domain wherein the context requires interpreting constituent words in a manner that goes against the most likely interpretation from a model's pretraining prior, i.e., the temporal information dominates the interpretation of a constituent.
While humans can resolve these sentences by backtracking~\citep{hale2001probabilistic, hanna2024incremental, levy2008expectation}, LMs operate in an autoregressive manner and hence must capture the correct sentence parse in the forward direction. 
Given that LMs possess the ability to do so~\citep{li2024incremental, hanna2024incremental}, we compare whether latent codes extracted using SAEs and Temporal Feature Analysis relate sentence constituents according to the valid parse.
Finally, motivated by literature on cognitive maps~\citep{behrens2018cognitive}, wherein a subject has to infer a latent in-context structure that relates two observations (tokens)---a capability LMs have been show to possess~\citep{park2025iclrincontextlearningrepresentations}---we assess whether different interpretability protocols' latent codes capture these \textbf{in-context representations}. 
In this last setting, a model cannot rely on its prior knowledge at all, since the behavior is entirely guided by the in-context, temporal structure.

\subsection{The Geometry of Stories: A Narrative-Driven Domain} 
\label{sec:stories}

\paragraph{UMAP of Latent Codes Suggests Models Temporally Straighten Activations.} We consider the TinyStories datasets~\citep{eldan2023tinystories} for its relatively straightforward narrative structures, and qualitatively analyze the geometry of latent codes extracted from model activations when processing these stories.
Visualizing the latent codes in a low-dimensional basis via a 3D UMAP projection~\citep{mcinnes2018umap}, we see SAEs yield a highly irregular and unstructured geometry (see Fig.~\ref{fig:ind_story_geometry}a).
Calculating Tortuosity~\citep{bullitt2003measuring}, a measure of how aligned local arcs are with respect to the global structure of a curve, we see very high values emerge for SAEs' latent codes geometry, suggesting sudden changes in the local similarity as a story unravels. To further understand the results above, we highlight a specific token (`\texttt{and}') from the story, the UMAP analysis shows that standard SAEs generally just cluster tokens by lexical identity.
This is further corroborated by running a hierarchical clustering algorithm on the latent codes~\citep{dendrogram}, finding temporally incoherent, but lexically related clusters (see Fig.~\ref{fig:ind_story_geometry}b).

\setlength\intextsep{2pt}
\begin{wrapfigure}{}{0.47\textwidth}
  \centering
  \begin{minipage}[t]{\linewidth}
    \centering
    \footnotesize 
    \setlength{\tabcolsep}{3pt}
    \renewcommand{\arraystretch}{1.1}
    \captionof{table}{\textbf{Kernel similarity (CKA) between latent codes and model activations.} Predictive codes from Temporal Feature Analyzers show strong similarity to slow-changing part of activations, while novel codes and SAEs primarily capture the fast-changing part.
    \vspace{-10pt}
    }
    \label{tab:side-table}
    \begin{tabular}{lccccc}
      \toprule
      & \multicolumn{3}{c}{SAEs} & \multicolumn{2}{c}{Temporal} \\
      \cmidrule(lr){2-4}\cmidrule(lr){5-6}
      & ReLU & TopK & BatchTopK & Novel & Pred \\
      \midrule
      Slow & 0.37 & 0.35 & 0.35 & 0.19 & 0.75 \\
      Fast & 0.54 & 0.54 & 0.54 & 0.75 & 0.18 \\
      \bottomrule
    \end{tabular}
  \end{minipage}
  \vfill
  \vspace{1pt}
  \begin{minipage}[t]{\linewidth}
    \centering
    \includegraphics[width=0.98\linewidth]{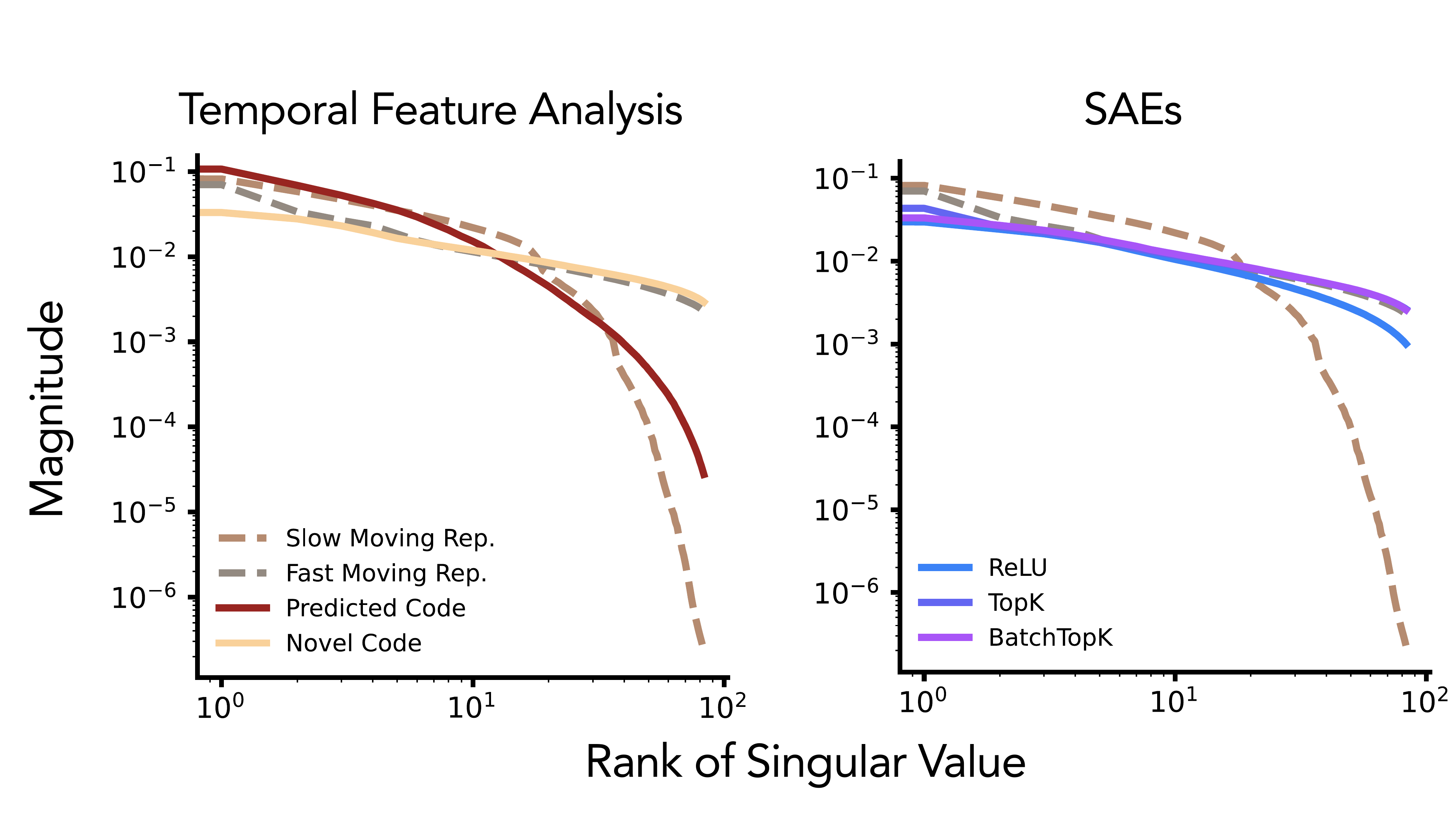}
    \vspace{-8pt}
    \caption{\textbf{Kernel spectrum for latent codes and model representations}. Kernels defined using novel code from Temporal Feature Analysis and standard SAEs both align well with the fast-changing part of model representations; meanwhile, only the predictive code shows strong similarity to the slow changing part.
    }
    \label{fig:fourier_spectra}
  \end{minipage}
\end{wrapfigure}

Performing the experiments above on latent codes extracted from Temporal Feature Analyzers, we see that while the novel component exhibits behavior that is similar to standard SAEs', the \textit{predictive component shows a smooth, regular curve} that clusters together several tokens from the story around a similar point in space.
Again running hierarchical clustering, the extracted dendrograms suggest these clusters qualitatively correspond to events occurring in the story.
We emphasize this finding is very similar to the phenomenon of \textit{temporal straightening} observed in neural recordings of human subjects when processing stories~\citep{xutemporal} and sequential visual data~\citep{henaff2019perceptual}!
Straightening simplifies future state prediction and recent findings show that such straightening is exhibited by language models~\citep{hosseini2023large}; however, Temporal Feature Analysis recovers this phenomenon in an unsupervised manner.

\textbf{Quantifying Similarity to Slow vs.\ Fast Moving Signals.} To further quantify the straightening claim, we compute the temporal Fourier transform of the model activations and divide the frequency spectrum into two halves at a critical frequency $f_c$ such that the energy (i.e., sum of squared value of phase information) in the frequencies below $f_c$ equals that of the remaining ones.
We call the first split ``slow part'' of a sequence, and latter the ``fast part''.
We then compute the correlation matrix defined by the slow and fast parts, compute their spectrum, and analyze how similar these spectrums are to the ones defined using different SAEs' codes.
Results are shown in Fig.~\ref{fig:fourier_spectra}.
We clearly see the spectrum extracted from the predictive component of Temporal Feature Analysis approximates that of the slow part of the representation, while the novel component's spectrum is similar to that of the fast moving part. 
Meanwhile, spectra of SAEs is primarily similar to the fast part, suggesting lack of ability to capture longer range dependencies necessary for interpreting narrative-driven texts like most language domains.
To quantify this further, we also measure kernel similarities (CKA) between the kernels respectively defined by the slow moving and fast moving signal to the different SAE codes (see Tab.~\ref{tab:side-table}).
Results clearly show Standard SAEs and novel codes are primarily similar to fast moving part of the representation, while predictive codes are similar to the slow moving part.

\begin{figure}
\vspace{-12pt}
    \centering
    \includegraphics[width=\linewidth]{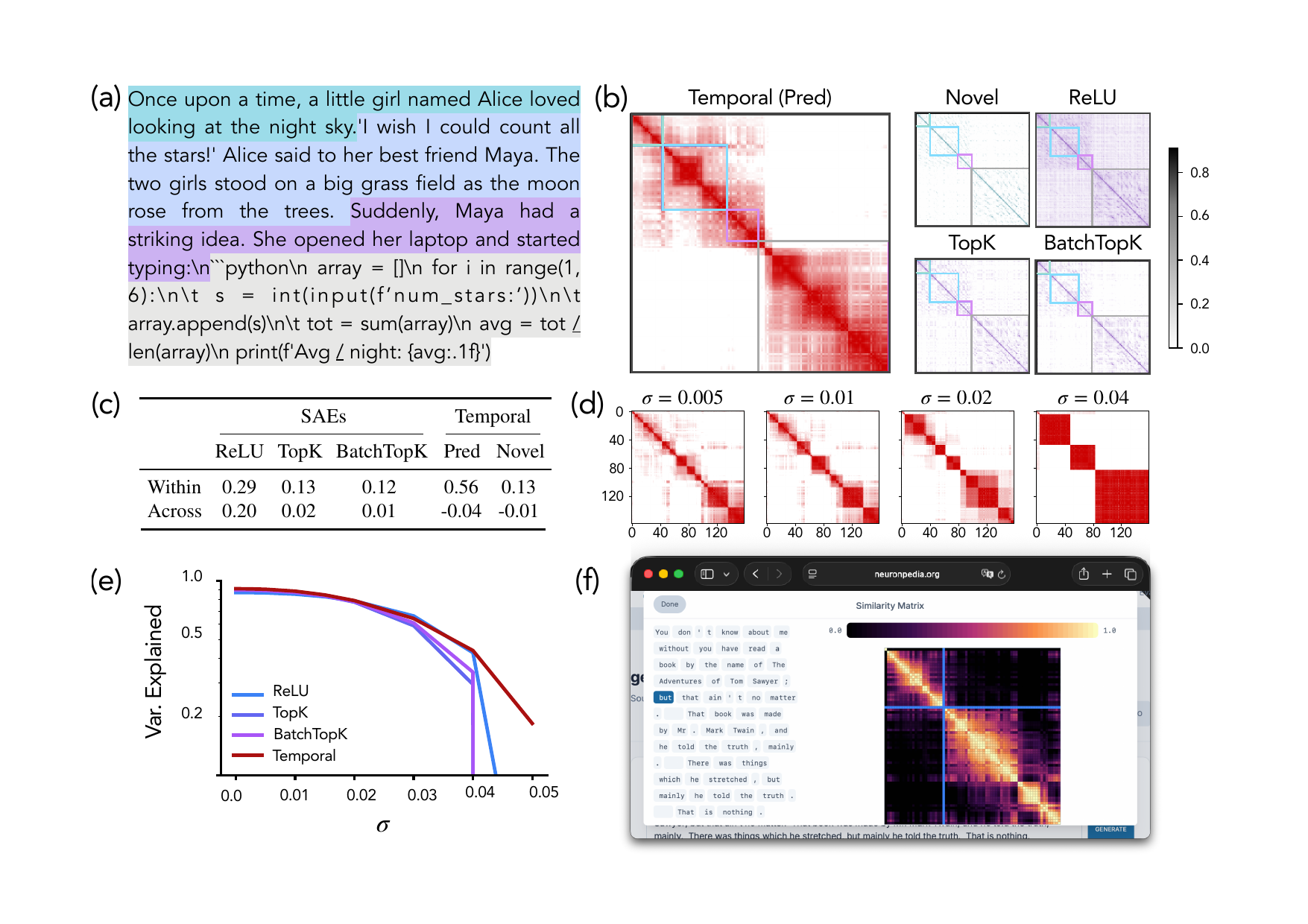}
    \vspace{-15pt}
    \caption{
\textbf{Predictive codes decompose stories into events.} (a) We consider model representations from a synthetic story with well-defined event boundaries.
(b) Computing pairwise cosine similarity of latent codes extracted using different protocols, we see the predictive code of Temporal Feature Analysis organizes in hierarchical block structures that seem to align with (sub)event boundaries in the analyzed story.
(c) We confirm the alignment of predictive codes with event boundaries by computing average pairwise similarity of token latent codes for tokens that span the same event (`within') versus not (`across'). Results clearly show high within-event similarity scores for the predictive code.
(d) and (e) The results above are further corroborated by running a noising process on the latent codes: we add Gaussian noise of scale $\sigma$ to the input before computing latent codes, defining the similarity maps and computing explained variance of un-noised data. This process elicits coarser grained clusters from the similarity maps for the predictive code, suggesting the multi-scale temporal structure of stories is reflected in predictive codes (see App.~\ref{appsection:gemma_results} for other latent codes).
(f) Our interactive interface generates the similarity matrix for predictive codes live. Detect event boundaries in your own stories: \url{https://www.neuronpedia.org/gemma-2-2b/12-temporal-res}.
\vspace{-10pt}}
\label{fig:stories}
\end{figure}

\paragraph{Predictive Component Captures Local Event Boundaries.} The results above demonstrate Temporal Feature Analyzers' predictive component qualitatively align with event boundaries in a story.
To investigate this result more quantitatively, we use GPT-5 to create a synthetic dataset of 50 stories with well-defined event boundaries (see Fig.~\ref{fig:stories}a for an example). 
We extract latent codes for these stories' tokens, center them by subtracting the mean to remove any globally shared information, and compute the cosine similarity of token to token latent codes.
If the latent codes reflect local event structure of a story, the cosine similarity (on average) will be high between token pairs that come from the same event and low (if not zero) between pairs sampled across events; see Fig.~\ref{fig:stories}b for an example similarity map corresponding to the story shown in Fig.~\ref{fig:stories}a.
Results are shown in Fig.~\ref{fig:stories}~(c) and corroborate our qualitative findings: we see predictive components from Temporal Feature Analyzers show substantially higher similarity of codes if tokens are sampled from within an event, while the novel component and SAEs generally show low similarity between two tokens. 
These results are further supported by the robustness of Temporal Feature Analysis to noise.
Specifically, we see that when we add noise to the input data, which, on average, will lead to turning off of latents with small magnitudes (due to the encoding nonlinearity), the temporal structure of the data, if it is present, will be amplified. 
We see precisely this effect in Fig.~\ref{fig:stories}d: the predictive components' cosine similarity map under Gaussian noised input maps elicits coarser block structures with increasing noise scale; this is reminiscent of percolation or heat diffusion perspectives on graph clustering, wherein noise diffuses only within a connected component and hence community structure is elicited~\citep{von2007tutorial}.
Correspondingly, we see Temporal Feature Analyzers respond most gracefully to noise: variance explained reduces slower than SAEs', which in fact drops to $\sim$0 at some critical noise scale.

\begin{figure}
    \centering
    \includegraphics[width=\linewidth]{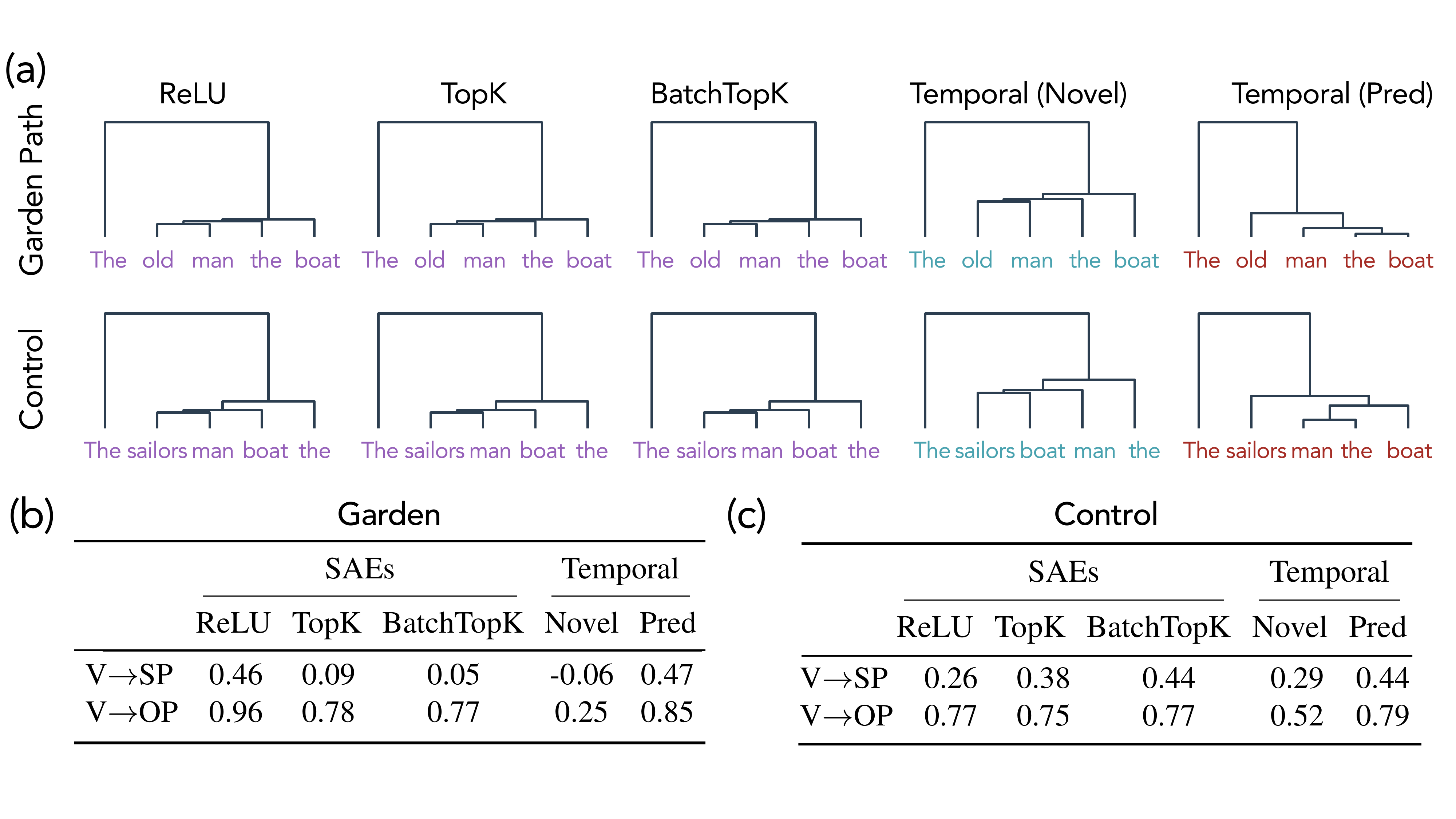}
    \caption{\textbf{Hierarchically clustering SAE codes for garden path sentences.} (a) Pairwise similarity maps of the predictive code from Temporal Feature Analysis link long-distance heads and dependents that define the ultimately correct parse in garden path sentences, while standard SAEs (e.g., BatchTopK) emphasize only local, transient relations, falling for the misleading cues.
    (b, c) Comparing the cosine similarity of average latent code extracted from the subject phrase (SP), verb phrase (V), and object phrase (OP), we see across ambiguous garden path sentences and unambiguous control variants thereof, only the predictive component of Temporal Features Analyzers shows consistent similarity scores (as expected if the SP and V ambiguity is reflect in the latent codes).
    }
\label{fig:garden_path}
\end{figure}

\subsection{Garden Path Sentences: Ambiguities Resolved via Temporal Structure} 
\label{sec:gp}

Garden-path sentences---e.g., \texttt{The old man the boat}---initially cue an incorrect local parse before a later token forces reanalysis. 
Language models have been shown to be able to correctly parse such ambiguous sentences, offering in fact a predictive account of human per-token surprisals~\citep{li2024incremental, hanna2024incremental, oh2023transformer}. 
Interestingly, when using SAE codes to assess whether LM activations offer a valid parse of the sentence, we find hierarchical clustering of SAE codes yields a parse that is suggestive of the misleading cue; meanwhile, Temporal Feature Analysis recovers the correct parse by separating the predictable, slow-moving component of the representation from the novel, fast-changing residual. 
Specifically, in Fig.~\ref{fig:garden_path}, we see the predictive codes link long-distance heads and dependents that reflect the correct parse (e.g., \texttt{man} as verb), producing coherent similarity structure over the full span, whereas standard SAEs emphasize only transient, local changes and miss these cross-temporal constraints (see App.~\ref{app:gemma_gp} for more examples). 

The results above suggest Temporal Feature Analyzers encode syntactic structure that unfolds over time when evidence to collapse the correct constituent parse emerges. 
To make these results more quantitative in nature, we use GPT-5 to synthetically generate a set of 50 garden path sentences where the subject is ambiguous. 
We create 50 control variants of these sentences such that the controls do not possess ambiguity with respect to typical parse of the sentence constituents: e.g., changing the subject in the sentence~\texttt{The old man the boat} from \texttt{old} to \texttt{sailors}, yielding~\texttt{The sailors man the boat}.
We then divide all sentences into their respective subject phrase (SP), verb phrase (V), and object phrase (OP): e.g., \texttt{The old} (SP), \texttt{man} (V), and \texttt{the road} (OP).
We compute the average latent code for tokens from these three constituent phrases, under the hypothesis that if the sentence ambiguity is reflected in the latent code, then until the OP shows up, the correct parse of prior words cannot be identified. Correspondingly, all valid parses must be stored in the same representation.
This suggests the cosine similarity of latent codes of V and SP tokens should be of a similar order in both the garden path and control sentences; meanwhile, the similarity between V and OP should be much higher than V and SP.
Results are reported in Fig.~\ref{fig:garden_path}~(b,c).
We clearly see extreme sensitivity in similarity values of SAE latent codes and the novel component of Temporal Feature Analyzers, but the predictive component is essentially invariant across sentence type, suggesting it captures the temporal dynamics likely relevant for a LM to parse garden path sentences.

\begin{figure}
    \centering
    \includegraphics[width=\linewidth]{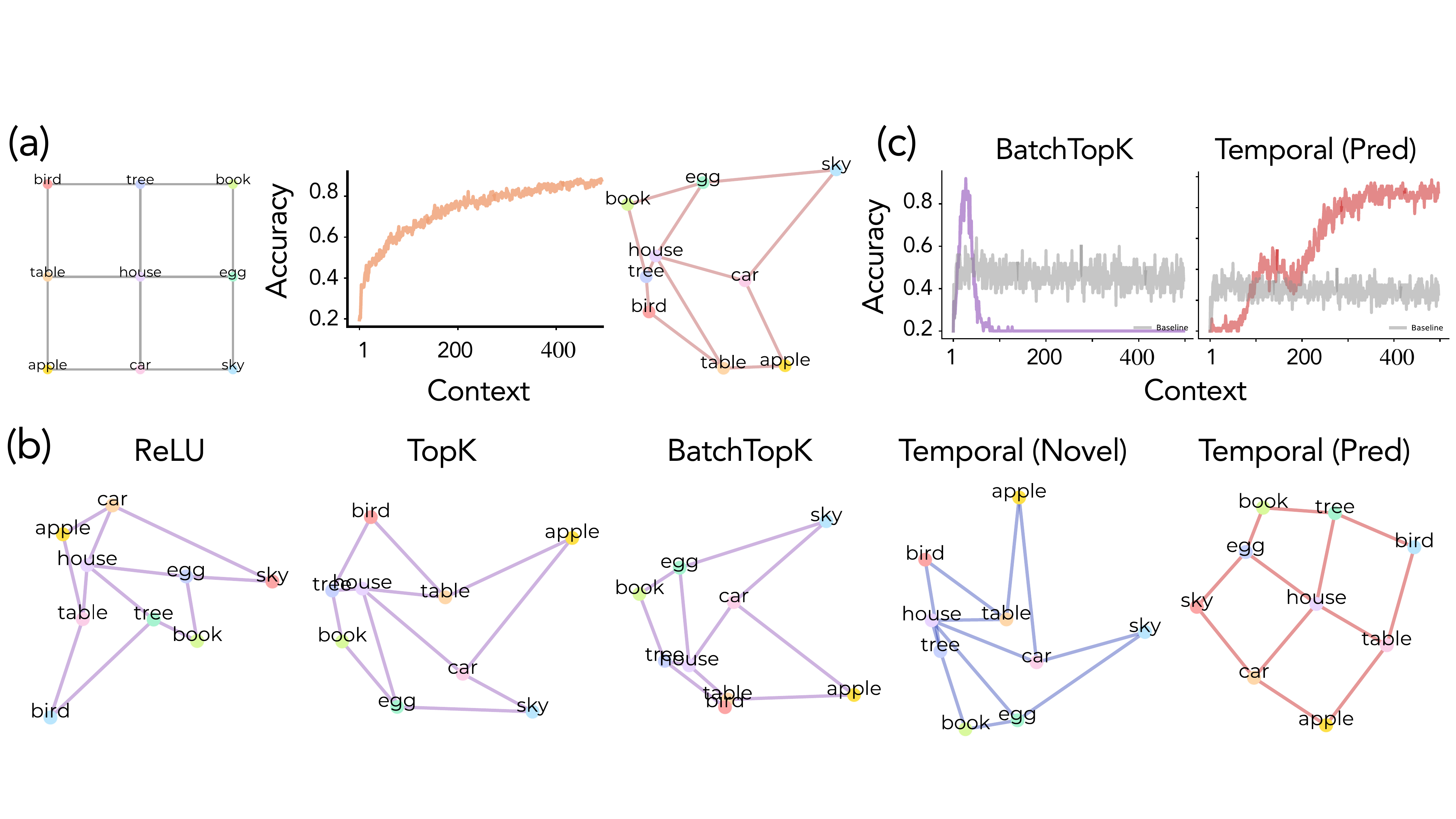}
    \vspace{-5pt}
    \caption{\textbf{In-context representations defined by temporal relations.} \textbf{(a)} Schematic of the synthetic ``cognitive map'' task defined by \citet{park2025iclrincontextlearningrepresentations}: a random walk over a small graph is concatenated into a string of tokens, which when fed into a Gemma-2-2B model leads to model next-token predictions that align with the graph structure. This behavior is exhibited in the model's activations, since a mere 2D PCA shows the graph structure. \textbf{(b)} Using latent codes for low-dimension projection, we see only the \emph{predictive} component of Temporal Feature Analysis recovers the graph’s geometry, whereas standard SAEs and the novel component only partially reflect it. \textbf{(c)} Counterfactual editing: projecting the final-token representation onto the PCA basis and replacing it with a target node’s code reliably shifts next-token predictions in a graph-consistent manner.
    \vspace{-5pt}
    }
\label{fig:iclr_iclr}
\end{figure}

\subsection{In-Context Representations defined by entirely temporal relations} 
\label{sec:iclr}

We finally use the recently described phenomenon of in-context representations by \citet{park2025iclrincontextlearningrepresentations} as a domain to compare SAEs and Temporal Feature Analyzers.
In particular, inspired by experiments on cognitive maps~\citep{behrens2018cognitive}, \citet{park2025iclrincontextlearningrepresentations} propose to take a structured graph whose nodes are assigned entities a pretrained LM is likely to have seen before during training (e.g., \texttt{apple}, \texttt{car}, etc.); see Fig.~\ref{fig:iclr_iclr}a for an example.
A random walk of length 500 tokens is performed on this graph and the sampled nodes during the walk are concatenated together into a string.
When this is inputted into a pretrained language model, provided enough context, one finds (see Fig.~\ref{fig:iclr_iclr}a): (i) the model begins to produce next-token predictions inline with the graph structure and (ii) a 2D PCA projection of the model activations reflects this structure.
That is, the graph structure, which is randomly defined and hence impossible for the model to have seen during pretraining, is an entirely temporal construct that is inferred by the model in-context.

Can latent codes from SAEs or Temporal Feature Analyzers capture the above phenomenon?
To analyze this, we repeat the low-dimensional projection experiment with model activations, but use the latent codes instead. 
We find that SAEs' and the novel component from Temporal Feature Analyzers partially (but not entirely) reflects the geometry of the graph.
Meanwhile, the geometrical structure of the graph is vividly observed in the predictive component of the Temporal Feature Analyzers.
To further confirm of the utility of this representation, we perform a simple counterfactual analysis: we sample walks of different lengths, project the model activations for the last token onto the PCA basis defined in Fig.~\ref{fig:iclr_iclr}b, and finally replace it with the representation of a `target' node.
Ideally, this intervention changes the model's next-token predictions as expected by the neighborhood of the target node.
Results show for $\nicefrac{4}{9}$ nodes, the predictive component yields a counterfactual result as expected by the graph structure; meanwhile, no other latent representation offers the ability to counterfactually edit model beliefs (see Fig.~\ref{fig:iclr_iclr}c for an example).

\section{Further Investigation of Temporal Feature Analysis}
In the results above, we showed across three domains that Temporal Feature Analysis elicits intricate in-context geometry inline with our expectations based on experiments with LM activations in Sec.~\ref{sec:llmtemporalstructure}.
Next, to make a case for following such more structured approaches for interpretability, we add to the results above to show that Temporal Feature Analysis (i) offers a new way of interpreting temporally structured domains (e.g., user--model chats) via the context-dependent predictive component, and (ii) information that can be identified via SAEs continues to remain available within the novel component of Temporal Feature Analysis.

\begin{figure}
    \centering
    \includegraphics[width=\linewidth]{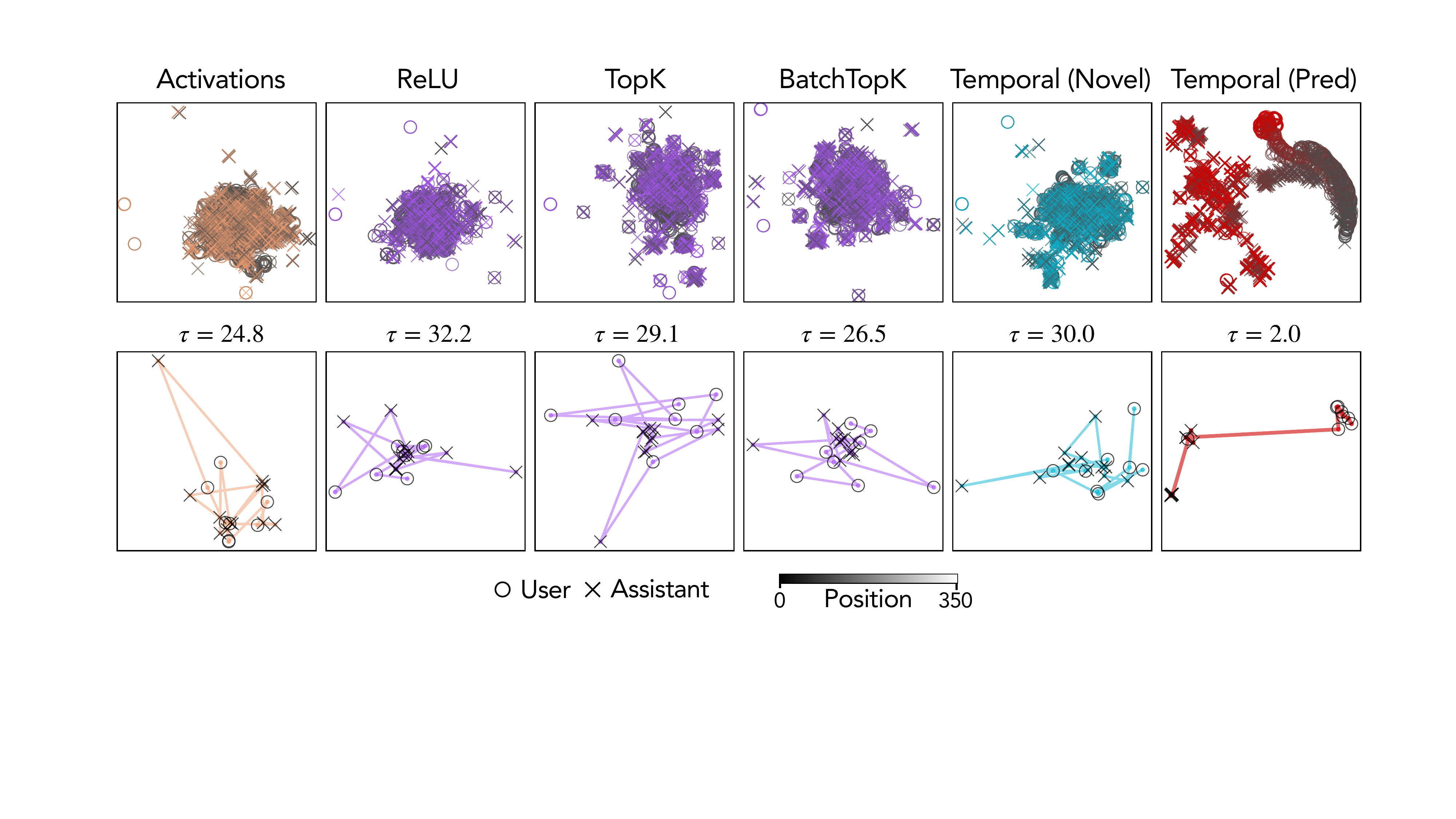}
    \caption{\textbf{Population and trajectory geometry in OOD dialogue.} UMAPs of raw activations and SAE codes on Ultrachat (first 350 tokens per conversation). \textbf{Top:} population embeddings colored by sequence position (gray$\to$color) and marked by role (user=\(\circ\), assistant=\(\times\)); only the \emph{predictive} code cleanly separates speaker roles while preserving a smooth temporal gradient. \textbf{Bottom:} a single conversation overlaid on the same UMAP; the predictive code follows a smooth near-geodesic with low tortuosity \(\tau\), in contrast to jagged paths from standard SAEs/novel codes.}
\label{fig:pop_geometry}
\end{figure}

\subsection{Geometry of latent codes in an OOD dialogue domain}
We analyze a multi-turn chat domain to test whether temporal feature analysis continues to reveal slow-moving structure. 
We emphasize the activations in this experiment are sampled from a Gemma-2-2B-IT model, although all analyzed interpretability protocols were trained on the base (non-instruction tuned) model.
Thus, in a sense, the experiments in this section also gauge the ability of different protocols to generalize out-of-distribution (OOD)---a property SAEs have been argued to partially possess when transferring between base and the corresponding instruction-tuned models~\citep{saetransfer}.
Specifically, we use the Ultrachat dataset, where we take the first 350 tokens from 1{,}000 conversations (mostly single-turn within this window, with up to four turns). 
For each token, we compute a shared 2D UMAP embedding from (i) raw model activations and (ii) latent codes from ReLU, TopK, and BatchTopK SAEs, as well as the novel and predictive components of Temporal Feature Analyzers. 
To visualize population structure clearly, we display only 100 conversations at a time to avoid clutter (Fig.~\ref{fig:pop_geometry}), though fitting is applied to all 1{,}000. 

At the population level, the predictive component of Temporal Feature Analysis is the only representation that cleanly organizes the data along \emph{two} interpretable axes: (1) a \emph{speaker-role separation}, with user and assistant tokens forming distinct yet smoothly adjoining manifolds, and (2) a \emph{temporal gradient}, with positions flowing continuously from early to late tokens along the manifold. By contrast, standard SAEs and the novel code show diffuse clouds with weaker role separability and no coherent positional gradient (Fig.~10, top row).
To assess within-sequence geometry, we overlay a single conversation's trajectory onto the population embedding (Fig.~10, bottom row). Predictive codes trace a \emph{smooth trajectory} that remains locally coherent within each role segment and transitions gracefully at speaker switches. In contrast, standard SAEs and the novel code produce jagged, zig--zagging paths. We again quantify this with \emph{tortuosity} $\tau$~\citep{bullitt2003measuring} (lower is straighter): predictive codes achieve $\tau \approx 2.0$, whereas SAEs and the novel code yield $\tau \approx 25$--$30$, indicating substantially more local turning (Fig.~10, panel headers). This straightening mirrors the story-domain results (Sec.~\ref{sec:stories}), now in an OOD conversational chat regime.

\begin{figure}
    \centering
    \includegraphics[width=\linewidth]{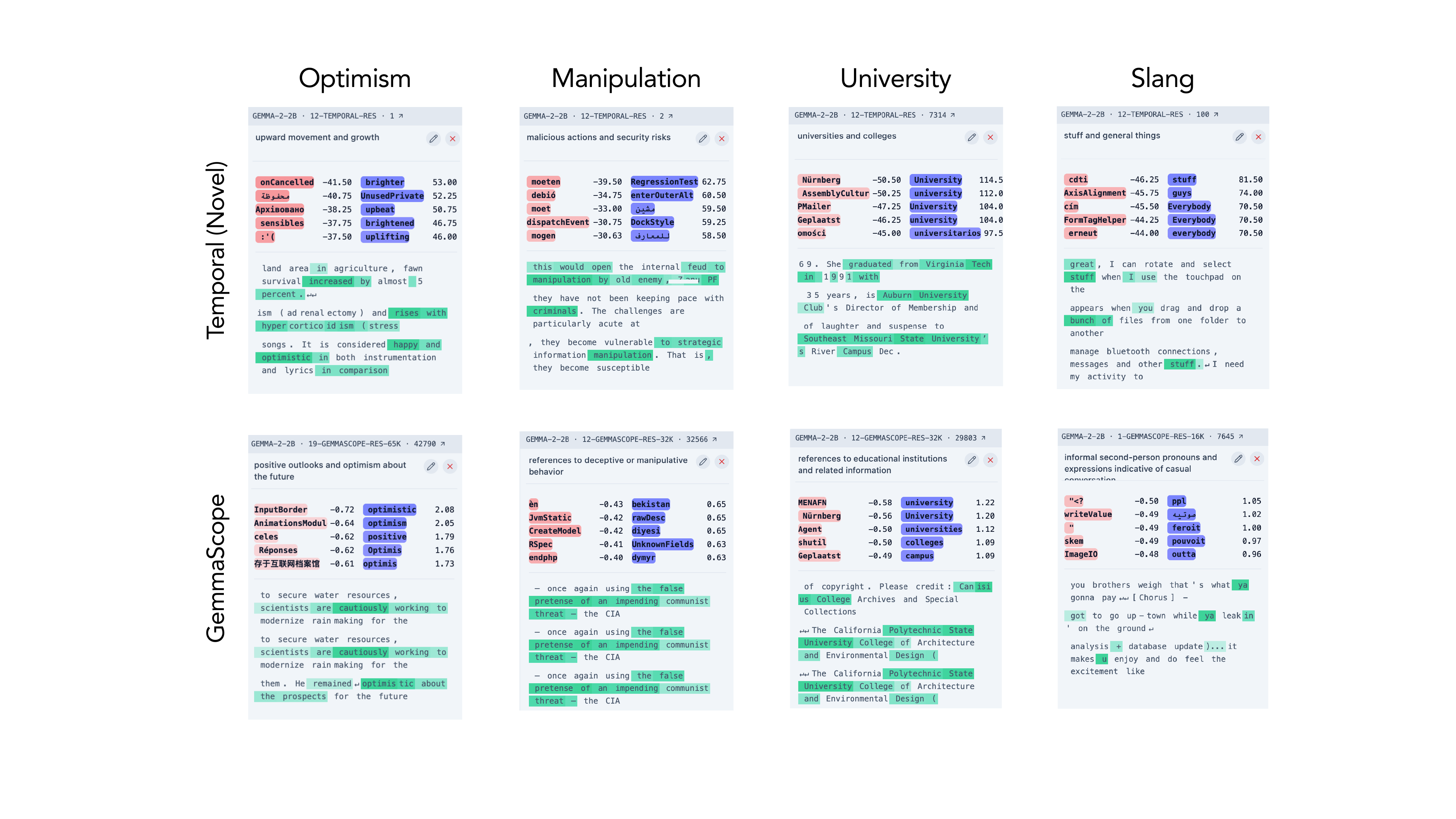}
    \caption{\textbf{Interpretability of novel codes.} 
    We qualitatively compare the interpretability of novel codes extracted using Temporal Feature Analysis with latent codes extracted using GemmaScope SAEs across four diverse categories. Example feature cards are shown across four categories (Optimism, Manipulation, University, Slang), produced with the same activation-triggered examples plus direct logit attribution workflow (GemmaScope / Neuronpedia) used for standard SAEs. Browse the full list of features  \href{https://www.neuronpedia.org/list/cmh9dj4cq0001i9h}{on Neuronpedia}.}
\label{fig:novel_code_autointerp}
\end{figure}

\subsection{Automatic Interpretability of Novel Codes}

A central benefit of SAEs is that their latents admit automatic interpretability pipelines (``autointerp'') that attach human-readable semantics to features via activation-triggered examples, token/phrase saliency, and lightweight dashboards (e.g., GemmaScope / Neuronpedia). Our Temporal Feature Analyzer adds a new, temporally aware predictive axis, but crucially \textit{it does not sacrifice} these established interpretability affordances: the novel code remains compatible with standard SAE autointerp workflows. In other words, Temporal Feature Analysis augments the toolbox rather than replacing it.
To support this point, we apply a conventional autointerp loop to the novel codes: (i) collect high-activation snippets and nearest-neighbor contexts for each latent; (ii) compute direct logit attribution to inspect which vocabulary items a latent tends to support or suppress; and (iii) render compact, browsable ``feature cards'' that aggregate examples and scores (as in GemmaScope / Neuronpedia). While we do not perform an exhaustive causal-intervention study here, nothing in the pipeline is specific to standard SAEs; the same probes and dashboards operate unchanged on the novel codes, and we leave a systematic intervention suite to future work.
Qualitative inspection confirms that the novel codes recover the kinds of monosemantic, language-level features typically reported for SAEs. Figure 11 (p. 13) shows representative examples across diverse categories---Optimism, Manipulation, University, and Slang---where the novel latents consistently activate on intuitively relevant cue phrases and contexts, and where their attribution profiles align with the expected lexical sets. 
Overall, when standard SAEs produce readable feature cards, the Temporal Feature Analysis's novel stream does as well. This complements our earlier results: the predictive stream captures slow, contextual structure (events, roles, long-range constraints), whereas the novel stream concentrates the fast, stimulus-driven information that matches inferences possible via existing SAEs, including autointerp pipelines.

\section{Discussion}
\label{sec:discussion}

A longstanding view in computational and cognitive neuroscience treats perception as predictive and time-coupled: slowly varying, context-invariant structure coexists with fast-changing, surprising residuals~\citep{levy2008expectation, millidge2024predictive, shain2020fmri}. 
We find that this structure is reflected in LM activations (Sec.~\ref{sec:llmtemporalstructure}): activations are strongly correlated with recent context and can be decomposed accordingly. 
In contrast, SAEs assume independence over time; this is misaligned with the identified non-stationary structure in language model activations. 
Our protocol---Temporal Feature Analysis---relaxes this assumption, incorporating empirically observed temporal correlations to surface structure that i.i.d.\ priors obscure. 
We showed that Temporal Feature Analysis is able to highlight temporal structures in its predictable codes that SAEs are unable to capture due to their prior-data mismatch. 
Overall, our work adds to recent findings that inductive biases about the specific application under study are necessary to build improved interpretability tools~\citep{hindupur2025projecting, costa2025flat, sutter2025the}.

\subsection{Future Work}
Our findings provide indirect support for alternative perspectives on the structure of concepts in language models. Here, we highlight these perspectives, and emphasize directions that we believe will have significant impact.

\paragraph{Features as Low-Dimensional Manifolds.} We believe our work brings interpretability of language models one step closer to recent advances in interpretability of neural networks designed to simulate computational neuroscience tasks~\citep{sussillo2013opening, cotler2023analyzing, mante2013context, maheswaranathan2019universality, maheswaranathan2020recurrent}.
Specifically, rather than isolated directions, as is assumed in both sparse coding~\citep{olshausen1996emergence, olshausen1997sparse} and SAEs~\citep{elhage2022superposition, bricken2023monosemanticity}, ``features'' are better viewed as low-dimensional manifolds that evolve over time---as has also been stated in other recent work~\citep{modell2025origins, fel2025into, gurnee2025when}. 
In particular, we see in our results that predictive codes trace smooth, stable trajectories (events, roles, long-horizon constraints), whereas novel codes capture transient excursions. 
Under this framing, high reconstruction with poor temporal structure in standard SAEs reflects a prior that fragments local manifold geometry (as formalized in Sec.~\ref{sec:saepriors}); adding the temporal prior helps preserve the geometry that yields temporally coherent representations~\citep{berkes2005slow,olshausen2007learning}. 

\paragraph{Temporal Straightening in Language Model Representations.} Across narrative and dialogue settings, the Temporal Feature Analyzer's predictable code organizes representations into hierarchical blocks and smooth paths---an effect reminiscent of ``temporal straightening'' ~\citep{henaff2019perceptual,hosseini2023large,xutemporal}, where the representational trajectory of an input sequence becomes linearized to support extrapolation toward future states of the sequence. In stories, predictive codes align with (sub)event boundaries and show robustness under noise; in OOD dialogue, they cleanly separate roles and produce near-geodesic single-conversation paths. Kernel spectra and CKA further support the slow/fast split: predictive codes align with slow components; novel codes and standard SAEs align with fast ones. We believe this dynamic warrants further investigation, and may serve fertile ground to develop computational hypotheses about temporal straightening itself.

\paragraph{Priors in Temporal Feature Analysis.} The encoding process in Temporal Feature Analysis realizes the predictable component with a single-layer self-attention module, which may bias the kinds of dependencies it can capture. Future work could place the inductive bias inside the sparsity regularizer (retaining SAE encoders) or adopt richer temporal encoders. More broadly, we assume i.i.d.\ residuals in the novel stream; relaxing this to allow controlled temporal correlations is an important direction. Finally, scaling causal evaluations---intervening on codes during generation---will help move from qualitative alignment to mechanistic claims.

\paragraph{Training Dynamics and Possible Variants of Temporal Feature Analysis}
Since the goal of proposing Temporal Feature Analysis was to identify and highlight the benefits of accommodating the temporal structure of model activations, we did not perform significant tuning of the proposed protocol. That is, we strongly emphasize that our proposed protocol is just an example and we hope much better variants will likely follow as the community explores this research direction---as, e.g., happened for SAEs, which started with vanilla ReLU variants~\citep{cunningham2023sparse, bricken2023monosemanticity}, eventually leading to much more sophisticated approaches eventually~\citep{rajamanoharan2024jumpingaheadimprovingreconstruction, bussmann2024batchtopksparseautoencoders, costa2025flat}. To assist this, we highlight the following.

\begin{tfaitem}[Competition dynamic in the learning process.]%
  {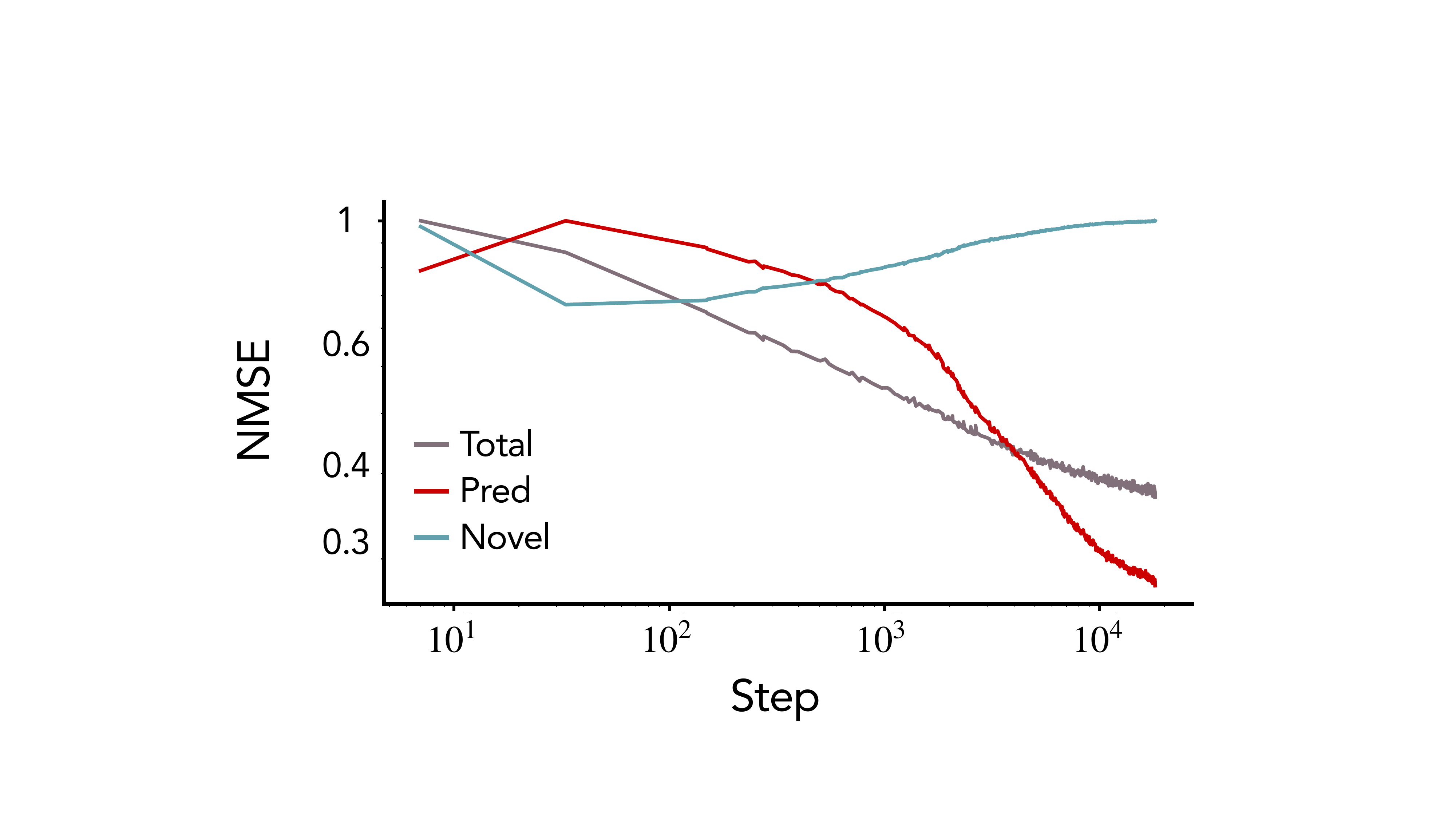}%
  {\textbf{Competition Dynamics.} We see while the overall learning curve looks smooth, the predictive and novel code in fact compete with each other to explain the input signal.}%
  {fig:gemma_curves}
The precise way the training objective is defined in the current version of Temporal Feature Analysis (see Eq.~\ref{eq:tfa_loss}), there is a competition dynamic imposed between the predictive and novel component for explaining model representations. Specifically, we found that learning dynamics underwent three phases (see Fig.~\ref{fig:gemma_curves}). First, since the objective of \textit{predicting} a future direction is more challenging than reconstruction to begin with, we found that the novel part, which is defined using a standard SAE and can in fact reconstruct a bulk of the training signal (since SAEs are trainable in the first place), will dominate the learning dynamics. Then, once the dictionary quality improves, the ability to predict starts to improve and the learning dynamics goes through a chaotic phase with successful training resulting in the predictive component taking over the learning dynamics. Finally, we saw that the novel component keeps conceding its dominance over the reconstructed signal, such that its NMSE finally becomes very high and the predictive component's NMSE becomes very low. Unsuccessful training, however, can also happen here: at times we saw a fourth phase occurs where the novel component, relatively suddenly, takes over the learning dynamics. This would make the predictive component futile. This was rarely seen when working with Gemma models, but a frequent occurrence in Llama models; we resolved this problem by switching from using TopK to BatchTopk activation for the novel component, which resulted in perfectly stable training for Llama. We believe there are principled mechanisms to address this challenge, however. For example, Temporal Feature Analysis, in its structure, is very similar to Kalman filters~\citep{welch1995introduction}---latter generally involves Markov assumption, but Temporal Feature Analysis uses the entire context. Seeking inspiration from training objectives for Kalman filters, it can be useful to decompose the loss into two halves: (i) a term that uses only the predictive component to reconstruct the signal, and (ii) a term that uses the novel component to reconstruct just the residual. This breaks the competition dynamic by introducing an asymmetry in the training objective. 
\end{tfaitem}

\begin{tfaitem}[Rank sparsity.]%
  {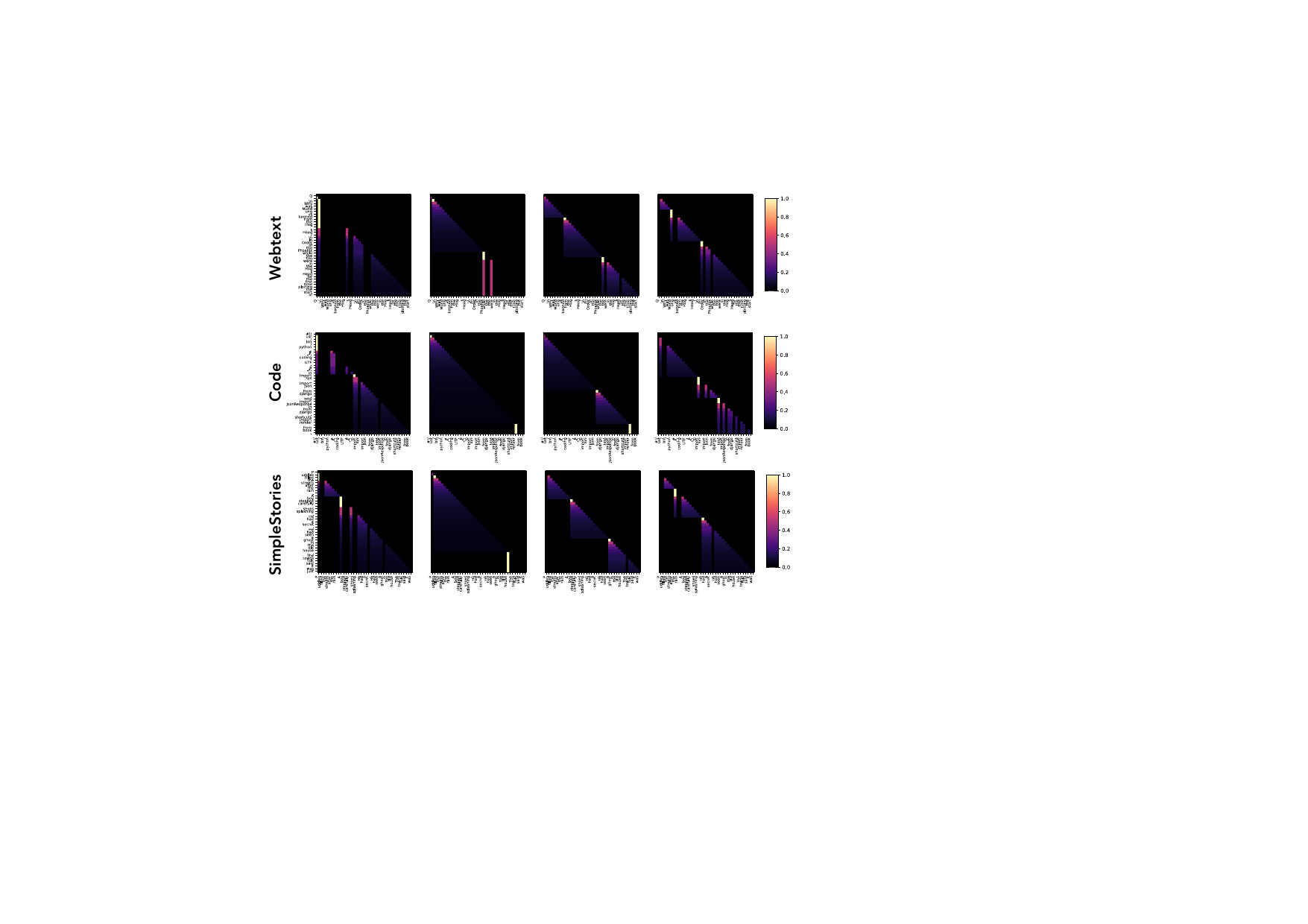}%
  {\textbf{Attention Patterns.} Attention weights for single sequences across three domains show low-rank structure in similarities.}%
  {fig:attn_pattern_pred}
While the predictive component is dense in the sense used in interpretability literature, i.e., its $L_0$ norm is not constrained and can be high, in practice we found the learned attention maps and cosine similarity matrices (e.g., in Sec.~\ref{sec:stories}) to be very low-rank (e.g., see Fig.~\ref{fig:attn_pattern_pred}). In fact, we note in the broader field of dictionary learning and sparse coding~\citep{elad2010sparse}, an alternative notion of sparsity is \textit{rank-sparsity}. Specifically, in this paradigm, one constrains the rank of a matrix to be low: in the Euclidean sense, the latent code to reconstruct the target signal will still be dense, but the number of directions used to achieve that, in an alternative space, will be small. While one can optimize for this property using a nuclear norm like regularization objective, this can be expensive in practice (due to the need of an SVD). An alternative is thus to constrain the dimensions of the factors used for a signal's reconstruction, i.e., what is often called imposing a ``rank bottleneck''. When using this methodology on the predictive component, i.e., reducing the dimensions in the K, Q, V matrices of the attention layer, we find the resulting Temporal Feature Analyzers to achieve similar reconstruction as the unconstrained ones. While we did not explore this thread further, we believe it can be promising and is related to recent work on interpreting attention layers~\citep{he2025towards, kamath2025tracing}.
\end{tfaitem}

\begin{tfaitem}[Layer choice.]%
  {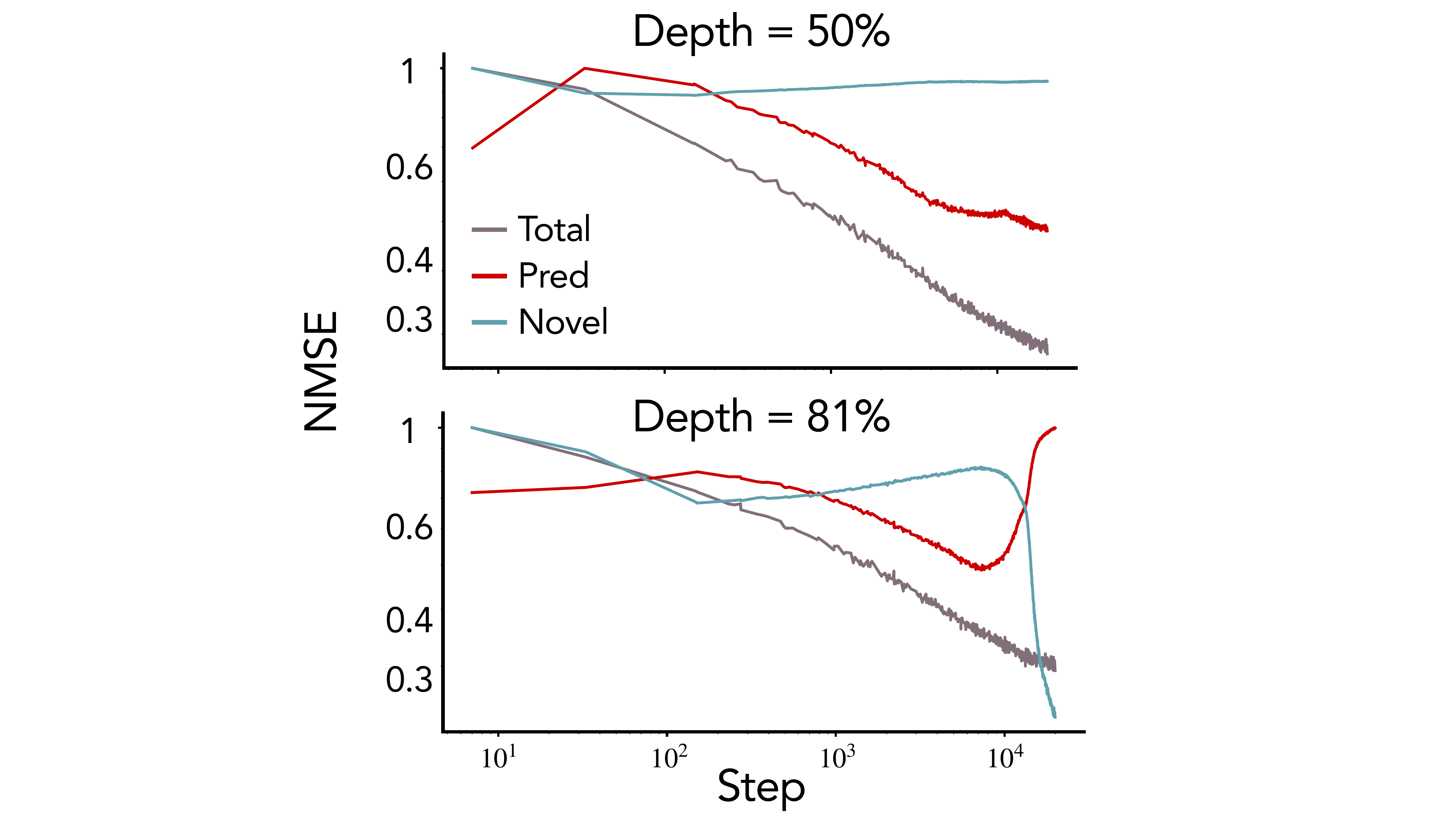}%
  {\textbf{Llama Learning Curves.} Temporal Feature Analysis works best in middle layers of the model, with training on deeper layers often showing a learning dynamic whereby the novel part eventually takes over and starts to explain a bulk of the signal.}%
  {fig:llama_curves}
For all our experiments, we trained SAEs and Temporal Feature Analyzers at layers found around 50\% model depth (Layer 12 for Gemma and Layer 15 for Llama; both 0-indexed). When we tried to train Temporal Feature Analyzers on a deeper layer---specifically, layer 26 ($\sim$81\% depth) of Llama---we found the predictive component would not be dominate the norm of the latent code (see Fig.~\ref{fig:llama_curves}). \textit{We believe this is a feature, not a bug}. In particular, deeper layers have to unembed the extracted contextual signal to make a next-token prediction, and correspondingly a model is likely to discard unnecessary information in the deeper layer that can interfere with its ability to confidently produce next-token predictions. This claim is corroborated by block structured similarities seen in CKA and other representation similarity analysis literature~\citep{kornblith2019similarity, raghu2017svcca}. We believe this thread warrants further investigation, as it may hint at a deeper result on temporal dynamics be varied across different layers, such that one could argue SAEs have the right inductive biases for interpreting deeper layers, but not middle or earlier layers.
\end{tfaitem}

\newpage
\section*{Acknowledgments}
ESL thanks members of the Mechanisms team at Goodfire AI, particularly Tom McGrath, Owen Lewis, Jack Merullo, and Atticus Geiger for helpful discussions. ESL additionally thanks Jack Lindsey for useful comments that helped formalize the intrinsic dimensionality arguments in Sec.~\ref{sec:saepriors}, and Raphael Sarfati for suggesting the Tortuosity metric used in Fig.~\ref{fig:ind_story_geometry}.
The authors also thank David Bau for a helpful discussion about the generalization ability of SAEs, and David Klindt for several useful comments on an earlier draft of the paper.
ESL and SSRH further thank Noor Sajid and members of the CRISP lab at Harvard for helpful conversations.
CR is supported by a MATS extension grant. 
AM's work is partially supported by the National Science Foundation (Grant No.\ 2530728) and Binational Science Foundation.
This work has been made possible in part by a gift from the Chan Zuckerberg Initiative Foundation to establish the Kempner Institute at Harvard University.

\bibliographystyle{plainnat}
\bibliography{ref}

\appendix
\newpage
\addcontentsline{toc}{section}{Appendix}
{%
  \hypersetup{linkcolor=blue}%
  \part{Appendix}%
  \parttoc%
}

\newpage

\section{Experimental details}
Below, we discuss broad details of the experimental setup and measures used for corresponding evaluations.
For experiments with LM representations, we primarily analyze Gemma-2-2B and Llama-3.1-8B models in this paper. 
Specifically, we precache 10K activations for the analyzed datasets for our experiments on dynamics on language model representations.

\subsection{Autocorrelation Between Activations}
\label{app:autocorr}

We compute autocorrelation by selecting evenly spaced tokens across the sequence and measuring the cosine similarity between each token and tokens at various lags in the past. Specifically, for tokens at position $t$, we compute similarities to tokens at $t-w$ where lag $w$ ranges from 5 to 20. This creates a heatmap where rows represent lag offsets and columns represent token positions.

For a stationary process, we expect the autocorrelation pattern to remain consistent across time---that is, the relationship between a token and its historical context should be similar regardless of position in the sequence. This would manifest as similar autocorrelation patterns repeating horizontally across token positions. In contrast, for a non-stationary process where representations evolve over time, we expect the autocorrelation patterns to vary systematically across positions, with columns showing different temporal dependency structures as the sequence progresses.

\subsection{Projection Analysis}
\label{appendixsubsection:context_projection_exp_var}

This analysis quantifies how much of the representation $\x_t \in \mathbb{R}^D$ at token position $t$ can be reconstructed from its preceding context $\{\x_i\}_{i \in W}$, where the context window $W = [t-w, t-1]$ contains the previous $w$ tokens.

For a population of $B$ sentence samples, we compute the projection of each target representation onto the subspace spanned by its context. Let $\mathbf{X}_{W}^{(b)} = [\x_{t-w}^{(b)}, \ldots, \x_{t-1}^{(b)}]^T \in \mathbb{R}^{w \times D}$ denote the matrix of context representations for sample $b$, where each row is a context vector. 
Prior to projection, we center all representations by subtracting the mean computed over the target positions across samples. The projection of $\x_t^{(b)}$ onto the span of the context is:
\begin{align}
    \mathbf{c}_{t,w}^{(b)} = {\text{span}(\mathbf{X}_W^{(b)})} \cdot \x_t^{(b)}
\end{align}
where $\text{span}(\mathbf{X}_W^{(b)})$ is the row space of $\mathbf{X}_W^{(b)}$, that can be computed via SVD.
The variance explained by the context is:
\begin{align}
    \text{expvar}(t,w) = \frac{\sum_{d=1}^D \text{var}(\mathbf{c}_{t,w,d})}{\sum_{d=1}^D \text{var}(\mathbf{x}_{t,d})}
\end{align}
where $\text{var}(\mathbf{c}_{t,w,d})$ and $\text{var}(\x_{t,d})$ denote the variance of the $d$-th dimension across the $B$ samples. This ratio measures the fraction of total representational variance that can be linearly reconstructed from the preceding context. Values approaching 1 indicate high predictability from context; values near 0 indicate representations largely orthogonal to their context subspace.

\subsection{U-statistic}
\label{appendixsubsection:ustat}

We initially experimented with more well-known measures of intrinsic dimensionality, but found them to either be too computationally expensive~\citep{costa2005estimating} or saturate due to inappropriate assumptions (e.g., data satisfying a Euclidean geometry)~\citep{carter2009local}.
We thus defined an intrinsic dimensionality measure for a specific generative process for activations that is motivated by the linear representation hypothesis~\citep{park2023linear, elhage2022superposition, costa2025flat, arora2018linear}.
Specifically, assume that each observation (activation) $x_t \in \mathbb{R}^D$ is a unit-norm vector generated from an (unknown) overcomplete dictionary $V = [v_1,\dots,v_N] \in \mathbb{R}^{D\times N},$ where $D < N$, whose columns are approximately orthonormal, in the sense that $\langle v_i,v_j\rangle$ is small for $i\neq j$ and $\|v_i\|_2=1$.
At time $t$, the observation is $x_t = V a_t$ and $\|x_t\|_2 = 1$, where $a_t \in \mathbb{R}^N$ is a random coefficient vector with a time-dependent distribution $P_t(a)$.
Thus the process is nonstationary through the evolution of $P_t$ (and hence $a_t$), while the dictionary $V$ is fixed, matching the phenomenology observed in Sec.~\ref{sec:llmtemporalstructure}.

Under the assumptions above, we measure the intrinsic dimensionality of LM activations based on pairwise cosine similarities. Specifically, let $\mathbf{X}_t =[\x_t^{(1)}, \dots, \x_t^{(M)}]$ be $M$ samples of normalized activations at time $t$ (from different timeseries). Define the gram matrix $\mathbf{G}_t = \mathbf{X}_t^T \mathbf{X}_t$. Then, a U-statistic of intrinsic dimension can be defined as follows: 
\begin{align}
    \text{U-stat}(t) = \frac{M^2 - M}{\|\mathbf{G}_t\|_F^2 - M},
\end{align}
where $\|\mathbf{G}_t\|_F^2$ is the squared Frobenius norm of the Gram matrix. 
As we show in the following subsection, this quantity is an estimator of the effective rank $\nicefrac{1}{\text{tr}(\mathbf{C}_t^2)}$, where $\mathbf{C}_t = \mathbb{E}[\x_t \x_t^T]$ is the second moment matrix and $\x_t$ is the activation vector at time $t$. 
Under stationarity, U-stat remains constant; when representations evolve over time, U-stat increases systematically as more orthogonal directions become active.

\subsubsection{Why is U-stat an intrinsic dimensionality measure}

Let $P_t$ denote the distribution of $x_t$ at time $t$ and define the (time-local) second-moment matrix $C_t \;:=\; \mathbb{E}_{x \sim P_t}[x x^\top] \in \mathbb{R}^{D\times D}$.
Assuming $\|x\|_2=1$, we have $\operatorname{tr}(C_t) = 1$.
Given $M$ samples $\{x_{t,1},\dots,x_{t,M}\}$ drawn from $P_t$ (e.g., from a time window or an ensemble of trajectories at time $t$), we form the Gram matrix $G_t \in \mathbb{R}^{M\times M}$ with entries $(G_t)_{ij} = x_{t,i}^\top x_{t,j}$.
The corresponding U-statistic
\[
\hat U_t \;:=\; \frac{1}{M(M-1)} \sum_{i\neq j} (x_{t,i}^\top x_{t,j})^2 \;=\; \frac{\|G_t\|_F^2 - M}{M^2 - M}
\]
is a consistent estimator of 
\[
U_t \;:=\; \mathbb{E}_{x,x' \sim P_t}[(x^\top x')^2] \;=\; \operatorname{tr}(C_t^2),
\]
where $x$ and $x'$ are independent draws from $P_t$.
Thus $\hat U_t$ provides a basis-free estimate of $\operatorname{tr}(C_t^2)$.

Now, a natural, rotation-invariant notion of intrinsic dimensionality at time $t$ is the (order-2) effective rank of $C_t$, $d_{\mathrm{eff}}(t)\;:=\; \frac{(\operatorname{tr} C_t)^2}{\operatorname{tr}(C_t^2)}$.
For unit-norm data $\operatorname{tr}(C_t)=1$, so
\[
d_{\mathrm{eff}}(t) \;=\; \frac{1}{\operatorname{tr}(C_t^2)} \;=\; \frac{1}{U_t},
\qquad
\hat d_{\mathrm{eff}}(t) \;:=\; \frac{1}{\hat U_t}
\;=\;
\frac{M^2 - M}{\|G_t\|_F^2 - M}.
\]

Let $\lambda_1(t),\dots,\lambda_D(t)$ be the eigenvalues of $C_t$.
Then $\operatorname{tr}(C_t^2) = \sum_{i=1}^D \lambda_i(t)^2$. This yields the following.
\[
d_{\mathrm{eff}}(t) = \frac{1}{\sum_{i=1}^D \lambda_i(t)^2}.
\]
Now if the process at time $t$ spreads its energy equally over exactly $k$ orthogonal directions, then $\lambda_1(t)=\dots=\lambda_k(t)=1/k$ and $\lambda_{k+1}(t)=\dots=0$, and, hence, $d_{\mathrm{eff}}(t) = k$.
However, if most of the energy lies in a single direction, then $\sum_i \lambda_i(t)^2$ is close to $1$ and $d_{\mathrm{eff}}(t)$ is close to $1$.
Thus, $d_{\mathrm{eff}}(t)$ behaves like the ``number of active directions'' in which the process varies at time $t$.
This is especially appropriate as a measure for our assumed generative process, i.e., $x_t = V a_t$, which yields
\[
C_t = \mathbb{E}[x_t x_t^\top]
= \mathbb{E}[V a_t a_t^\top V^\top]
= V S_t V^\top,
\qquad
S_t := \mathbb{E}[a_t a_t^\top].
\]
If $V$ is close to orthonormal and $S_t$ is close to diagonal, with
\[
e_i(t) := \mathbb{E}[a_{t,i}^2]
\]
denoting the average energy of coefficient $i$, then
\[
\operatorname{tr}(C_t^2) \approx \sum_{i=1}^N e_i(t)^2,
\qquad
\sum_{i=1}^N e_i(t) \approx 1.
\]
Hence
\[
d_{\mathrm{eff}}(t)
= \frac{1}{\operatorname{tr}(C_t^2)}
\approx \frac{1}{\sum_{i=1}^N e_i(t)^2},
\]
which is the usual ``effective number'' of coefficients that carry energy
at time $t$.
In particular, if the energy is spread evenly across $k$ dictionary atoms,
then $d_{\mathrm{eff}}(t) \approx k$.
Therefore $d_{\mathrm{eff}}(t)$, estimated from the Gram U-statistic,
is a simple and natural measure of intrinsic dimensionality for this
generative process.

\subsection{Surrogate}
\label{app:surrogate}

For the U-statistic and Autocorrelation metrics, we compare LLM activations to surrogate distributions that preserve certain statistical properties while removing temporal structure. We operate on representation vectors $\mathbf{X} \in \mathbb{R}^{B \times T \times d}$, where $B$ denotes batch size, $T$ denotes sequence length, and $d$ denotes the model dimension.

\textbf{U-statistic surrogate (Fig.~\ref{fig:llm_temporal_structure} a, e):} For each sequence $i \in \{1, \ldots, B\}$, we construct the surrogate $\tilde{\mathbf{X}}_i$ by applying a random permutation $\pi_i: \{1, \ldots, T\} \to \{1, \ldots, T\}$ to the temporal positions:
$$\tilde{\mathbf{X}}_{i,t,:} = \mathbf{X}_{i,\pi_i(t),:} \quad \forall t \in \{1, \ldots, T\}$$
This preserves the marginal distribution of activations within each sequence while destroying temporal dependencies.

\textbf{Autocorrelation surrogate (Fig.~\ref{fig:llm_temporal_structure} c, g):} Given the similarity matrix $\mathbf{S} \in \mathbb{R}^{T \times T}$ where $S_{ij} = \text{sim}(\mathbf{X}_{\cdot,i,\cdot}, \mathbf{X}_{\cdot,j,\cdot})$, we construct the surrogate similarity matrix $\tilde{\mathbf{S}}$ by replacing each diagonal with its mean:
$$\tilde{S}_{ij} = \bar{S}_k \quad \text{where } k = |i-j|, \quad \bar{S}_k = \frac{1}{T-k}\sum_{t=1}^{T-k} S_{t,t+k}$$
This preserves the average correlation structure at each lag while removing position-specific temporal patterns.

\subsection{Analysis of Stories: UMAP Projections, Dendrograms, Similarity Heatmaps, and Event Boundary Detection}

\paragraph{Data.} For experiments in Sec.~\ref{sec:stories}, we used either stories sampled from the TinyStories dataset~\citep{eldan2023tinystories} (for Fig.~\ref{fig:ind_story_geometry},~\ref{fig:fourier_spectra}) or synthetically sampled stories using GPT-5 (for Fig.~\ref{fig:stories}).
For the former, we sampled stories that were up to a 100 tokens long, allowing us to characterize compute UMAP for both Gemma and Llama models without hitting memory bottlenecks. 
For consistency, we then used these stories for all experiments.
For synthetically defined stories, we defined 3 in-context examples (one of which is shown in Fig.~\ref{fig:stories}a) and prompted GPT-5 to produce 50 stories with similar such suddenly changing events.
Stories varied in size, but were generally less than 150 tokens long.
Following \citet{georgiou2023using}, the generated stories' events were labeled using GPT-5. 
These labels served as the `ground truth' event boundaries.
We qualitatively analyzed and confirmed the event boundaries overlap with our intuitively expected event structure in the stories.

\paragraph{Analysis Details.} To isolate the low-dimensional geometry of latent codes, we perform a 3D UMAP analysis using the open-source package~\citep{umap}.
Similarly, Dendrograms in Fig.~\ref{fig:ind_story_geometry} are computed using the hierarchical clustering package in SciPy~\cite{dendrogram}.
Branches are colored based on proximity, which in our case was between $0$--$1$ (since we use cosine similarity as the clustering measure).
We put a proximity bound of $0.2$ as the distance under which branches are colored within a cluster.
For Fourier Analysis, as mentioned in Sec.~\ref{sec:stories}, we performed a Fourier transform of the model activations for a given story, isolated its lower frequency components---defined as $0.1 \times$ the Nyquist rate, which turns out to possess $\sim$50\% of the signal norm---and compute the cosine similarity kernel of the low / high frequency spectra. These kernels are then compared to  kernels of latent codes derived using Temporal Feature Analysis or SAEs.

\subsection{Analyzing Garden Path Sentences: Dendrograms and Phrase Similarity}
\paragraph{Data.} We used GPT-5 to sample 50 garden path sentences and control variants thereof (1 corresponding to each sentence).
Sentences were 20--40 tokens long and had a structure such that the observation of the object phrase resolved ambiguity.
The precise ambiguity structure was were varied in type, i.e., the ambiguity could be resolved by altering the subject (e.g., \texttt{old} $\to$ \texttt{sailors} in \texttt{The old man the road}) or by other mechanisms such as adding punctuations to elicit a pause (e.g., adding a comma, such as \texttt{The old train the young fight} $\to$ \texttt{The old train, the young fight}).
Control variants had a mixture of such resolutions applied.

\paragraph{Analysis Details.} For assessing whether tokens in verb phrase relate more with the subject phrase, as would be expect by the typical parse in our used garden path sentences, or with the object phrase, as would be necessary for the correct parse, we computed Dendrograms using the hierarchical clustering package in SciPy~\citep{dendrogram}.
The similarity between subject phrase (SP), verb phrase (V), and object phrase (OP) was computed by taking the set of tokens $T_{P}$ that belong to a phrase $P \in \{$SP, V, OP$\}$, and computing the average latent code: $\bar{c}(P) = \frac{1}{|P|} \sum_{t \in T_P} c_t$, where $c_t$ denotes the latent code extracted for token $t$ using either SAEs or the predictive or novel component of Temporal Feature Analysis.
This protocol is similar to popularly used strategies for computing sentence embeddings (see, e.g., work using BERT embeddings~\citep{koroteev2021bert}).
Cosine similarity between these phrase-averaged garden path / control sentences yields the tables shown in Fig.~\ref{fig:garden_path}.

\subsection{Analyzing In-Context Representations: Visualization and Counterfactual Experiments}
\paragraph{Data.} We followed the pipeline of \citet{park2025iclrincontextlearningrepresentations} and defined grid graphs where random 1-token words are associated to the nodes of a 3$\times$3 grid. 
A random walk of length 500 tokens is performed on the grid, with the visited nodes stringed together with a whitespace (e.g., stringing \texttt{apple}, \texttt{car}, \texttt{house} yields the string `\texttt{apple car house}'). 
These walks were inputted into our analyzed, pretrained language models (Gemma-2-2B in Fig.~\ref{fig:iclr_iclr} and Llama-3.1-8B in Fig.~\ref{fig:llama_iclr_iclr}) to extract corresponding activations.

\paragraph{Analysis Details.} A 3D PCA of activations was performed to check which two subsequent dimensions (i.e., PC1/2 or PC2/3) best reflect the graph structure, which was then used to plot the figures.
These activations were then passed through both Temporal Feature Analyzers and SAEs, resulting in latent codes that were again visualized using PCA.

\clearpage
\section{Details on Training Temporal Feature Analyzers and SAEs}

\subsection{Training, Harvesting, and Inference for Temporal Feature Analysis and SAEs}
\paragraph{Compute.} All experiments were performed using 1 H100, both for extracting representations, analyzing them, and training SAEs and Temporal Feature Analyzers.
To save training costs, we precache activations on a local server and train in an ``online manner'', i.e., sampling a new batch of activations every training iteration.

\paragraph{SAEs' training.} All analyzed SAEs were trained from scratch on 1B precached activations from the Pile-Uncopyrighted dataset~\citep{monology2021pile-uncopyrighted}.
While we did try to use existing, off-the-shelf SAEs, we found the results to be wildly inconsistent depending on where we borrowed the SAE from. To enable a consistent and fair evaluation, we thus preferred to train all SAEs from scratch.
For Gemma models, activations were extracted from Layer 12; for Llama models, from layer 15 (all 0-indexed).

\paragraph{Training procedure.} We use Adam optimizer with standard hyperparameters. We initialize training with a warmup of 200 steps to a learning rate of $10^{-3}$, following from thereon to a minimum of $9\times 10^{-4}$, i.e., the learning rate remains essentially constant throughout training.

\paragraph{Activations normalization.} Following best practices, we normalize activations to unit \textit{expected} norm~\citep{bricken2023monosemanticity, costa2025flat}. This helps put different SAEs on the same training scale, and especially for ReLU SAEs, makes training substantially easier by reducing the need for tuning regularization strength. On a related note, we emphasize our ReLU SAEs have a slightly higher $L_0$, i.e., are less sparse, than almost all other SAEs analyzed in this work.

\clearpage
\section{Further Results: Language Model Representations (U-Statistic and Autocorrelation)}
\vspace{10pt}

\subsection{U-Statistic across domains for Gemma-2-2B}

\begin{figure}[htb!]
    \centering
    \includegraphics[width=1\linewidth]{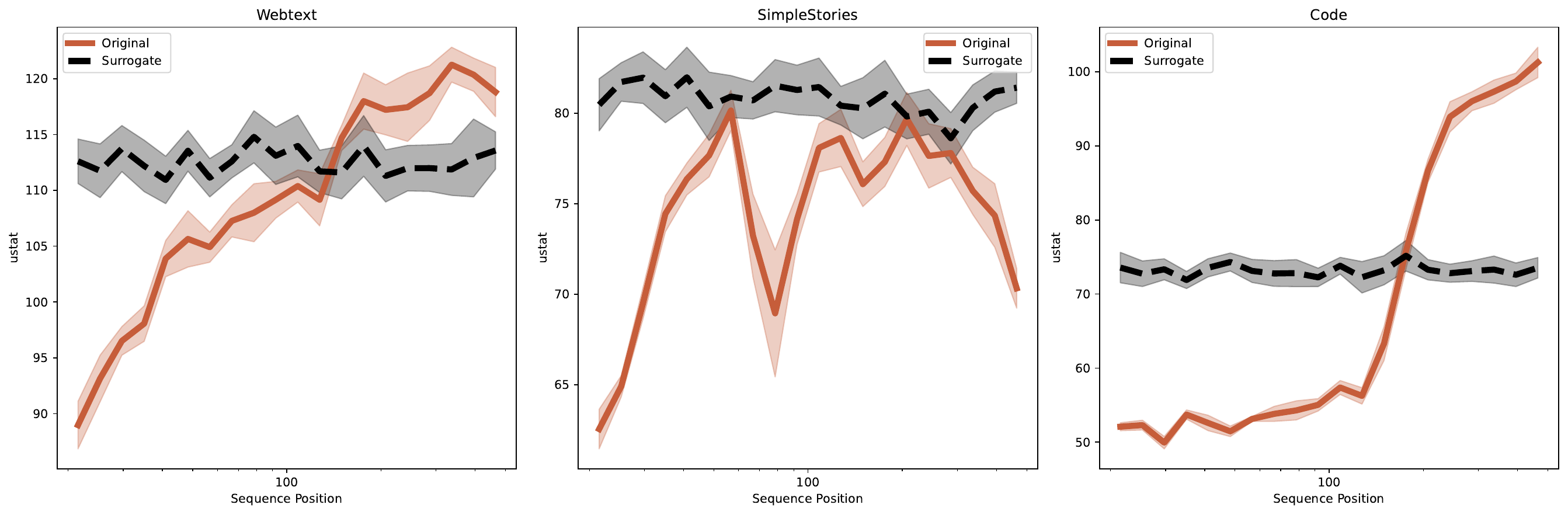}
    \caption{U-Statistic across domains for LLM activations, surrogate}
    \label{fig:ustat_gemma_alldata}
    \vspace{5pt}
\end{figure}

\subsection{Autocorrelation across domains for Gemma-2-2B}

\begin{figure}[htb!]
    \centering
    \includegraphics[width=1\linewidth]{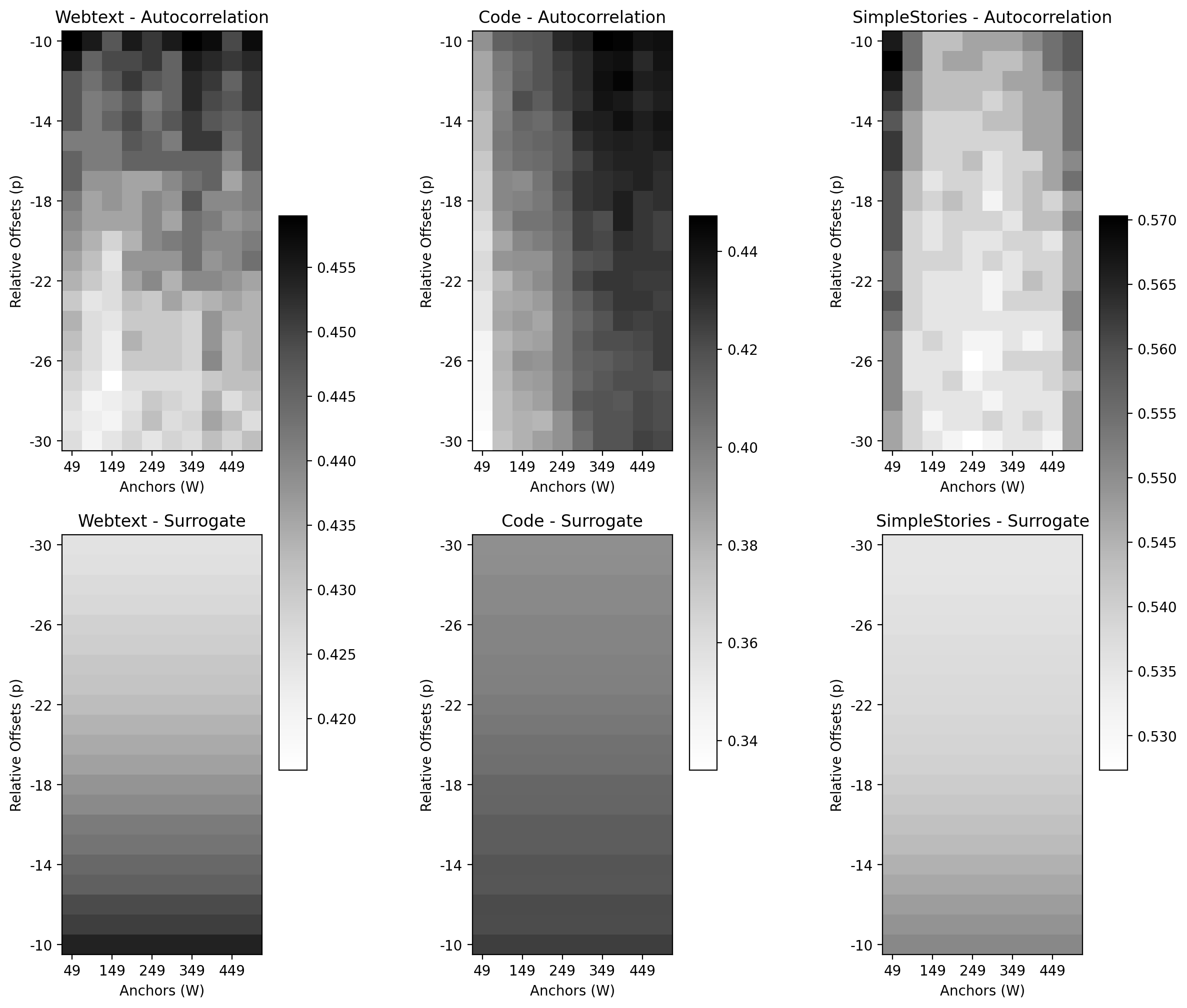}
    \caption{Autocorrelation of language model activations and a stationary surrogate across webtext, simple stories and code domains.}
    \label{fig:autocorr_gemma_alldata}
\end{figure}

\clearpage
\section{More Gemma Results}
\label{appsection:gemma_results}

\subsection{Hierarchical clustering of codes from story tokens}

\subsubsection{Story 1}
\vspace{10pt}

\begin{figure}[htb!]
    \centering
    \includegraphics[width=\linewidth]{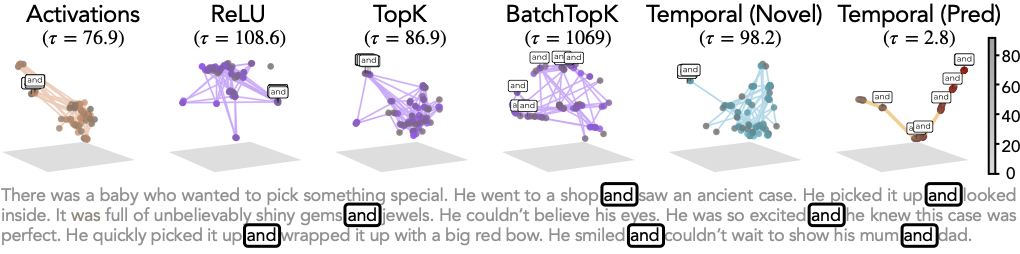}
    \caption{\textbf{Geometry.} Repeating results of Fig.~\ref{fig:ind_story_geometry}, we again find smooth trajectories in UMAP projections for Temporal Feature Analysis.
    \vspace{10pt}
    }
    \label{fig:story_geometry_1}
\end{figure}

\begin{figure}[htb!]
    \centering
    \includegraphics[width=\linewidth]{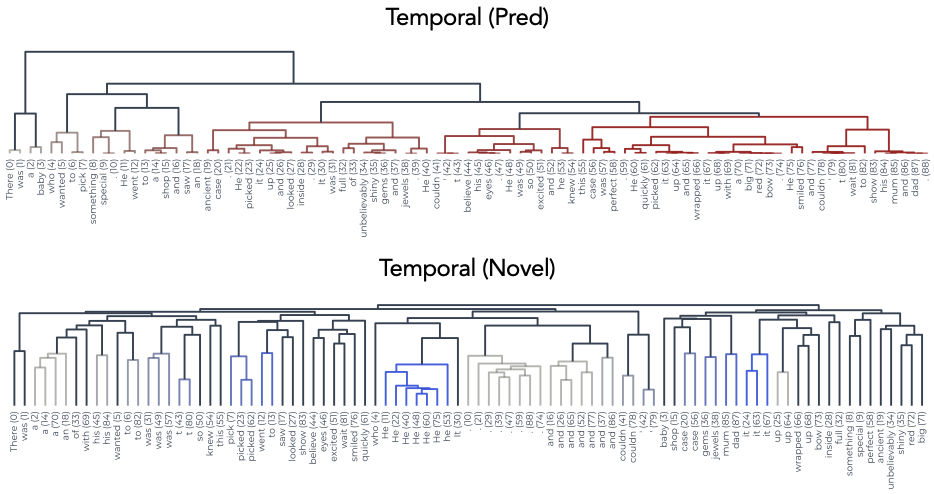}
    \caption{\textbf{Dendrograms of Predictive and Novel Components from Temporal Feature Analysis.} Reproduction of the results from Fig.~\ref{fig:ind_story_geometry}b, but including the novel component's dendrogram as well.
    \vspace{10pt}
    }
    \label{fig:story_temporal_dendro_1}
\end{figure}

\begin{figure}[htb!]
    \centering
    \includegraphics[width=\linewidth]{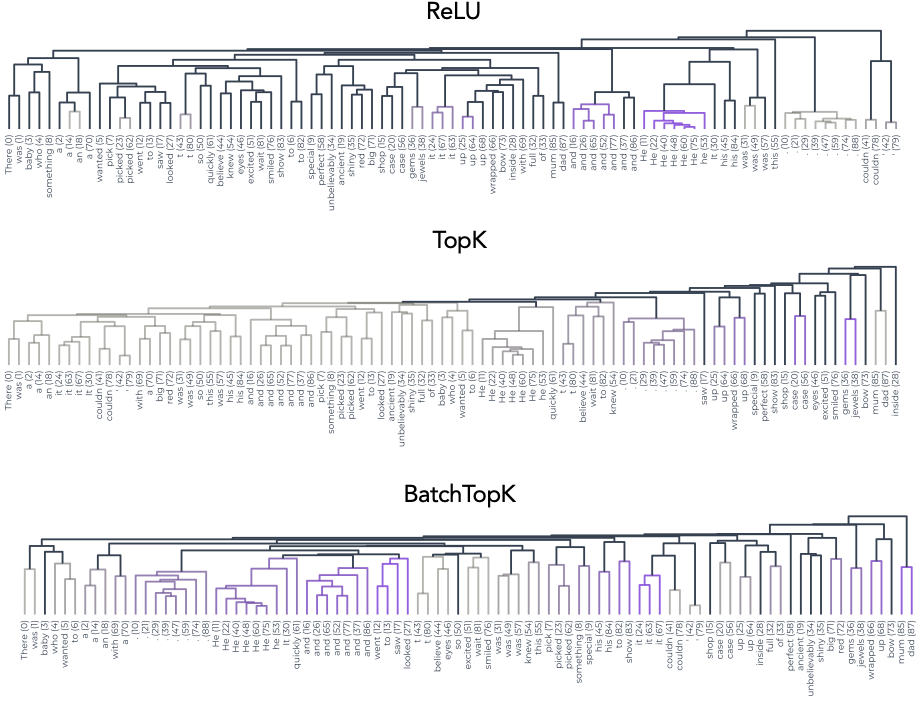}
    \caption{\textbf{Dendrograms of Standard SAEs' Latents Codes.} Reproduction of the results from Fig.~\ref{fig:ind_story_geometry}b, but for latent codes extracted using standard SAEs.
    \vspace{10pt}
    }
    \label{fig:story_standard_dendro_1}
\end{figure}

\clearpage
\subsubsection{Story 2}
\vspace{10pt}

\begin{figure}[htb!]
    \centering
    \includegraphics[width=\linewidth]{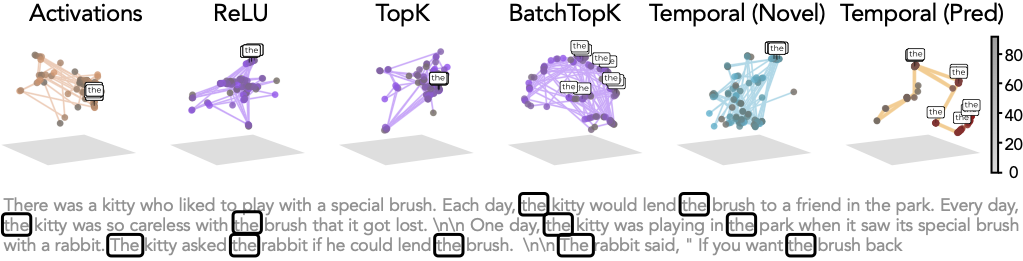}
    \caption{\textbf{Geometry.} Repeating results of Fig.~\ref{fig:ind_story_geometry} on a different story, we again find smooth trajectories in UMAP projections for Temporal Feature Analysis.
    \vspace{10pt}
    }
    \label{fig:story_geometry_2}
\end{figure}

\begin{figure}[htb!]
    \centering
    \includegraphics[width=\linewidth]{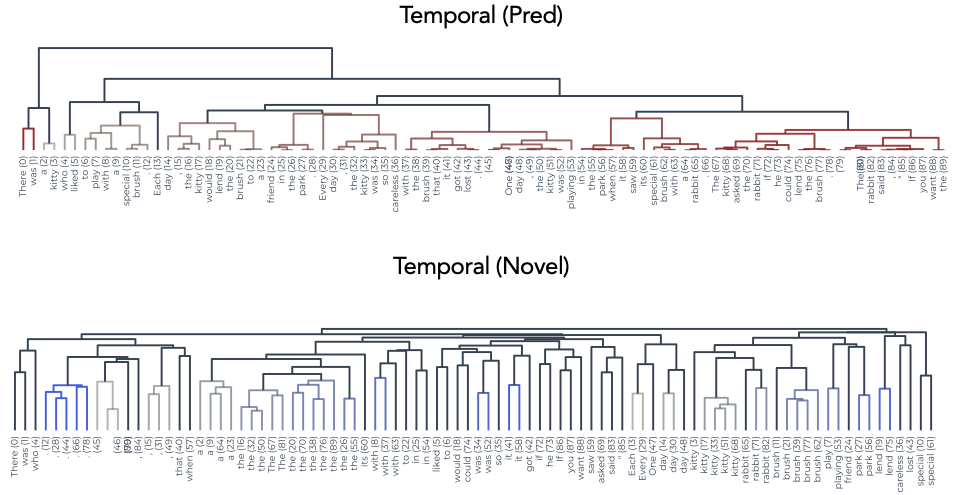}
    \caption{\textbf{Dendrograms of Predictive and Novel Components from Temporal Feature Analysis.} Reproduction of the results from Fig.~\ref{fig:ind_story_geometry}b, but on a different story and with both the predictive and novel component.
    \vspace{10pt}
    }
    \label{fig:story_temporal_dendro_2}
\end{figure}

\begin{figure}[htb!]
    \centering
    \includegraphics[width=\linewidth]{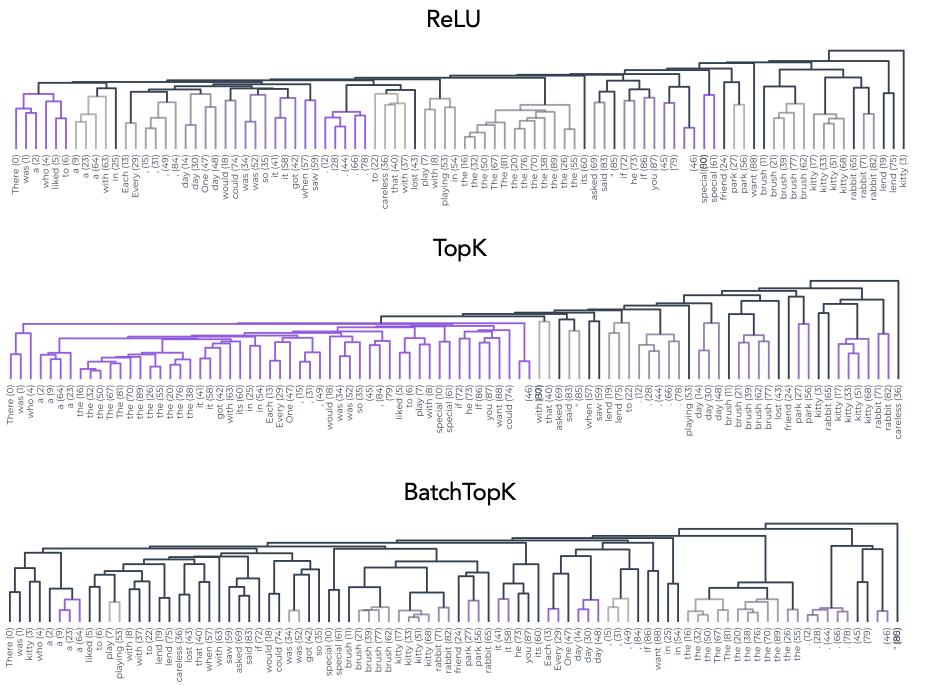}
    \caption{\textbf{Dendrograms of Standard SAEs' Latents Codes.} Reproduction of the results from Fig.~\ref{fig:ind_story_geometry}b, but for latent codes extracted using standard SAEs on a different story.
    \vspace{10pt}
    }
    \label{fig:story_standard_dendro_2}
\end{figure}

\clearpage
\subsubsection{Story 3}
\vspace{10pt}

\begin{figure}[htb!]
    \centering
    \includegraphics[width=\linewidth]{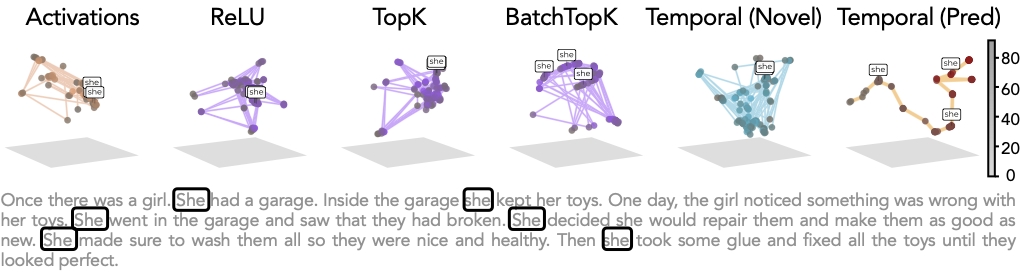}
    \caption{\textbf{Geometry.} Repeating results of Fig.~\ref{fig:ind_story_geometry} on a different story, we again find smooth trajectories in UMAP projections for Temporal Feature Analysis.
    \vspace{10pt}
    }
    \label{fig:story_geometry_3}
\end{figure}

\begin{figure}[htb!]
    \centering
    \includegraphics[width=\linewidth]{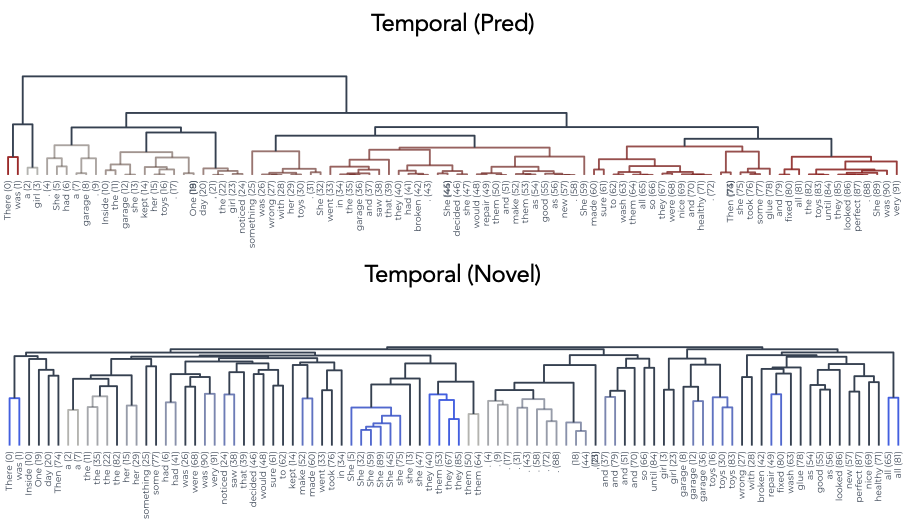}
    \caption{\textbf{Dendrograms of Predictive and Novel Components from Temporal Feature Analysis.} Reproduction of the results from Fig.~\ref{fig:ind_story_geometry}b, but on a different story and with both the predictive and novel component.
    \vspace{10pt}
    }
    \label{fig:story_temporal_dendro_3}
\end{figure}

\begin{figure}[htb!]
    \centering
    \includegraphics[width=\linewidth]{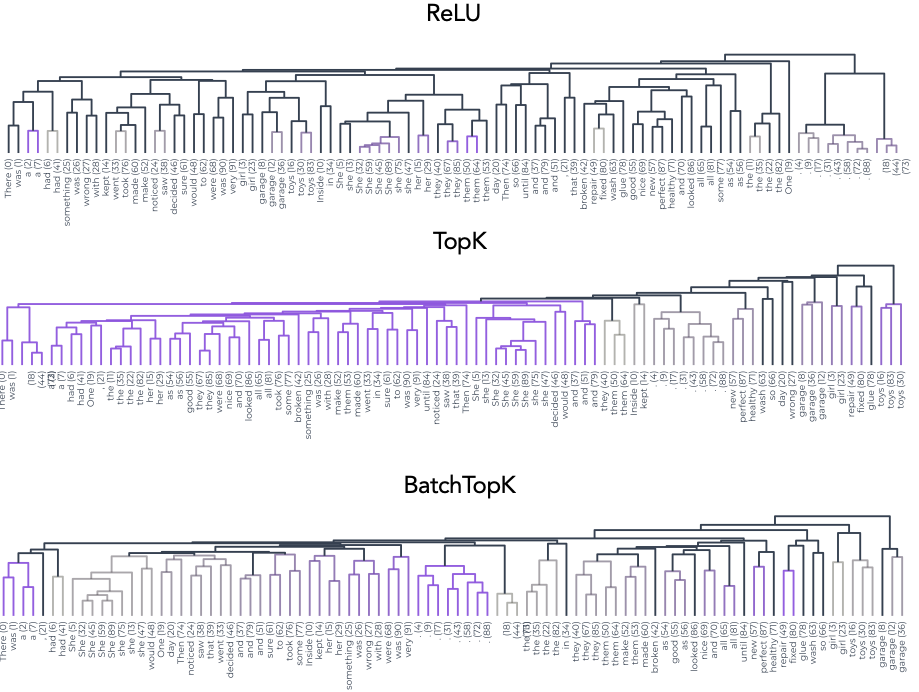}
    \caption{\textbf{Dendrograms of Standard SAEs' Latents Codes.} Reproduction of the results from Fig.~\ref{fig:ind_story_geometry}b, but for latent codes extracted using standard SAEs on a different story.
    \vspace{10pt}
    }
    \label{fig:story_standard_dendro_3}
\end{figure}

\clearpage
\subsection{Code: Analyzing Another Domain}
\vspace{10pt}

\begin{figure}[htb!]
    \centering
    \includegraphics[width=\linewidth]{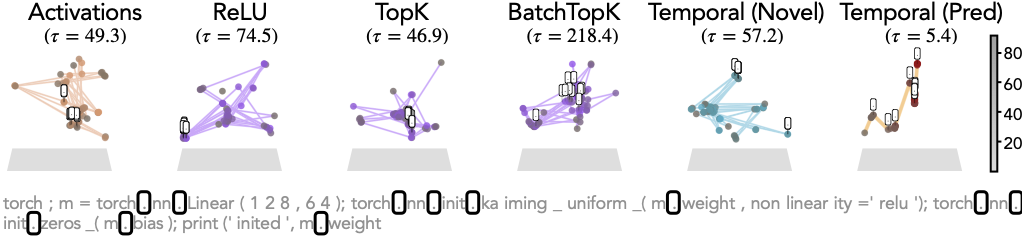}
    \caption{\textbf{Code.} Reproducing the results of Fig.~\ref{fig:ind_story_geometry}, but on a different domain, i.e., code. We again see a temporally disentangled, smoothly running trajectory for latent codes extracted using the predictive component of Temporal Feature Analysis. 
    \vspace{10pt}
    }
    \label{fig:app_code_geometry}
\end{figure}

\begin{figure}[htb!]
    \centering
    \includegraphics[width=\linewidth]{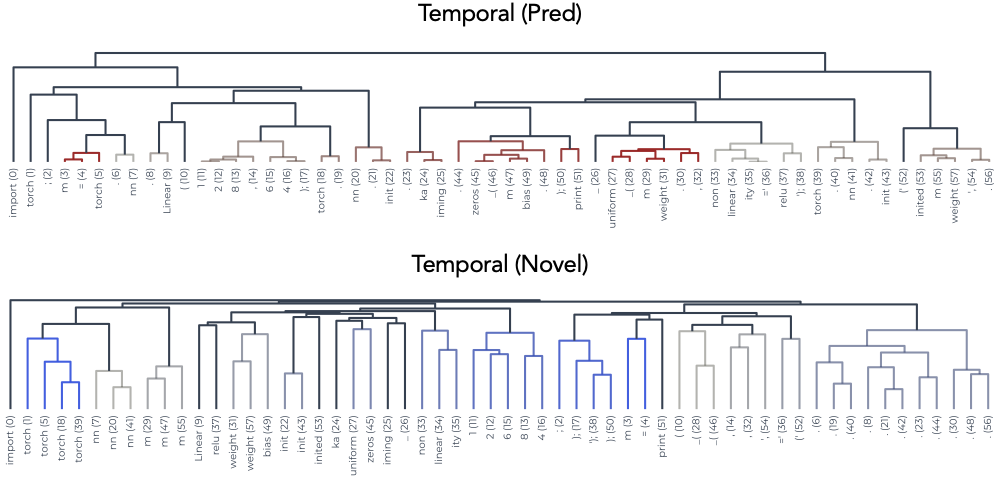}
    \caption{\textbf{Dendrograms of Predictive and Novel Components from Temporal Feature Analysis.} Reproduction of the results from Fig.~\ref{fig:ind_story_geometry}b, but on a different domain shows similar results as with narrative-driven text (stories) for both the predictive and novel component.
    \vspace{10pt}
    }
    \label{fig:app_code_temporal}
\end{figure}

\begin{figure}[htb!]
    \centering
    \includegraphics[width=\linewidth]{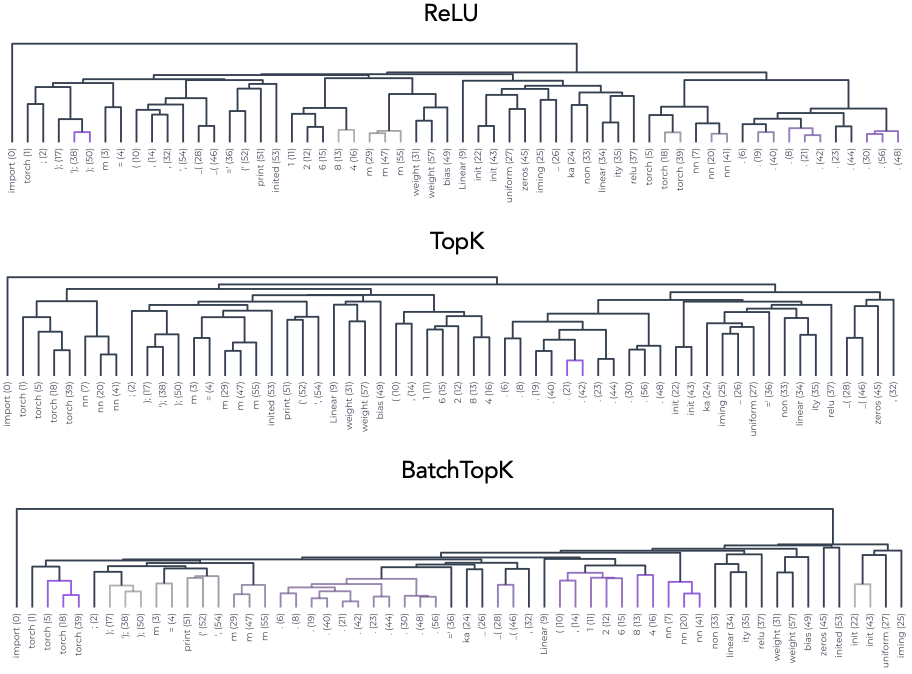}
    \caption{\textbf{Dendrograms of Standard SAE Latent Codes.} Reproduction of the results from Fig.~\ref{fig:ind_story_geometry}b, but on a different domain shows similar results as with narrative-driven text (stories) for standard SAEs.
    \vspace{10pt}
    }
    \label{fig:code_standard_story_graphs}
\end{figure}

\clearpage
\subsection{Further Results: Similarity Maps on Stories}

\begin{figure}[htb!]
    \centering
    \includegraphics[width=\linewidth]{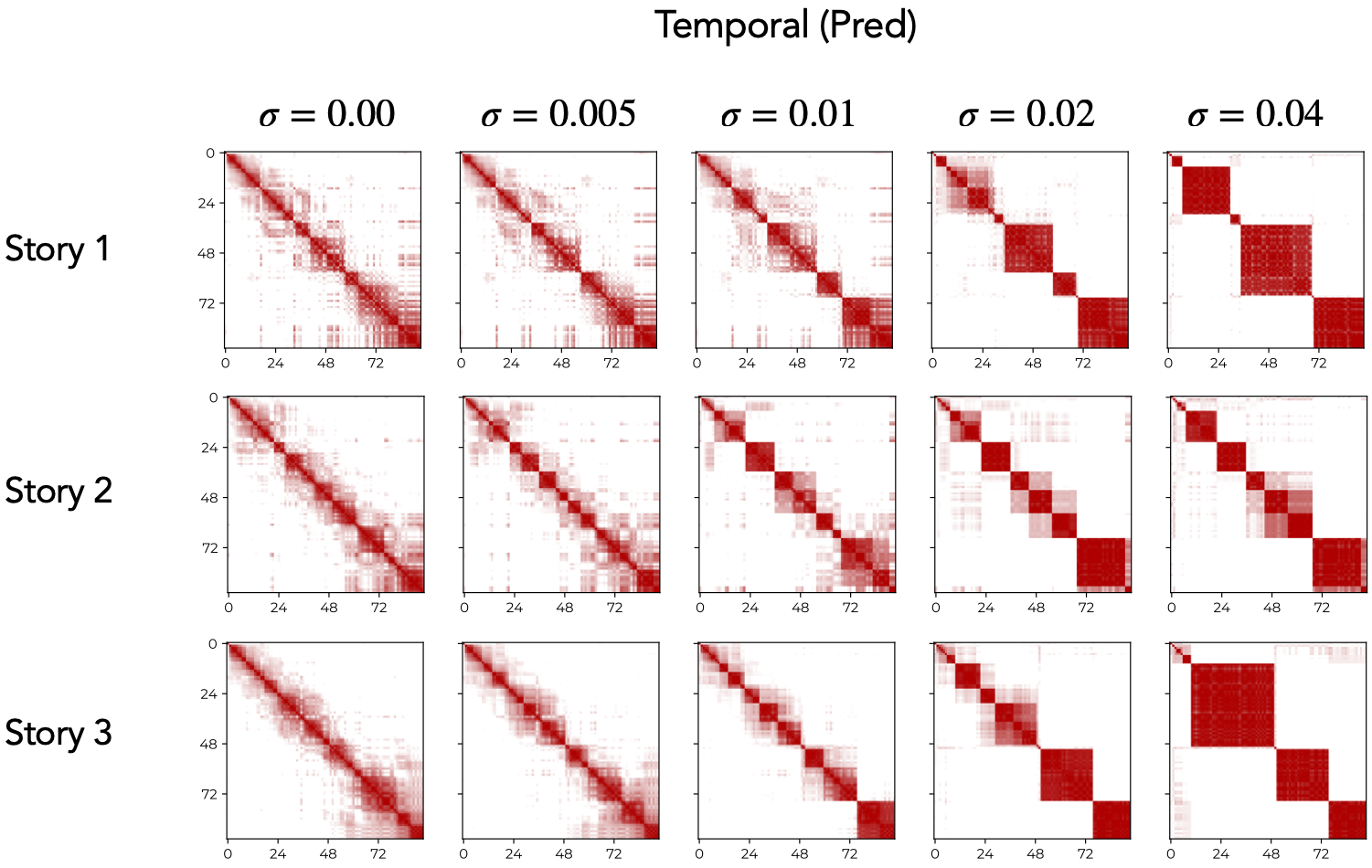}
    \caption{\textbf{Temporal (Pred) Similarity Maps Elicit Multi-Scale Structure with Noise.} Repeating the analysis shown in Fig.~\ref{fig:stories}, we find the coarsening of temporal blocks is a robust result that continues to hold for different inputs. 
    \vspace{10pt}
    }
    \label{fig:temporal_sim_and_noise}
\end{figure}

\begin{figure}[htb!]
    \centering
    \includegraphics[width=\linewidth]{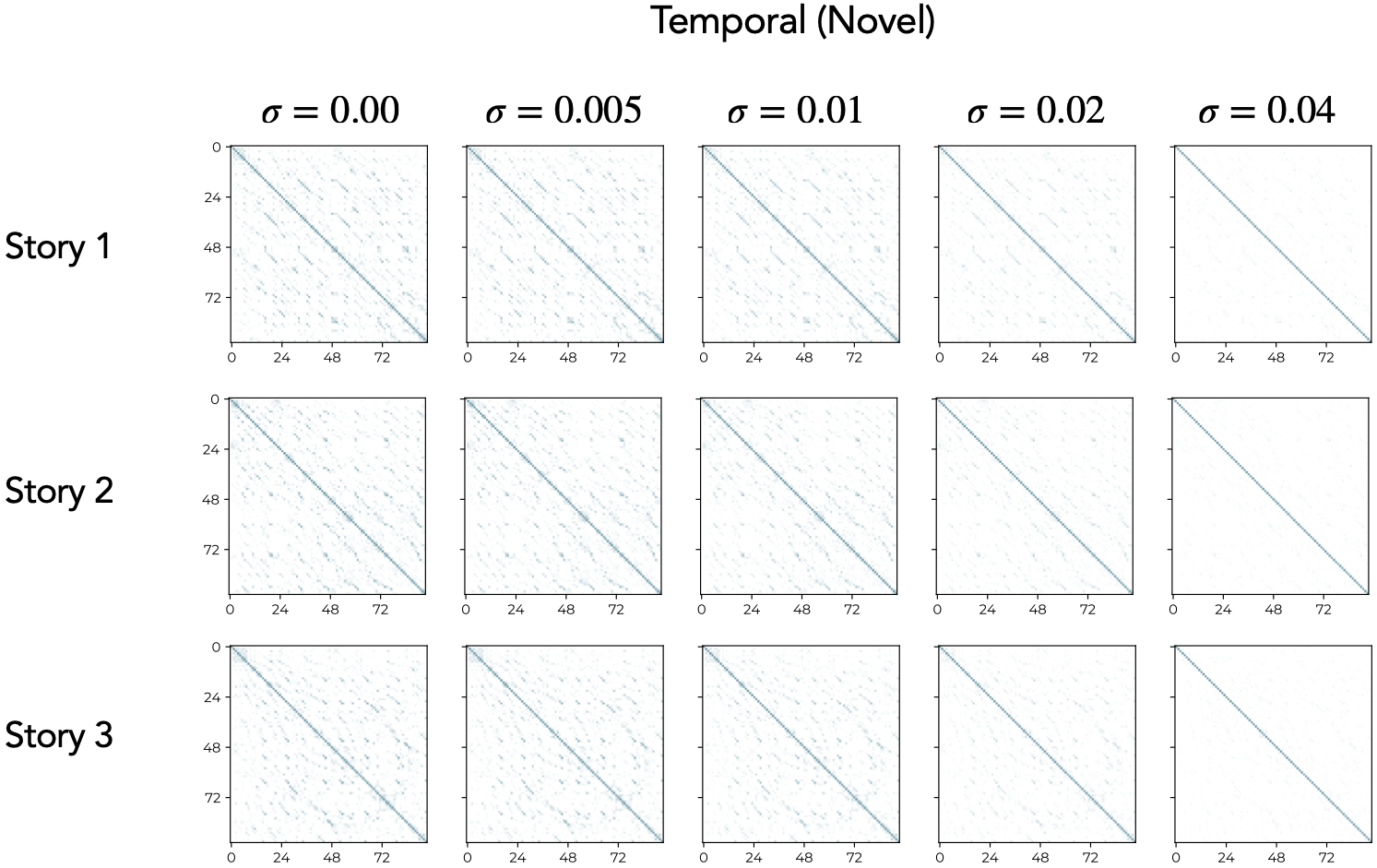}
    \caption{\textbf{Temporal (Novel) Similarity Maps under Noise.} Repeating the analysis shown in Fig.~\ref{fig:stories}, we find the novel component is only able to capture minimal local similarities, which when analyzed via dendrograms, show clustering based on lexical information.
    \vspace{10pt}
    }
    \label{fig:novel_sim_and_noise}
\end{figure}

\begin{figure}[htb!]
    \centering
    \includegraphics[width=\linewidth]{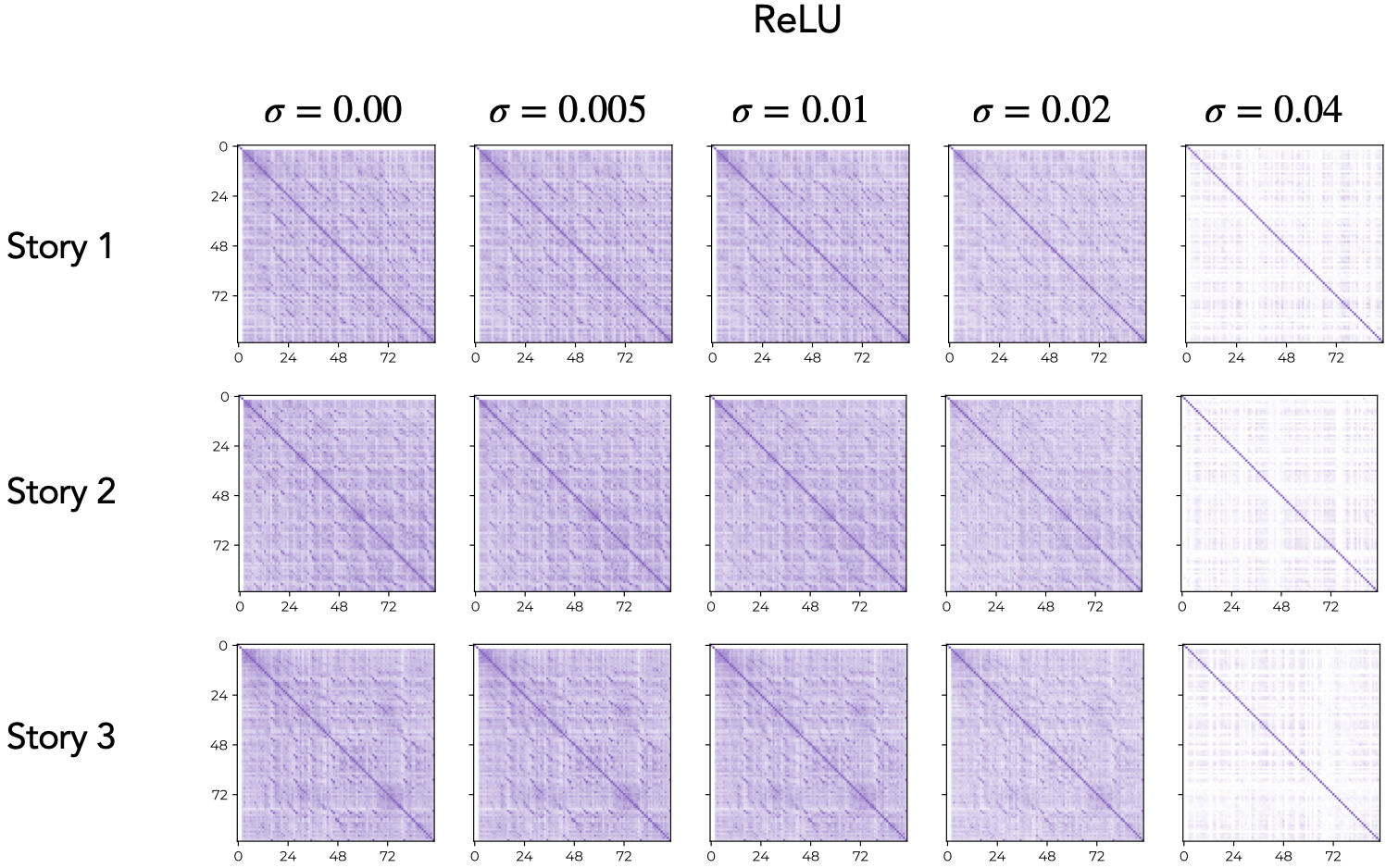}
    \caption{\textbf{ReLU Codes' Similarity Maps under Noise.} Repeating the analysis shown in Fig.~\ref{fig:stories} on ReLU SAEs, we find the ReLU latent code has high similarity across the board, suggesting lack of meaningful temporal information. This similarity is entirely removed when noise scale increases too much, which, as shown in Fig.~\ref{fig:stories}c, corresponds to the point that variance explained by ReLU SAEs drops to $\sim$0.
    \vspace{10pt}
    }
    \label{fig:relu_sim_and_noise}
\end{figure}

\begin{figure}[htb!]
    \centering
    \includegraphics[width=\linewidth]{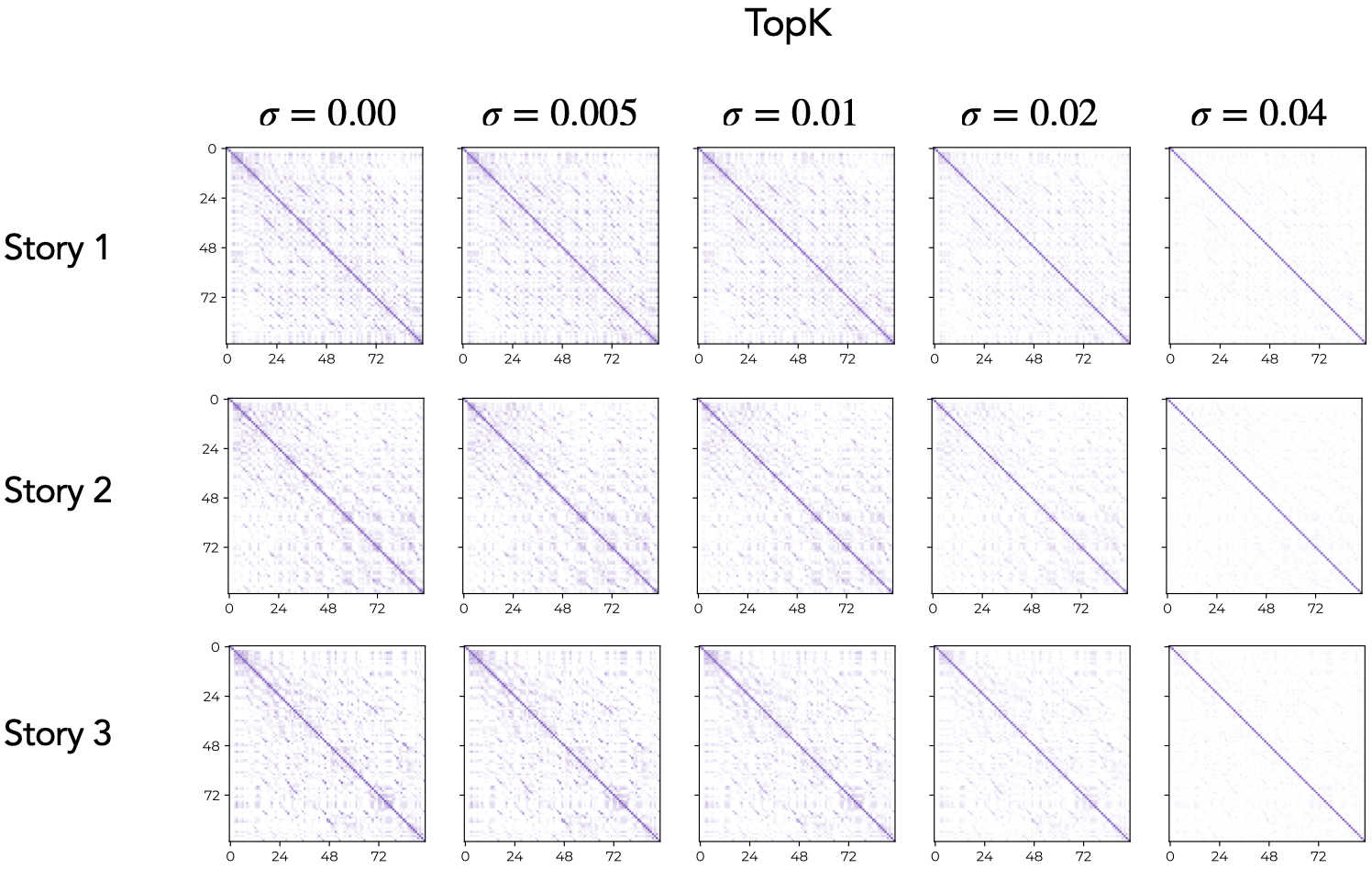}
    \caption{\textbf{TopK Codes' Similarity Maps under Noise.} Repeating the analysis shown in Fig.~\ref{fig:stories}, we find TopK SAE's latent codes are only able to capture minimal local similarities, which when analyzed via dendrograms, show clustering based on lexical information. Akin to ReLU SAEs, we see this similarity map approximately turns into an identity matrix when the noise scale increases too much, which, as shown in Fig.~\ref{fig:stories}c, corresponds to the point that variance explained by TopK SAEs drops to $\sim$0.
    \vspace{10pt}
    }
    \label{fig:topk_sim_and_noise}
\end{figure}

\begin{figure}[htb!]
    \centering
    \includegraphics[width=\linewidth]{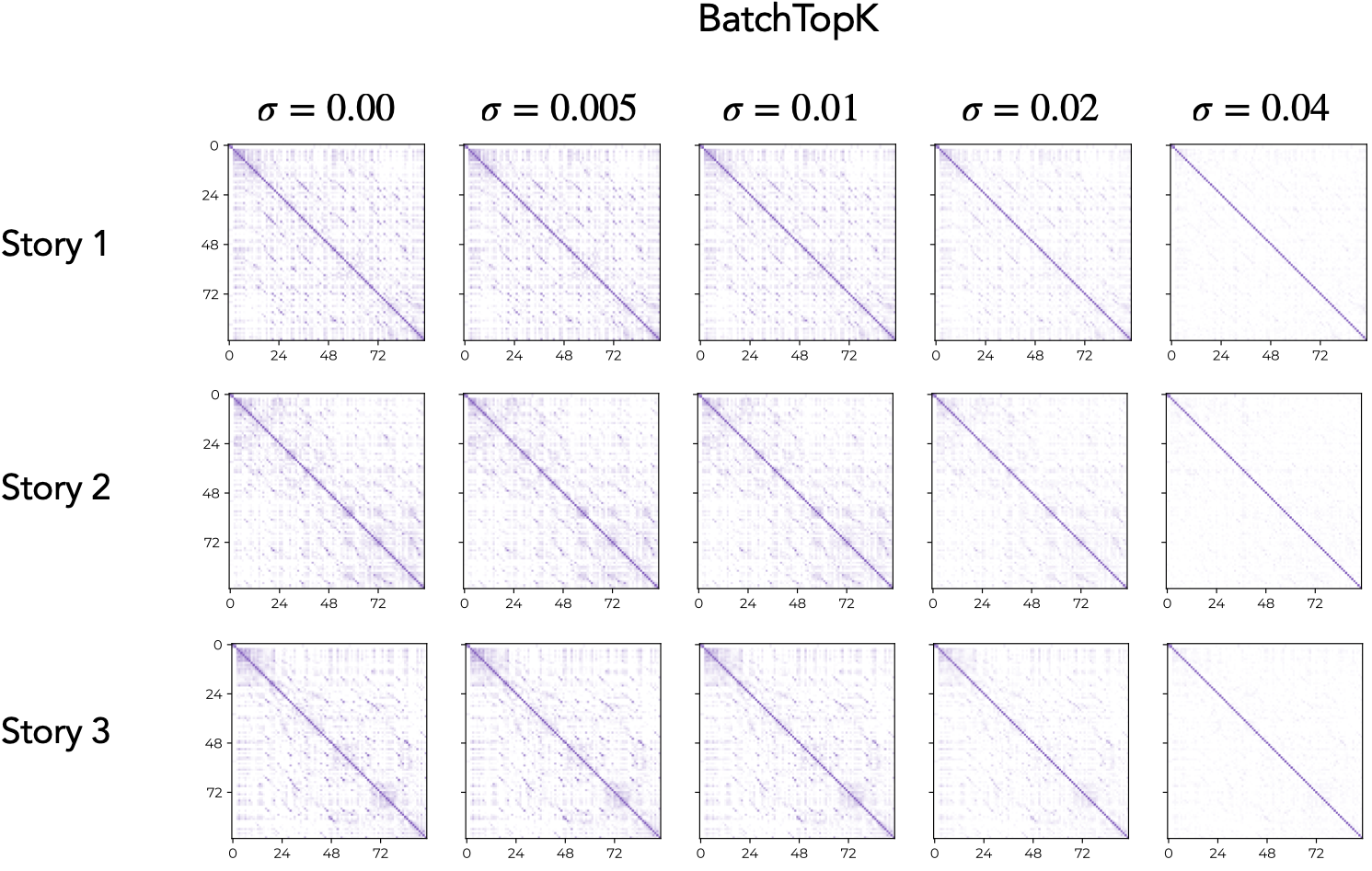}
    \caption{\textbf{BatchTopK Codes' Similarity Maps under Noise.} Repeating the analysis shown in Fig.~\ref{fig:stories}, we find BatchTopK SAE's latent codes are only able to capture minimal local similarities, which when analyzed via dendrograms, show clustering based on lexical information. Akin to ReLU SAEs, we see this similarity map approximately turns into an identity matrix when the noise scale increases too much, which, as shown in Fig.~\ref{fig:stories}c, corresponds to the point that variance explained by BatchTopK SAEs drops to $\sim$0.
    \vspace{10pt}
    }
    \label{fig:batchtop_sim_and_noise}
\end{figure}

\clearpage
\subsection{Further Results: Dendrograms and Evaluations on Garden Path Sentences}
\label{app:gemma_gp}

\subsubsection{Dendrograms}

\begin{figure}[htb!]
    \centering
    \includegraphics[width=\linewidth]{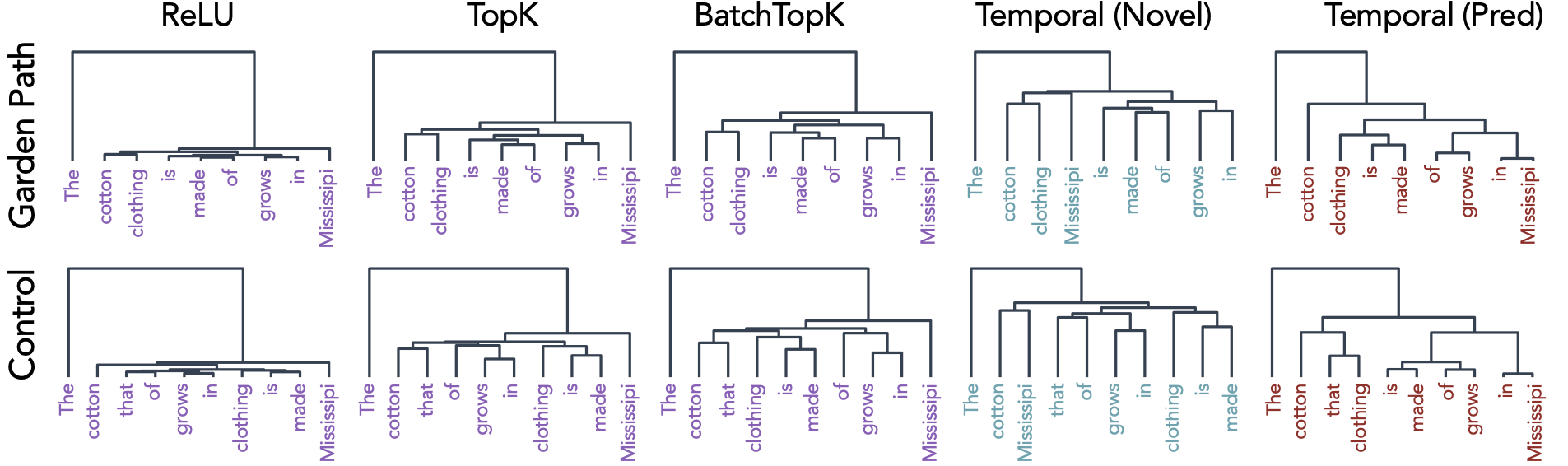}
    \caption{\textbf{Example Sentence 1.} Repeating the results of Fig.~\ref{fig:garden_path}a on a different garden path sentence.
    \vspace{10pt}
    }
    \label{fig:gemma_gp_dendrograms_base}
\end{figure}

\begin{figure}[htb!]
    \centering
    \includegraphics[width=\linewidth]{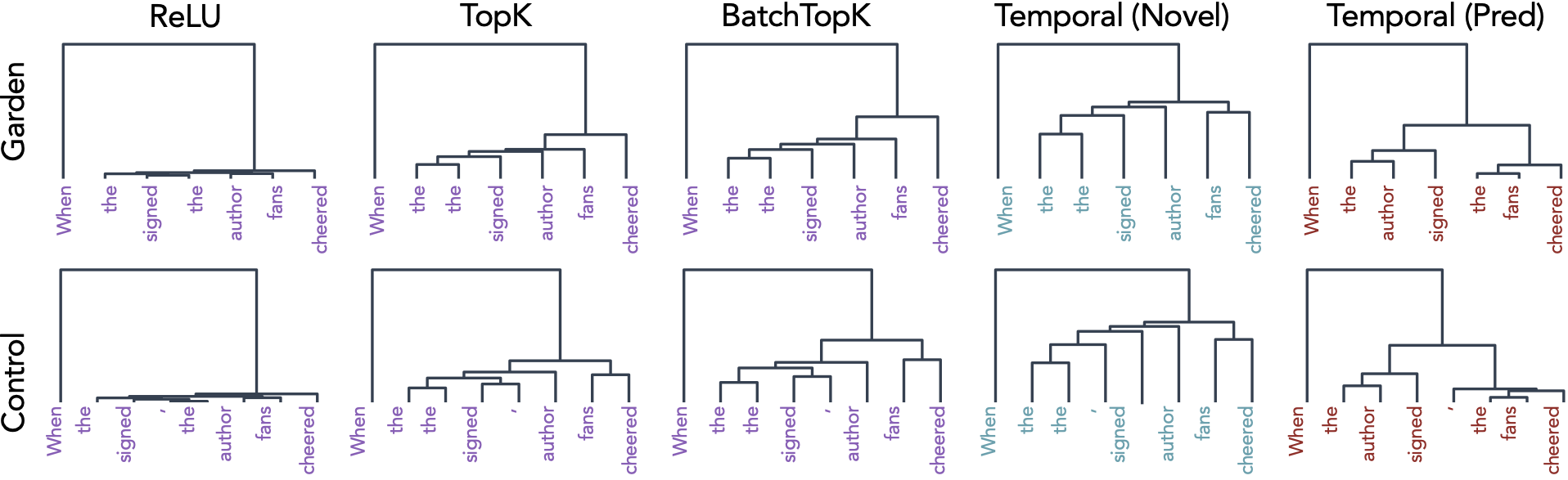}
    \caption{\textbf{Example Sentence 2.} Repeating the results of Fig.~\ref{fig:garden_path}a on a different garden path sentence.
    \vspace{10pt}
    }
    \label{fig:gemma_gp_dendrograms_comma}
\end{figure}

\begin{figure}[htb!]
    \centering
    \includegraphics[width=\linewidth]{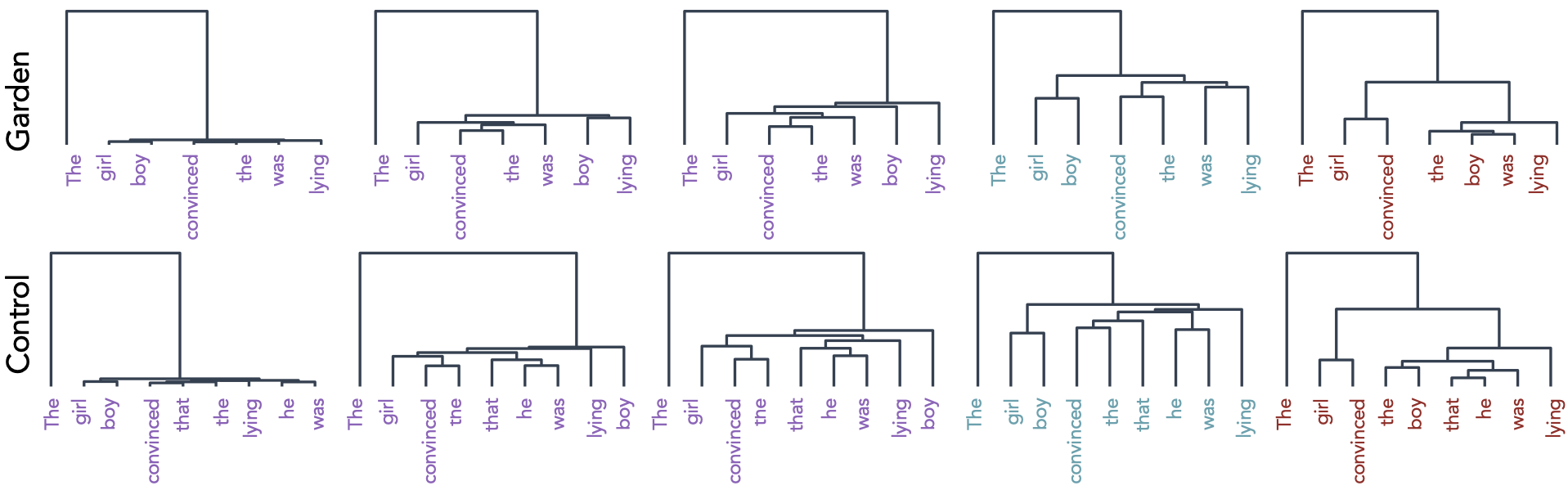}
    \caption{\textbf{Example Sentence 3.} Repeating the results of Fig.~\ref{fig:garden_path}a on a different garden path sentence.
    \vspace{10pt}
    }
    \label{fig:gemma_gp_dendrograms_that}
\end{figure}

\clearpage
\subsection{Further Results: Counterfactual Generation with In-Context Representations}

\begin{figure}[htb!]
    \centering
    \includegraphics[width=\linewidth]{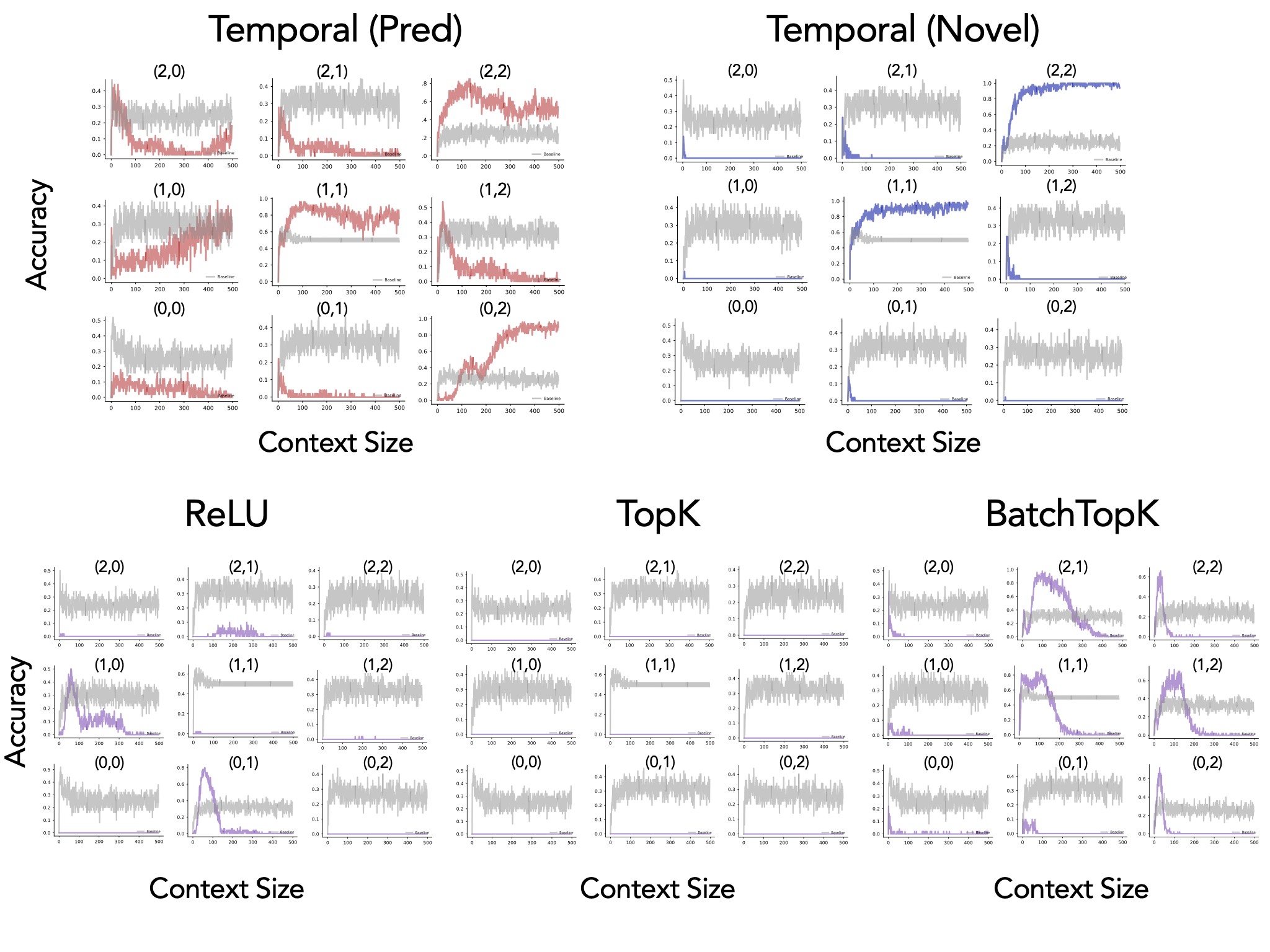}
    \caption{\textbf{Counterfactuals.} Counterfactual analysis of performing belief update intervention for all latent codes. We see essentially no SAE offers the ability to alter model beliefs, i.e., altering where on the graph it currently believes it is and hence what next tokens it should produce. Temporal Feature Analysis shows success in this case: we find $\nicefrac{4}{9}$ nodes can be intervened upon using the predictive component, out of which, interestingly, 2 also respond to an intervention on the novel component.
    \vspace{10pt}
    }
    \label{fig:all_pred_counterfactuals}
\end{figure}

\clearpage
\section{Further Results: Temporal Feature Analysis on LLama-3.1-8B}

We replicate a subset of the experiments evaluating Temporal Feature Analysis on Llama-3.1-8B. The training protocol remains the same as that of Gemma-2-2B model: train on 1B token activations with similar normalization schema, but activations are harvested now from Layer 15 (i.e., at $\sim$50\% of model depth). We note that the trained SAEs are not finished training, and hence the following results are solely meant to be an impression of whether the qualitative trends observed with Gemma models generalize to another model class. 

\subsection{Geometry, Dendrograms, and Spectra}
\subsubsection{Story 1}
\vspace{10pt}

\begin{figure}[htb!]
    \centering
    \includegraphics[width=\linewidth]{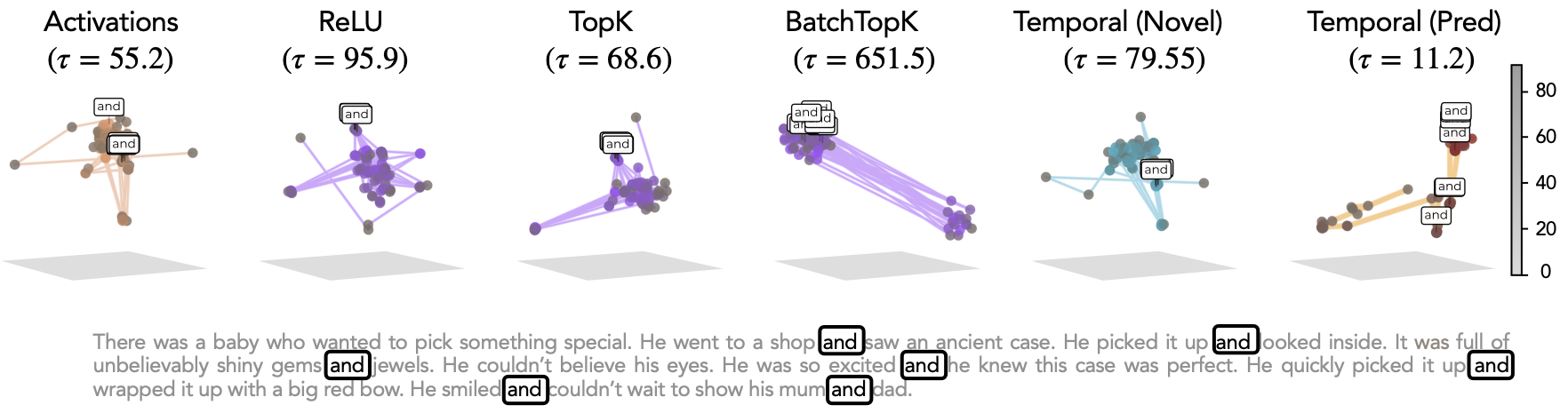}
    \caption{\textbf{Geometry.} Repeating results of Fig.~\ref{fig:ind_story_geometry}, we again find smooth trajectories in UMAP projections for Temporal Feature Analysis.
    \vspace{10pt}
    }
    \label{fig:llama_story_geometry_1}
\end{figure}

\begin{figure}[htb!]
    \centering
    \includegraphics[width=\linewidth]{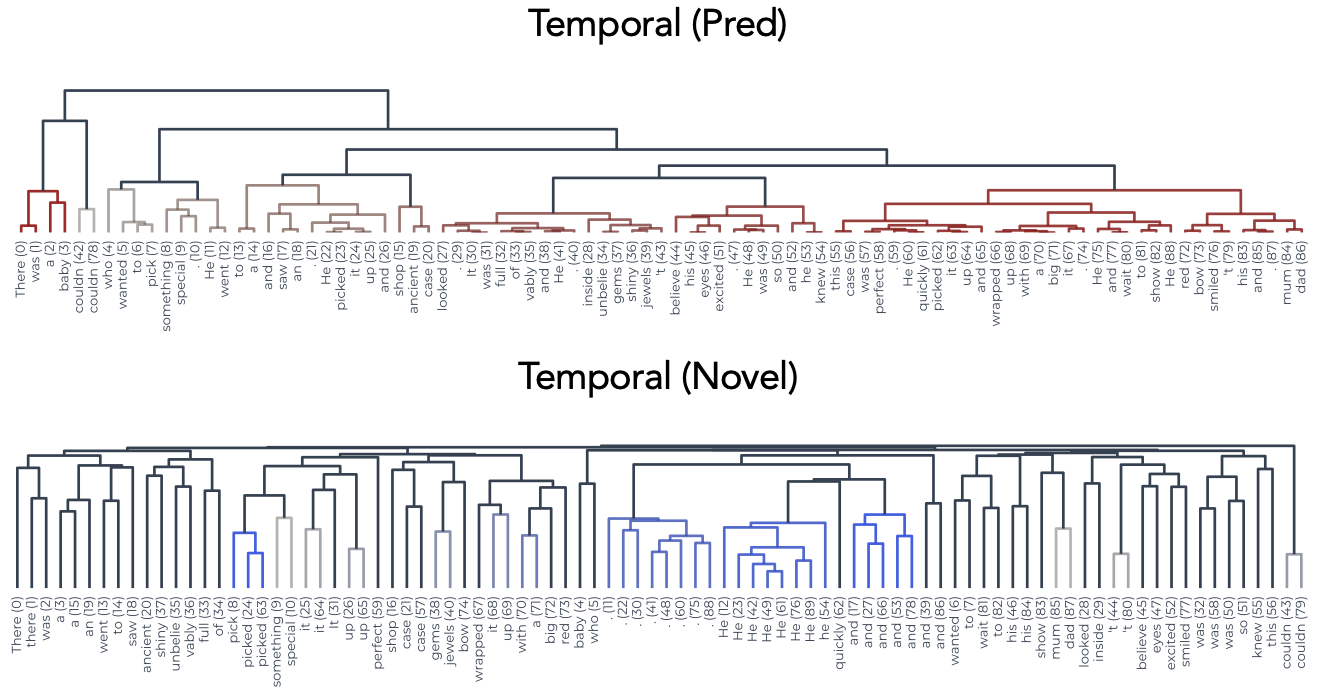}
    \caption{\textbf{Dendrograms of Predictive and Novel Components from Temporal Feature Analysis.} Reproduction of the results from Fig.~\ref{fig:ind_story_geometry}b, but including the novel component's dendrogram as well.
    \vspace{10pt}
    }
    \label{fig:llama_story_temporal_dendro_1}
\end{figure}

\begin{figure}[htb!]
    \centering
    \includegraphics[width=\linewidth]{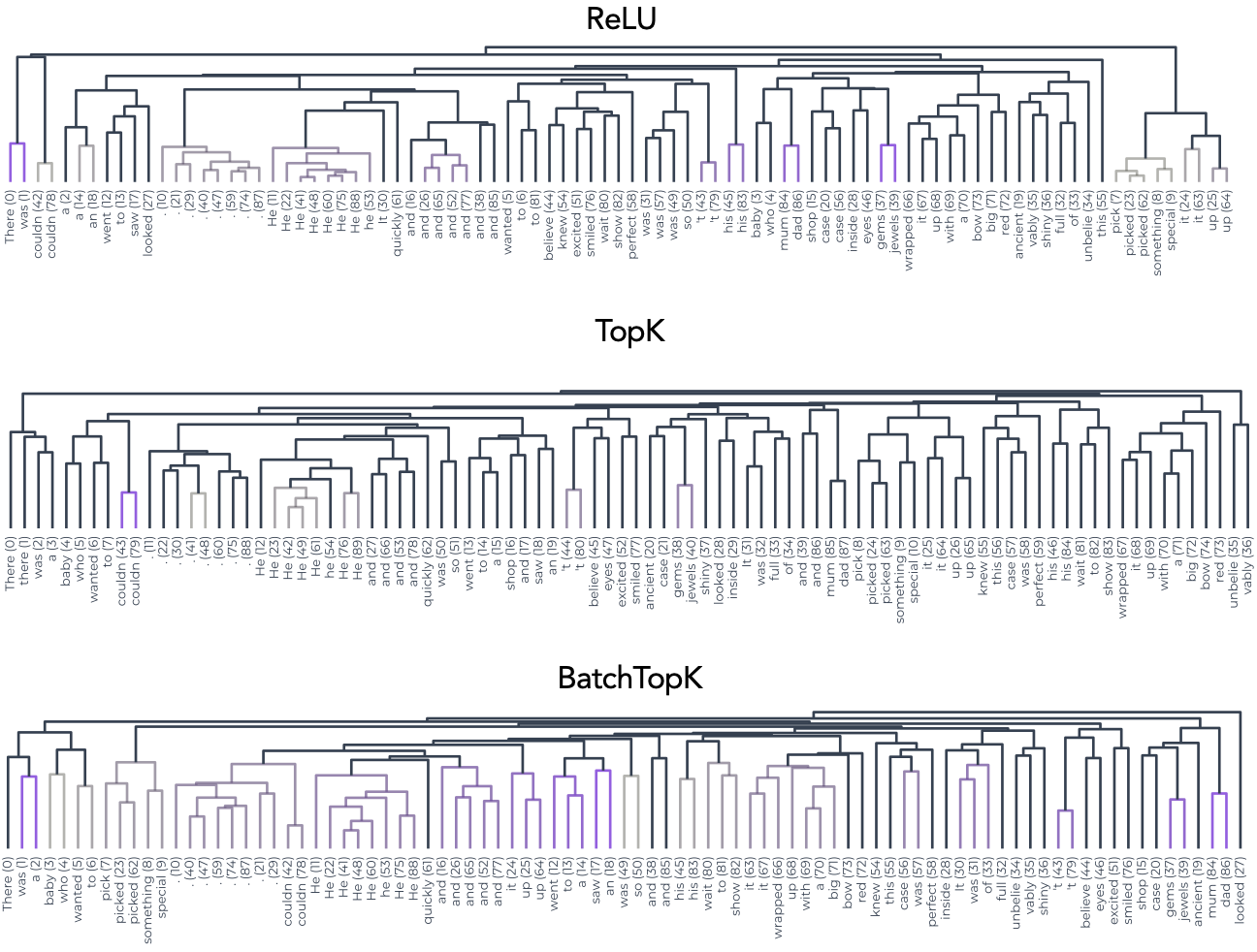}
    \caption{\textbf{Dendrograms of Standard SAEs' Latents Codes.} Reproduction of the results from Fig.~\ref{fig:ind_story_geometry}b, but for latent codes extracted using standard SAEs.
    \vspace{10pt}
    }
    \label{fig:llama_story_standard_dendro_1}
\end{figure}

\clearpage
\subsubsection{Story 2}
\vspace{10pt}

\begin{figure}[htb!]
    \centering
    \includegraphics[width=\linewidth]{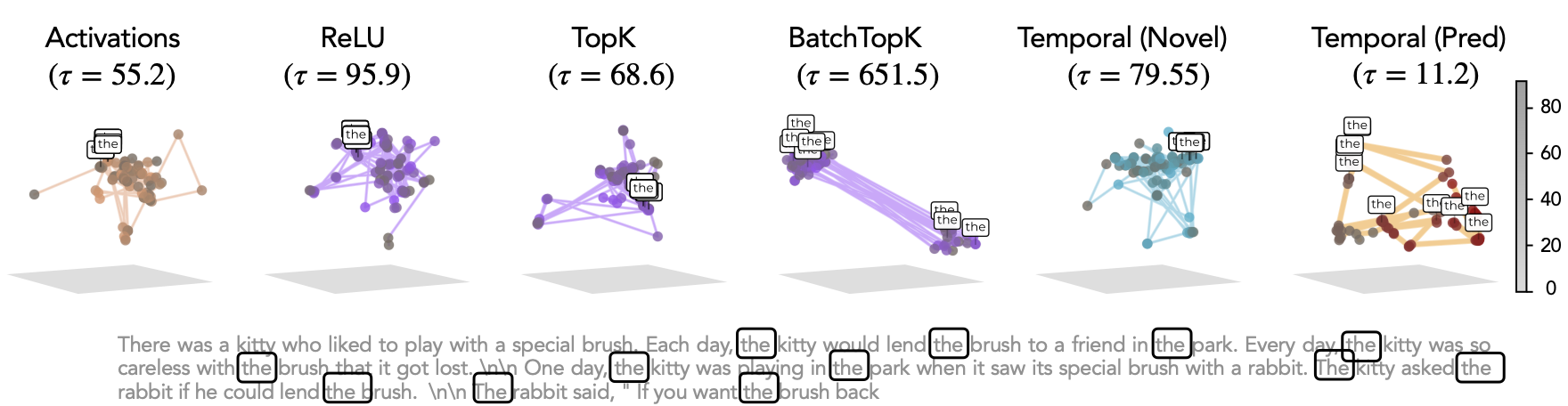}
    \caption{\textbf{Geometry.} Repeating results of Fig.~\ref{fig:ind_story_geometry}, we again find smooth trajectories in UMAP projections for Temporal Feature Analysis.
    \vspace{10pt}
    }
    \label{fig:llama_story_geometry_2}
\end{figure}

\begin{figure}[htb!]
    \centering
    \includegraphics[width=\linewidth]{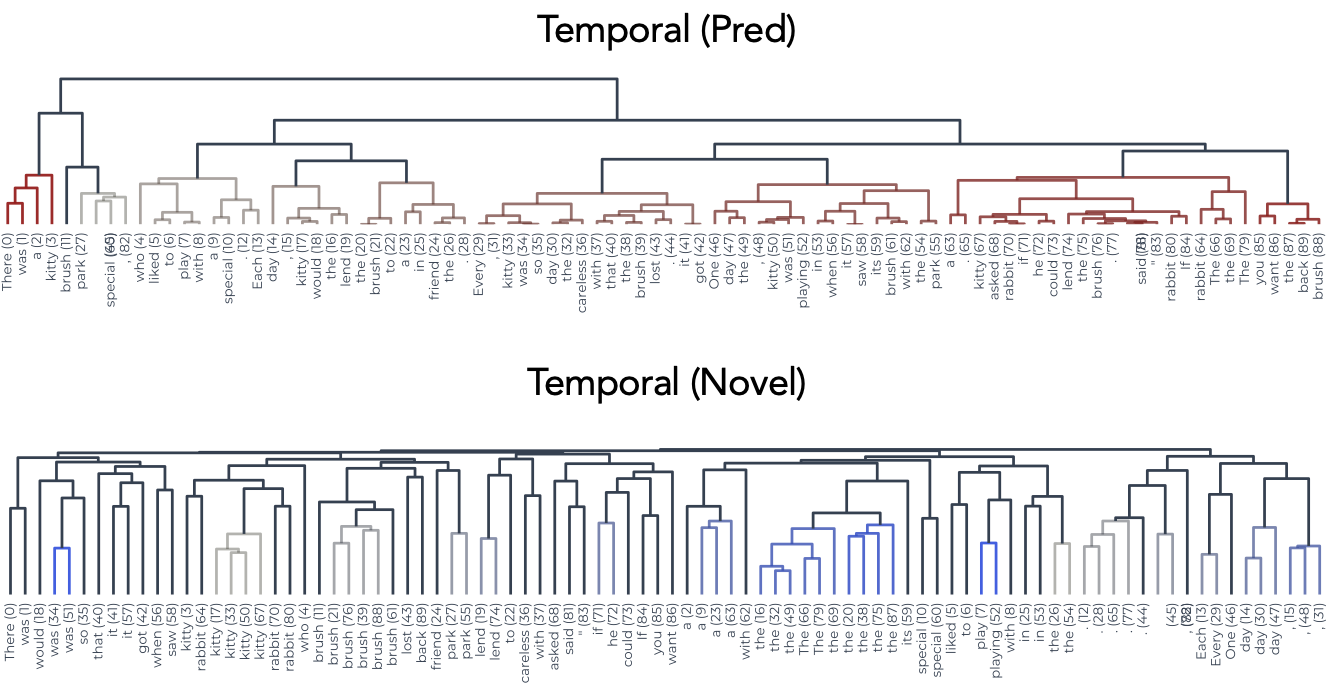}
    \caption{\textbf{Dendrograms of Predictive and Novel Components from Temporal Feature Analysis.} Reproduction of the results from Fig.~\ref{fig:ind_story_geometry}b, but including the novel component's dendrogram as well.
    \vspace{10pt}
    }
    \label{fig:llama_story_temporal_dendro_2}
\end{figure}

\begin{figure}[htb!]
    \centering
    \includegraphics[width=\linewidth]{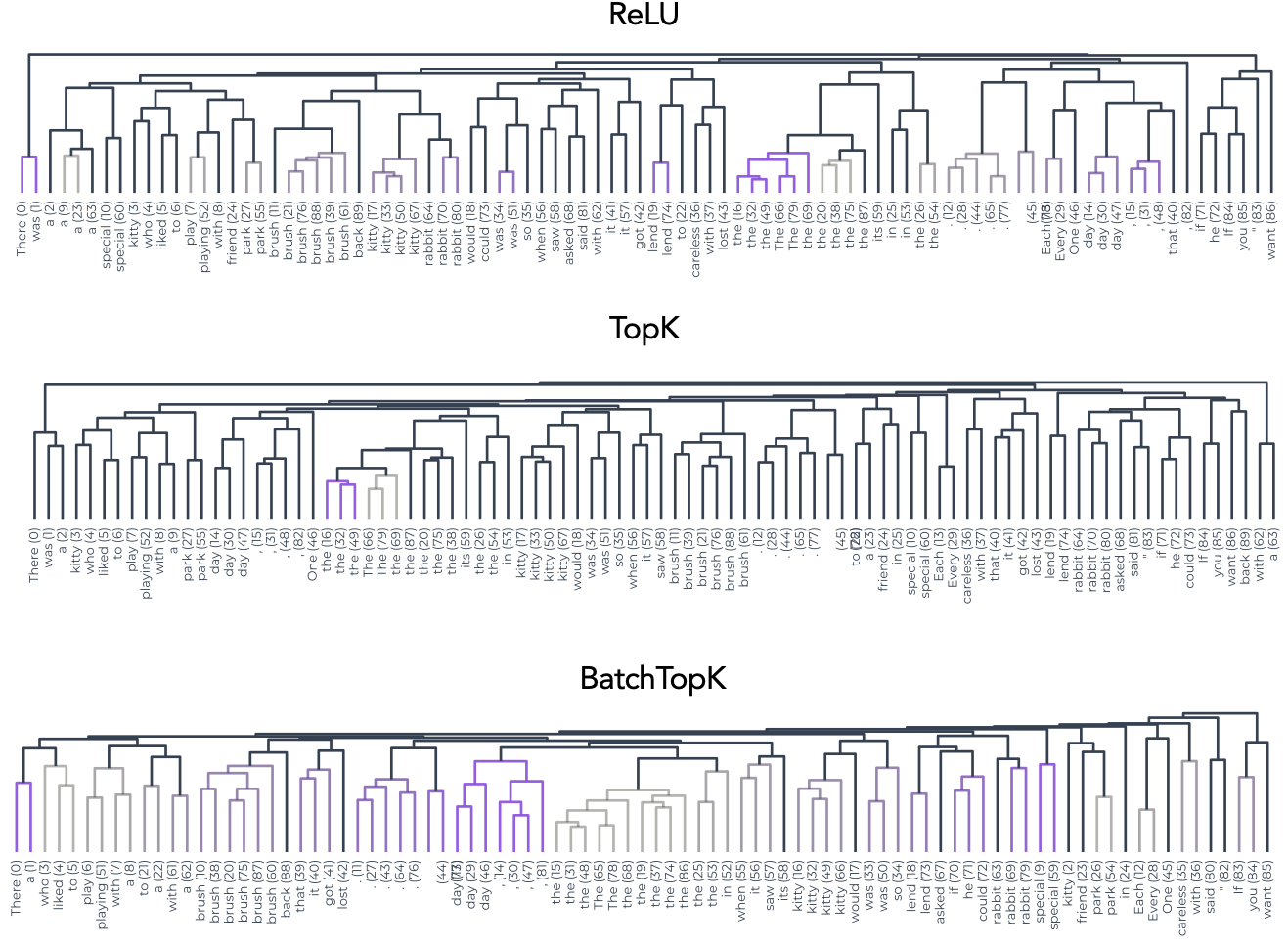}
    \caption{\textbf{Dendrograms of Standard SAEs' Latents Codes.} Reproduction of the results from Fig.~\ref{fig:ind_story_geometry}b, but for latent codes extracted using standard SAEs.
    \vspace{10pt}
    }
    \label{fig:llama_story_standard_dendro_2}
\end{figure}

\clearpage
\subsubsection{Story 3}
\vspace{10pt}

\begin{figure}[htb!]
    \centering
    \includegraphics[width=\linewidth]{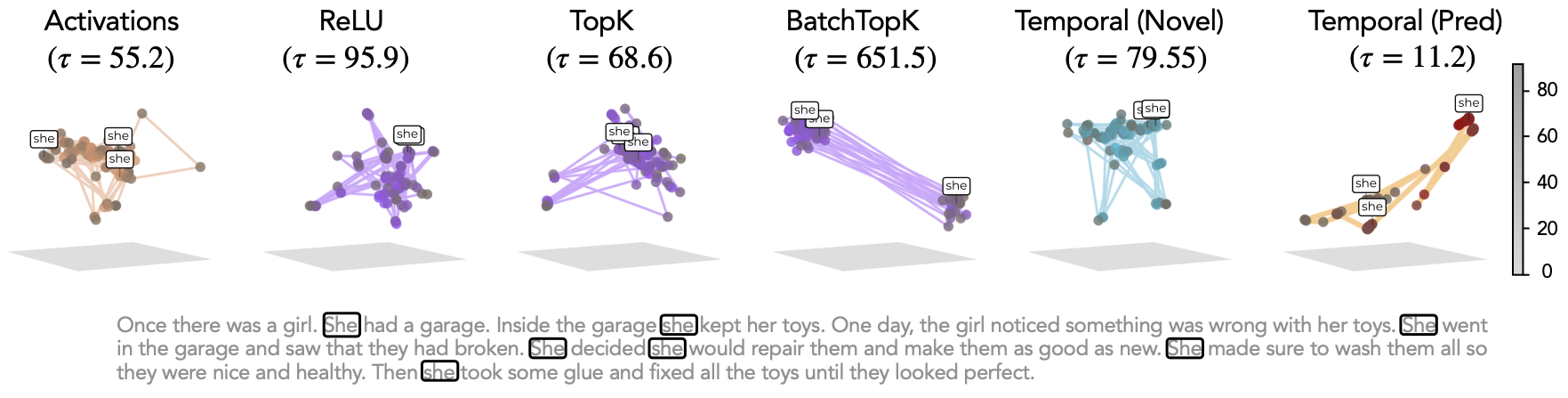}
    \caption{\textbf{Geometry.} Repeating results of Fig.~\ref{fig:ind_story_geometry}, we again find smooth trajectories in UMAP projections for Temporal Feature Analysis.
    \vspace{10pt}
    }
    \label{fig:llama_story_geometry_3}
\end{figure}

\begin{figure}[htb!]
    \centering
    \includegraphics[width=\linewidth]{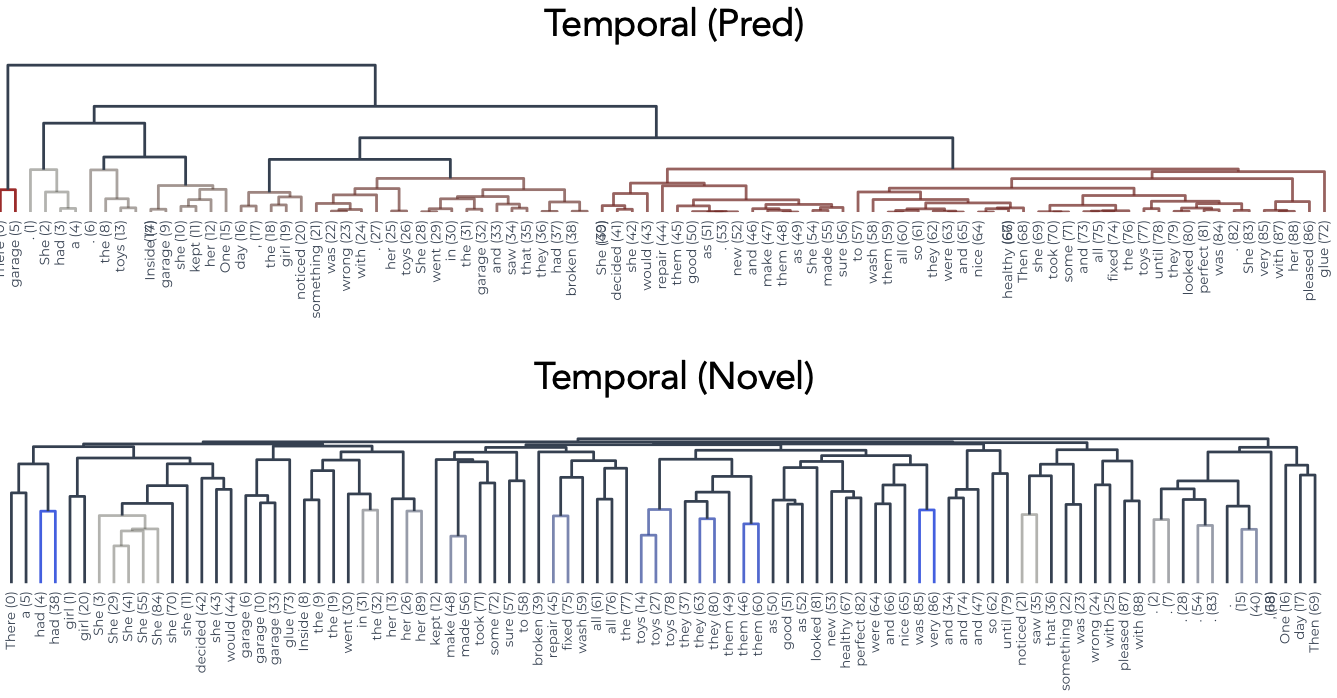}
    \caption{\textbf{Dendrograms of Predictive and Novel Components from Temporal Feature Analysis.} Reproduction of the results from Fig.~\ref{fig:ind_story_geometry}b, but including the novel component's dendrogram as well.
    \vspace{10pt}
    }
    \label{fig:llama_story_temporal_dendro_3}
\end{figure}

\begin{figure}[htb!]
    \centering
    \includegraphics[width=\linewidth]{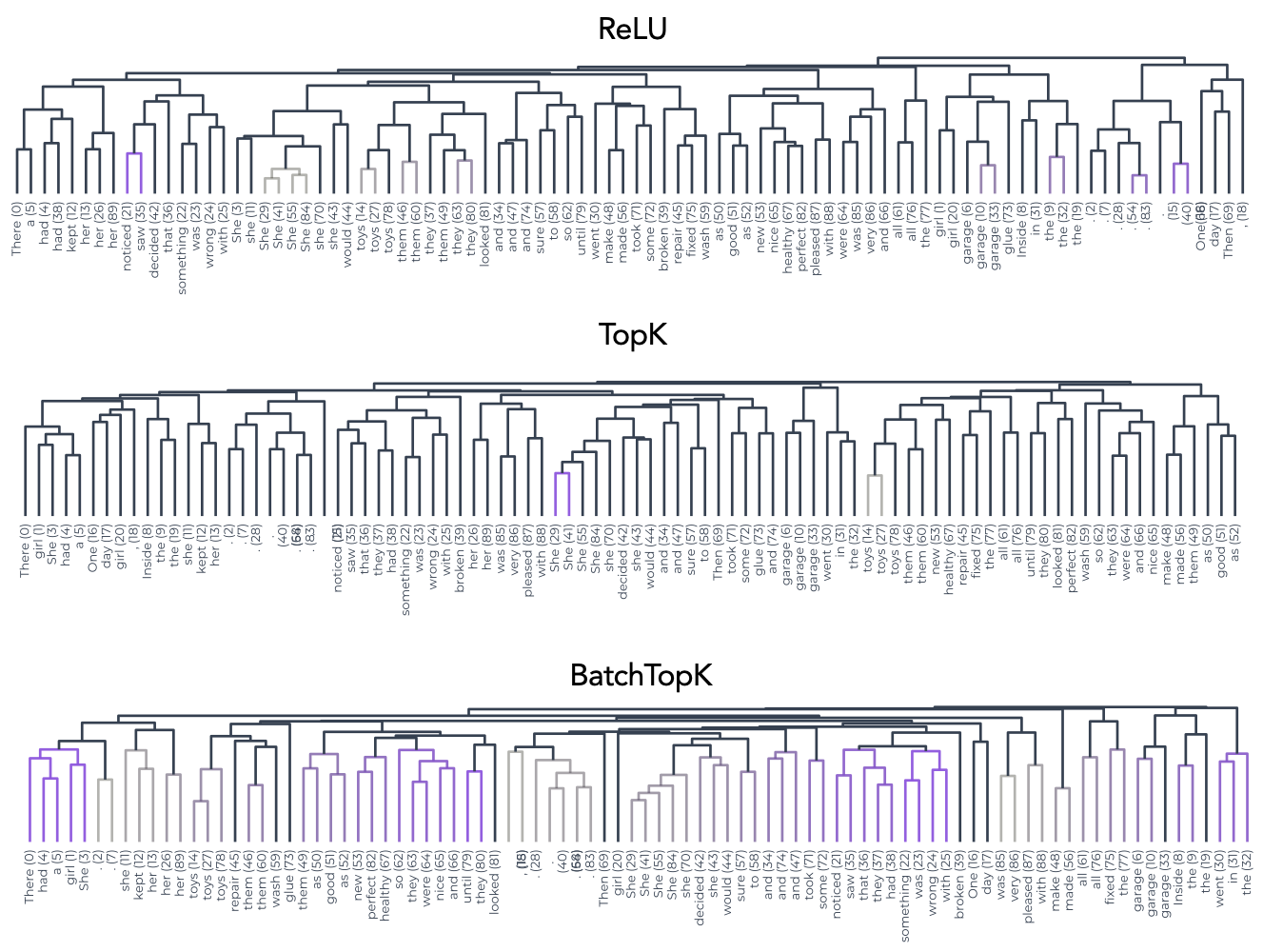}
    \caption{\textbf{Dendrograms of Standard SAEs' Latents Codes.} Reproduction of the results from Fig.~\ref{fig:ind_story_geometry}b, but for latent codes extracted using standard SAEs.
    \vspace{10pt}
    }
    \label{fig:llama_story_standard_dendro_3}
\end{figure}

\clearpage
\subsubsection{Comparing Eigenspectrum with Slow vs.\ Fast Features}

\begin{figure}[htb!]
    \centering
    \includegraphics[width=\linewidth]{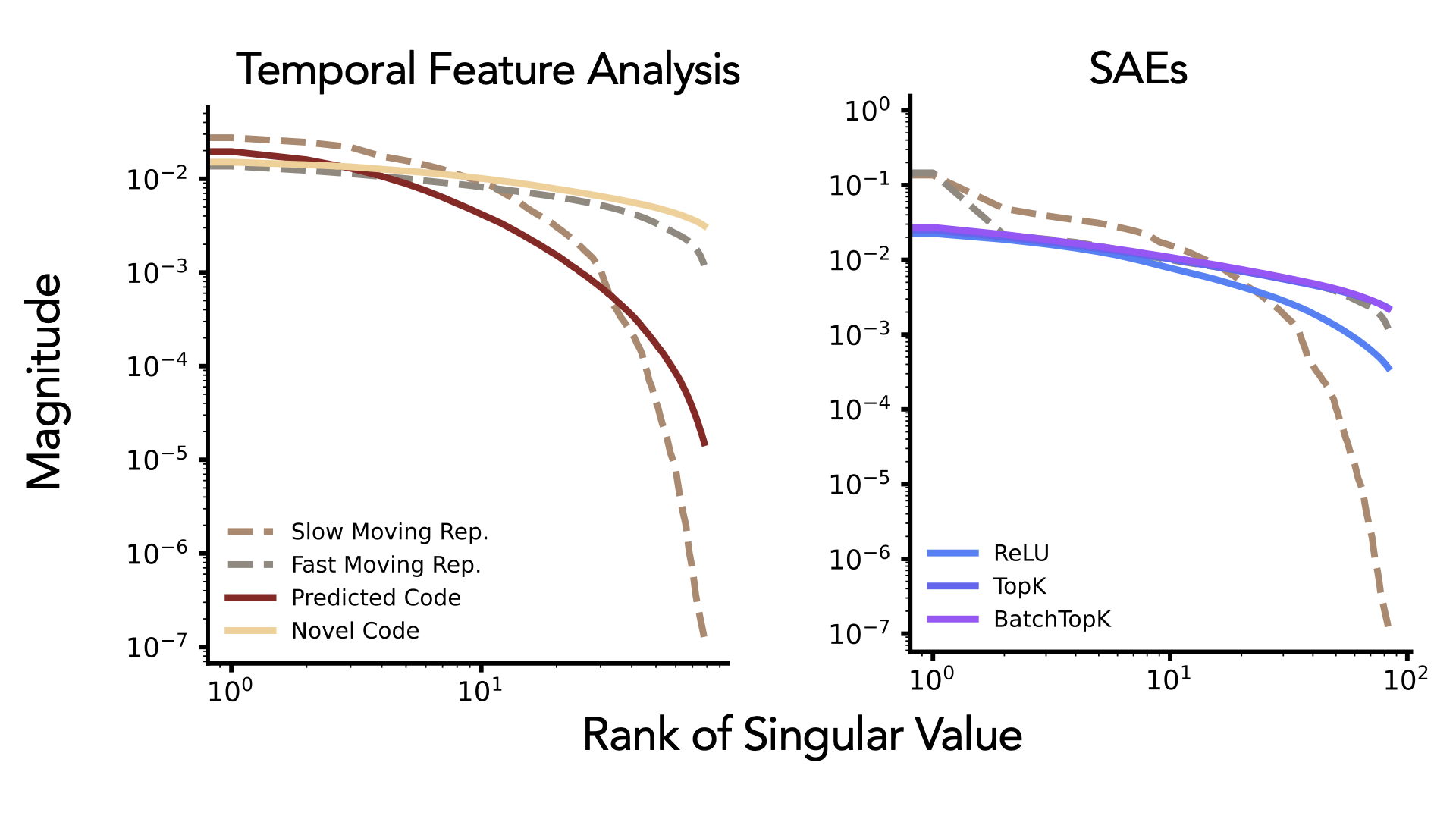}
    \caption{\textbf{Kernel spectrum for latent codes and model representations}. Kernels defined using novel code from Temporal Feature Analysis and standard SAEs both align well with the fast-changing part of model representations; meanwhile, only the predictive code shows strong similarity to the slow changing part.
    \vspace{10pt}
    }
    \label{fig:llama_spectra}
\end{figure}

\clearpage
\subsection{Event Boundaries and Noise Stability}

\subsubsection{Event Boundaries}

\begin{figure}[htb!]
    \centering
    \vspace{20pt}
    \includegraphics[width=\linewidth]{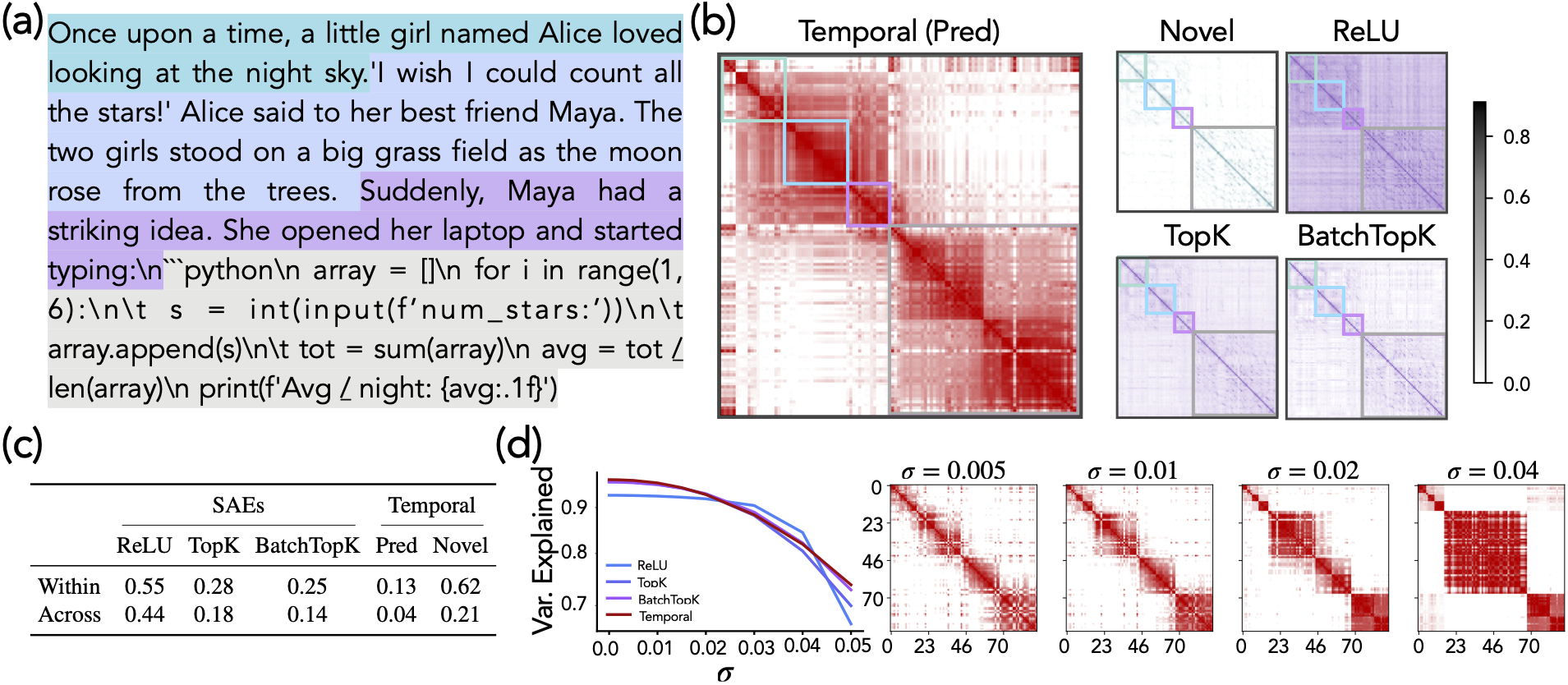}
    \caption{\textbf{Analysis of Event Boundaries.} Reproducing results from Fig.~\ref{fig:stories} on Llama-3.1-8B, we see similar behavior as the Gemma analysis done previously.
    \vspace{20pt}
    }
    \label{fig:llama_event_boundaries}
\end{figure}

\begin{figure}[htb!]
    \centering
    \includegraphics[width=\linewidth]{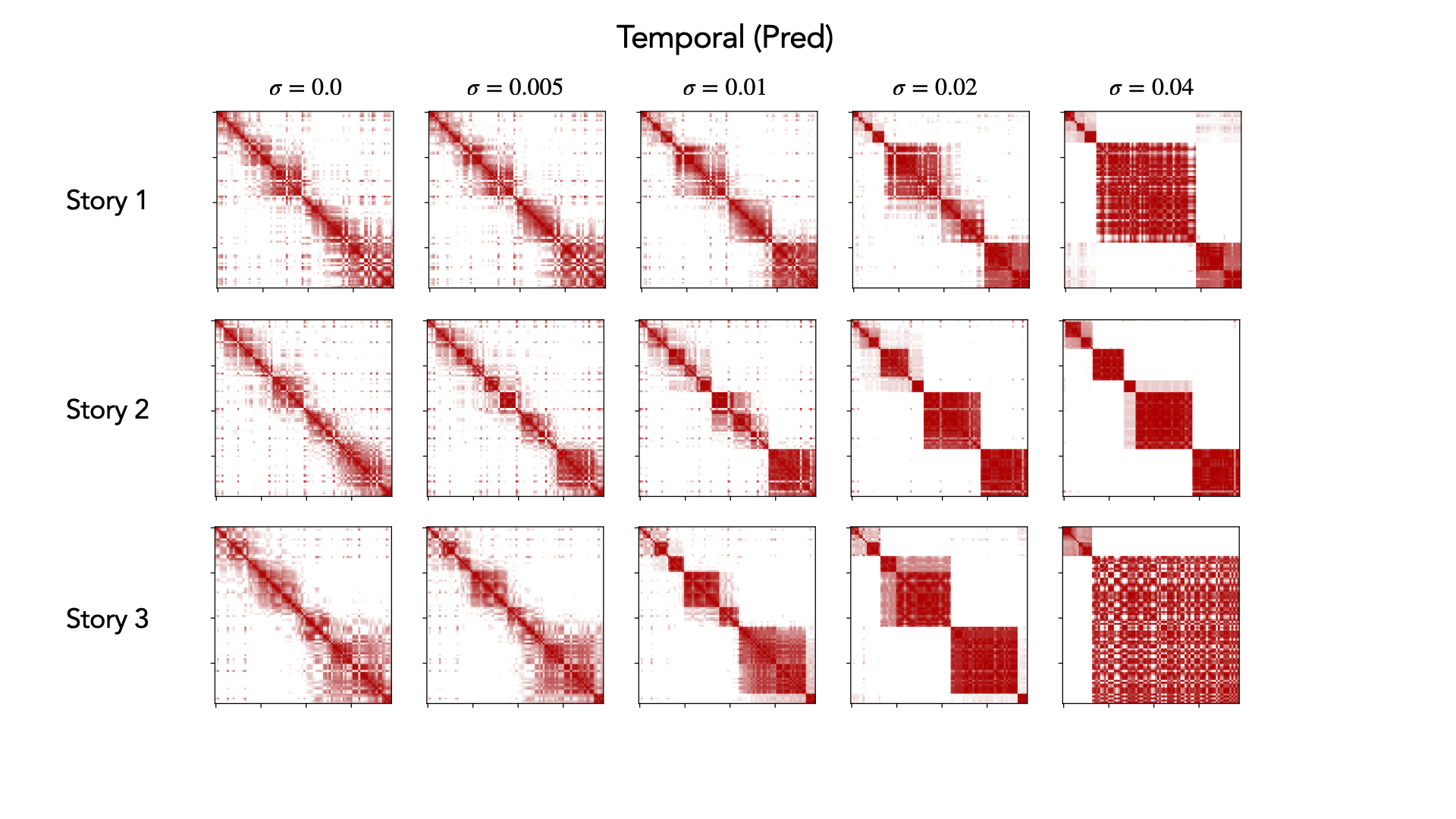}
    \caption{\textbf{Temporal (Pred) Similarity Maps Elicit Multi-Scale Structure with Noise.} Repeating the analysis shown in Fig.~\ref{fig:stories}, we find the coarsening of temporal blocks is a robust result that continues to hold for different inputs. 
    \vspace{10pt}
    }
    \label{fig:llama_pred_sim_and_noise}
\end{figure}

\begin{figure}[htb!]
    \centering
    \includegraphics[width=\linewidth]{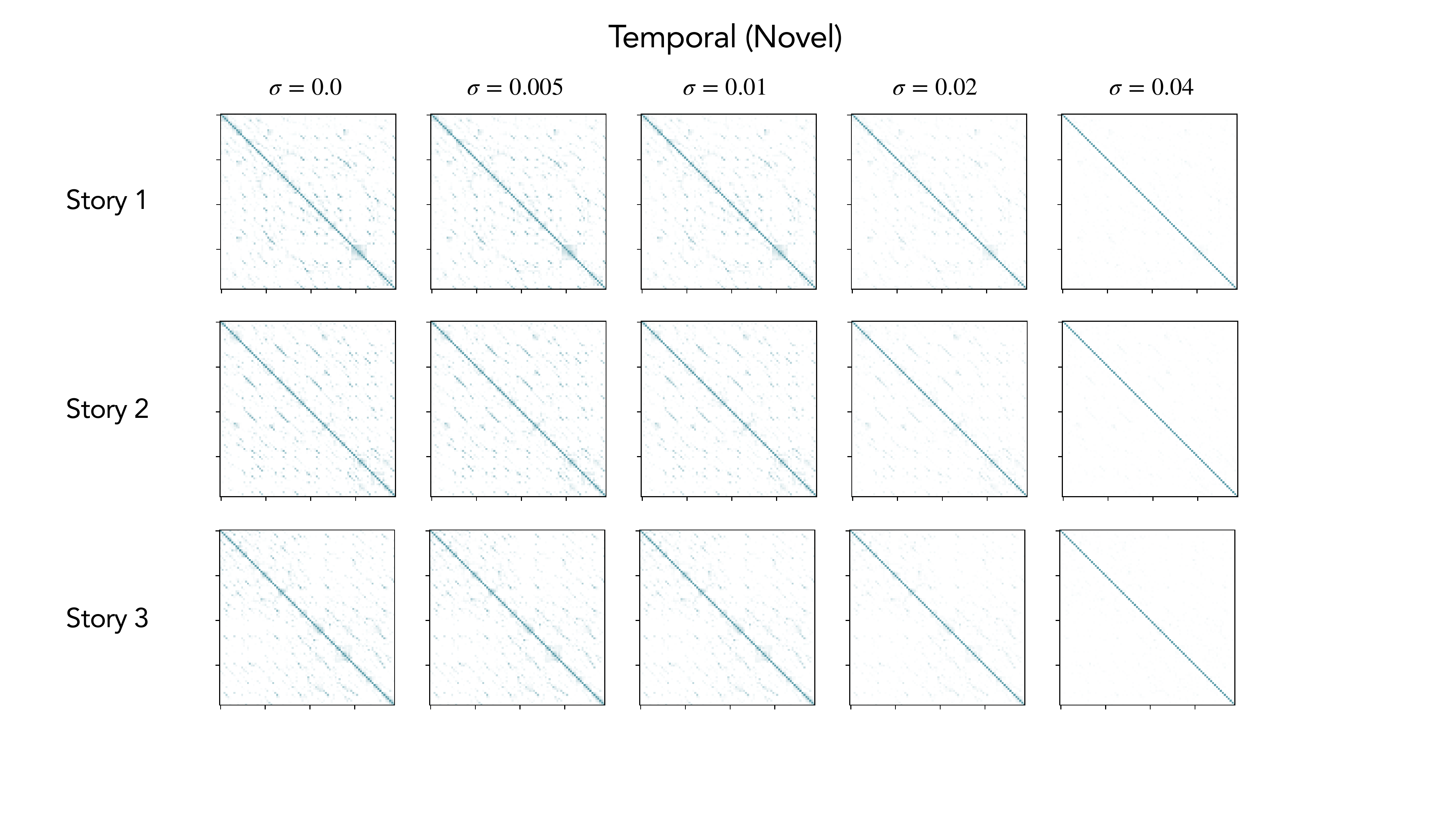}
    \caption{\textbf{Temporal (Novel) Similarity Maps under Noise.} Repeating the analysis shown in Fig.~\ref{fig:stories}, we find the novel component is only able to capture minimal local similarities, which when analyzed via dendrograms, show clustering based on lexical information.
    \vspace{10pt}
    }
    \label{fig:llama_novel_sim_and_noise}
\end{figure}

\begin{figure}[htb!]
    \centering
    \includegraphics[width=\linewidth]{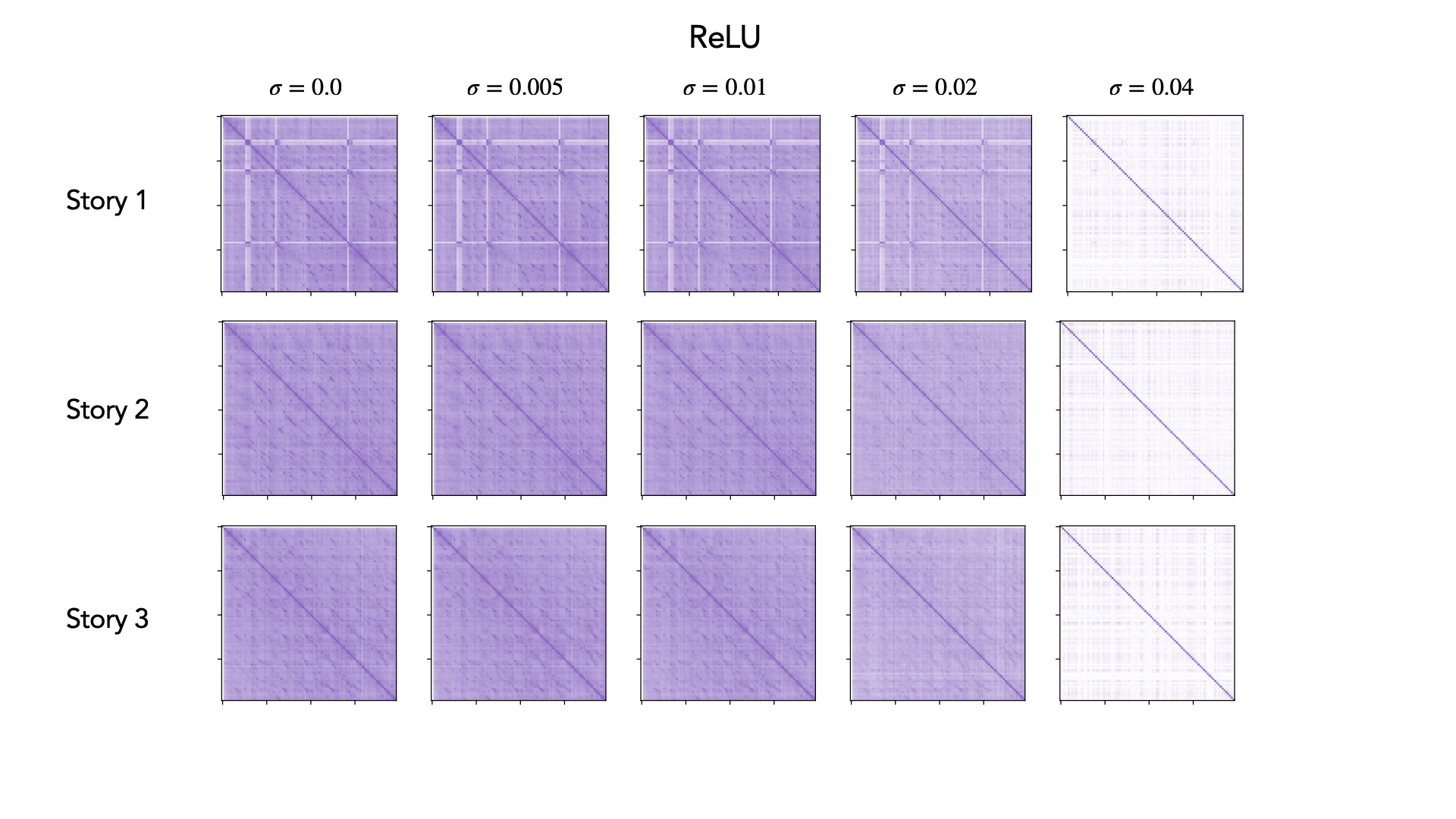}
    \caption{\textbf{ReLU Codes' Similarity Maps under Noise.} Repeating the analysis shown in Fig.~\ref{fig:stories} on ReLU SAEs, we find the ReLU latent code has high similarity across the board, suggesting lack of meaningful temporal information.
    \vspace{10pt}
    }
    \label{fig:llama_relu_sim_and_noise}
\end{figure}

\begin{figure}[htb!]
    \centering
    \includegraphics[width=\linewidth]{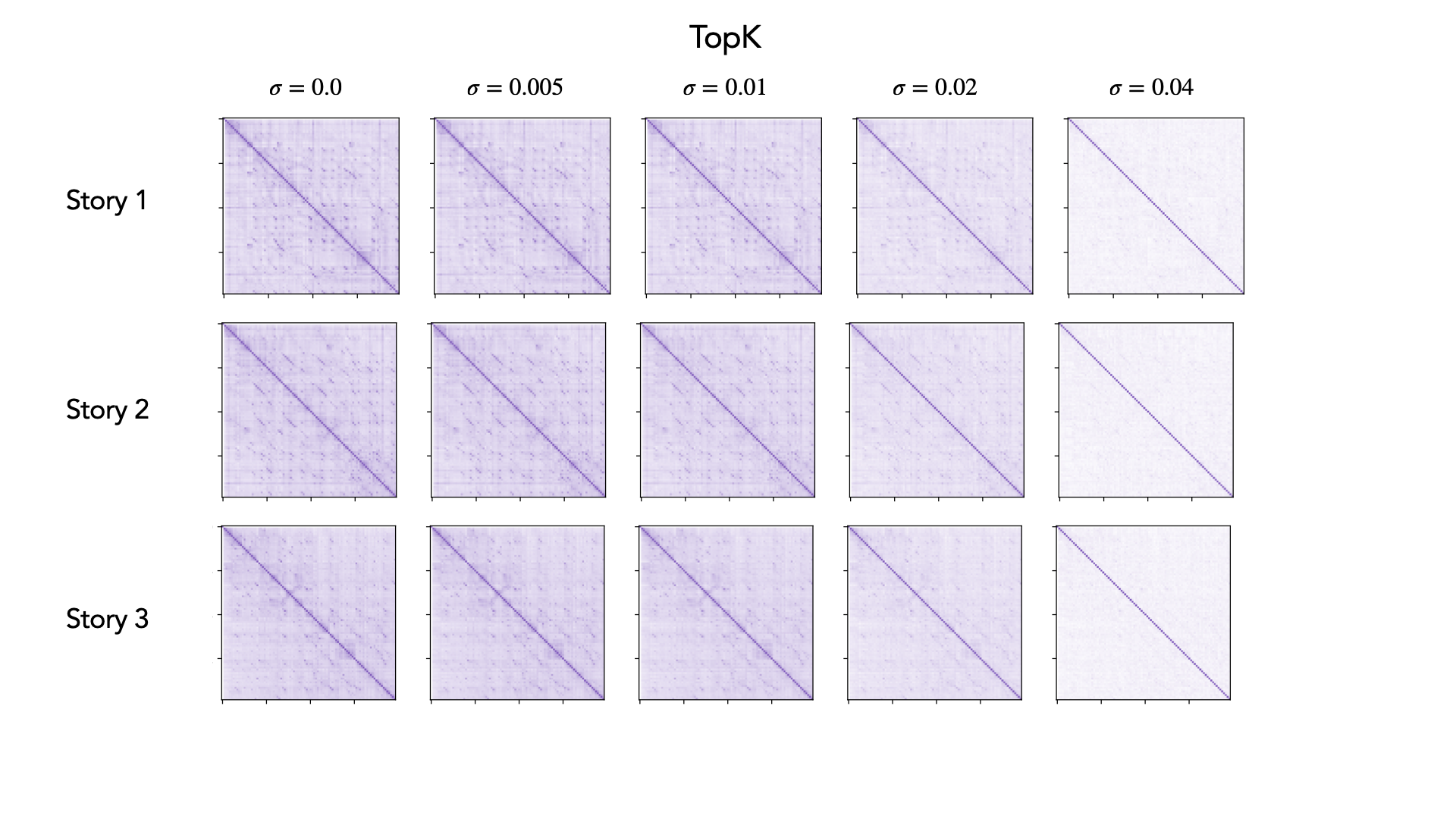}
    \caption{\textbf{TopK Codes' Similarity Maps under Noise.} Repeating the analysis shown in Fig.~\ref{fig:stories}, we find TopK SAE's latent codes are only able to capture minimal local similarities, which when analyzed via dendrograms, show clustering based on lexical information. 
    \vspace{10pt}
    }
    \label{fig:llama_topk_sim_and_noise}
\end{figure}

\begin{figure}[htb!]
    \centering
    \includegraphics[width=\linewidth]{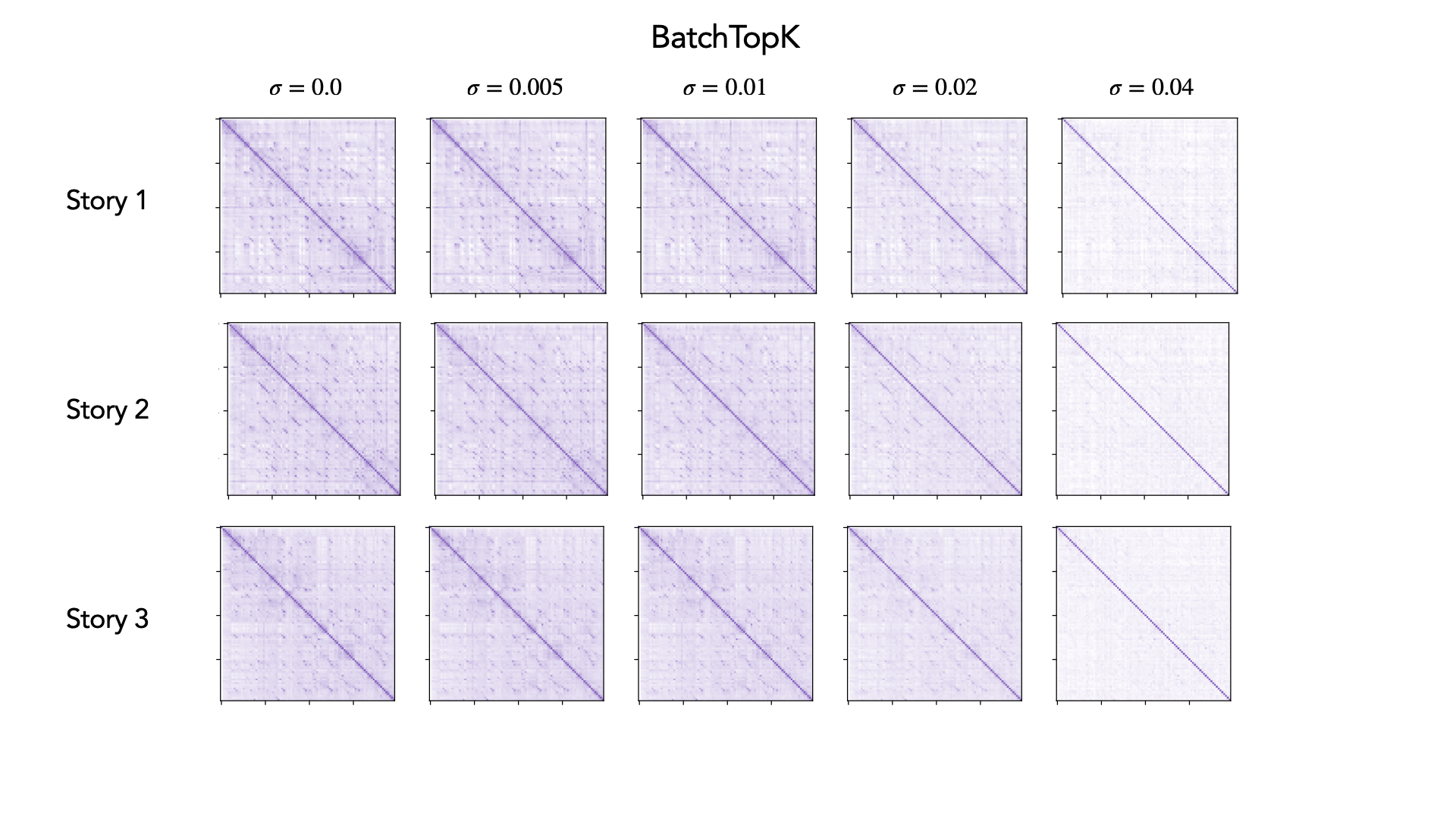}
    \caption{\textbf{BatchTopK Codes' Similarity Maps under Noise.} Repeating the analysis shown in Fig.~\ref{fig:stories}, we find BatchTopK SAE's latent codes are only able to capture minimal local similarities, which when analyzed via dendrograms, show clustering based on lexical information. 
    \vspace{10pt}
    }
    \label{fig:llama_batchtopk_sim_and_noise}
\end{figure}

\clearpage
\subsection{In-Context Representations}

\begin{figure}[htb!]
    \centering
    \includegraphics[width=\linewidth]{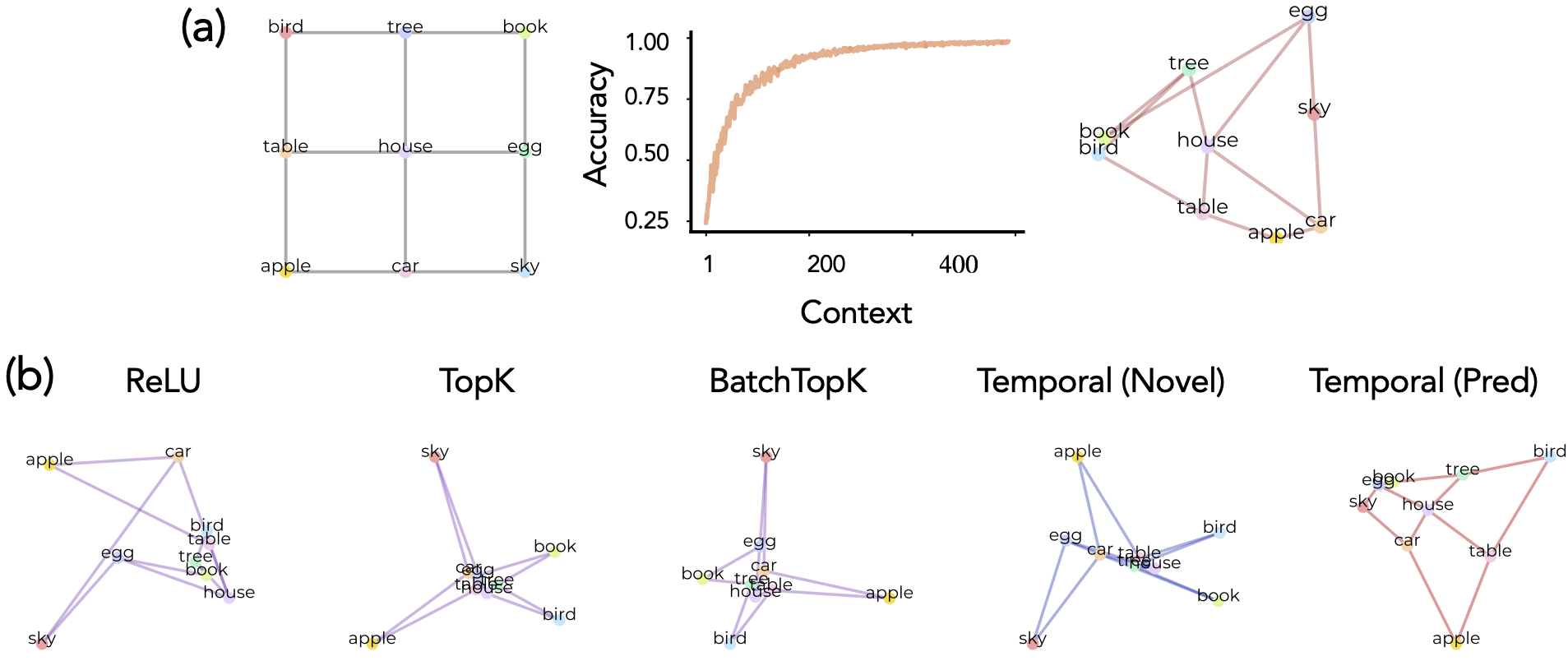}
    \caption{\textbf{In-Context Representations.} Reproducing the results of Fig.~\ref{fig:iclr_iclr} on Llama-3.1-8B, we see similar results as the Gemma analysis.
    \vspace{10pt}
    }
    \label{fig:llama_iclr_iclr}
\end{figure}

\clearpage
\section{Further Results: Temporal Feature Analysis with Only Predictive Component}
\label{appsection:pred_only}

In this section, we reproduce a subset of the experiments on an ablated version of Temporal Feature Analysis wherein only the predictive component is used for reconstructing the signal, i.e., the training objective is entirely based on prediction of the next-token representation. As we showed in Tab.~\ref{tab:comparing_saes_nmse}, this ablation does not achieve low-enough error. Correspondingly, the results in this section are a mere qualtitative artifact to show that Temporal Feature Analysis is not over-powered by its expressivity.

\begin{figure}[htb!]
    \centering
    \includegraphics[width=\linewidth]{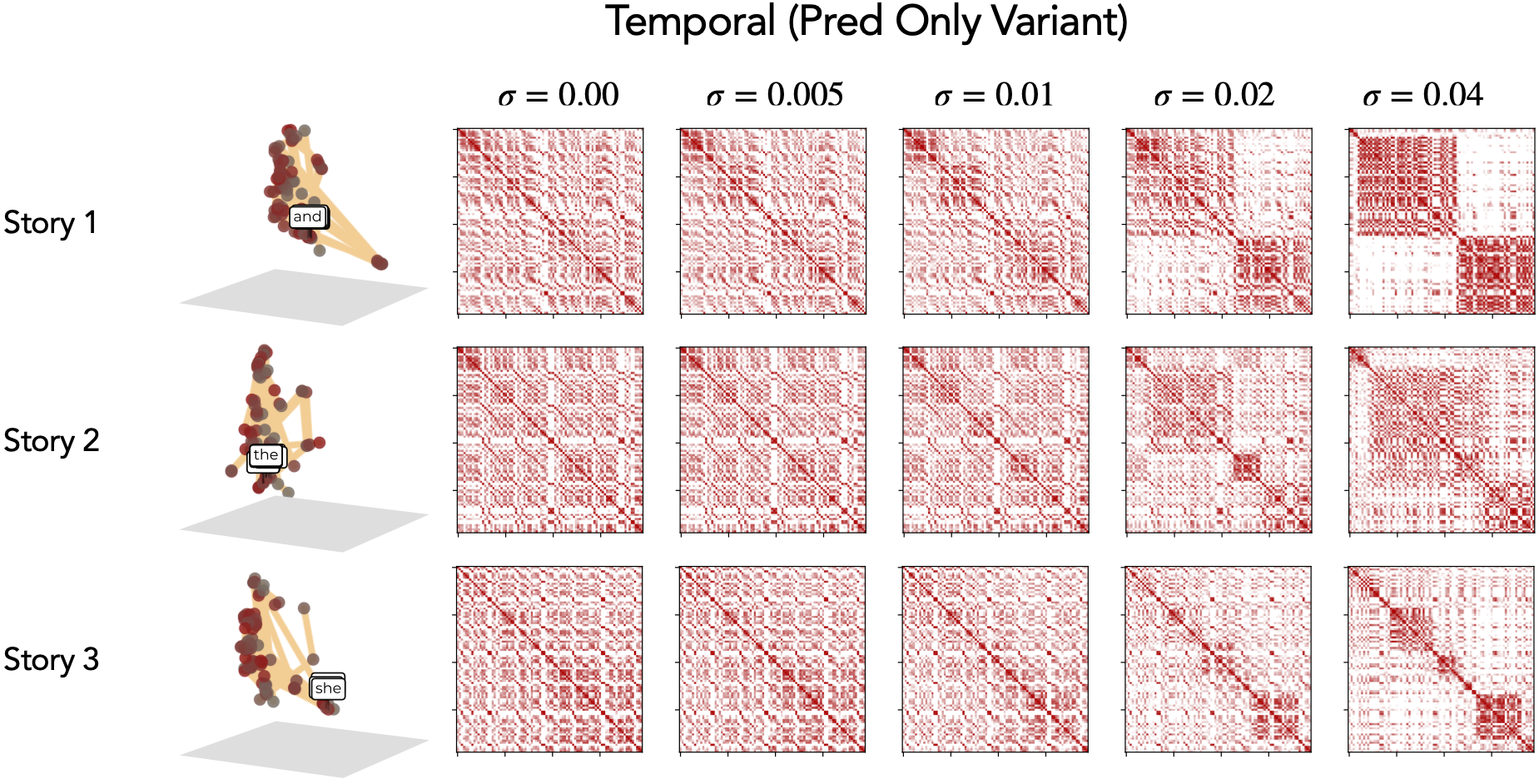}
    \caption{\textbf{Geometry and Heatmaps from the Prediction-only Ablation.} We see block and event structures do emerge, but they are substantially more wound than the smooth trajectories seen in Fig.~\ref{fig:ind_story_geometry}.
    \vspace{20pt}
    }
    \label{fig:pred_only_heatmaps}
\end{figure}

\begin{figure}[htb!]
    \centering
    \includegraphics[width=\linewidth]{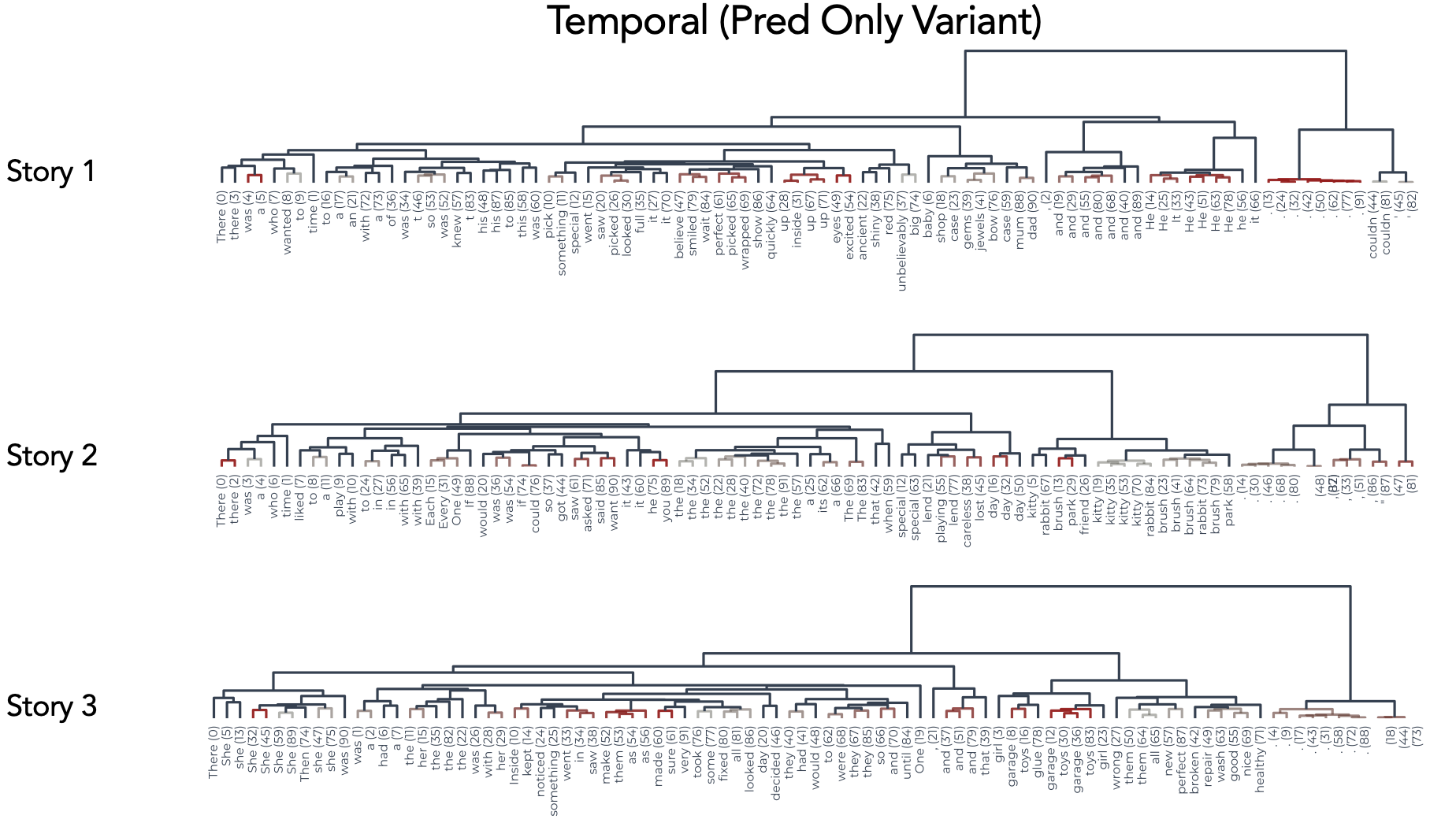}
    \caption{\textbf{Dendrograms from the Prediction-only Ablation.} Dendrograms reflect the event structures to some extent, but the results is Fig.~\ref{fig:ind_story_geometry} are substantially more crisp.
    \vspace{10pt}
    }
    \label{fig:pred_only_dendro}
\end{figure}

\begin{figure}
    \centering
    \includegraphics[width=1\linewidth]{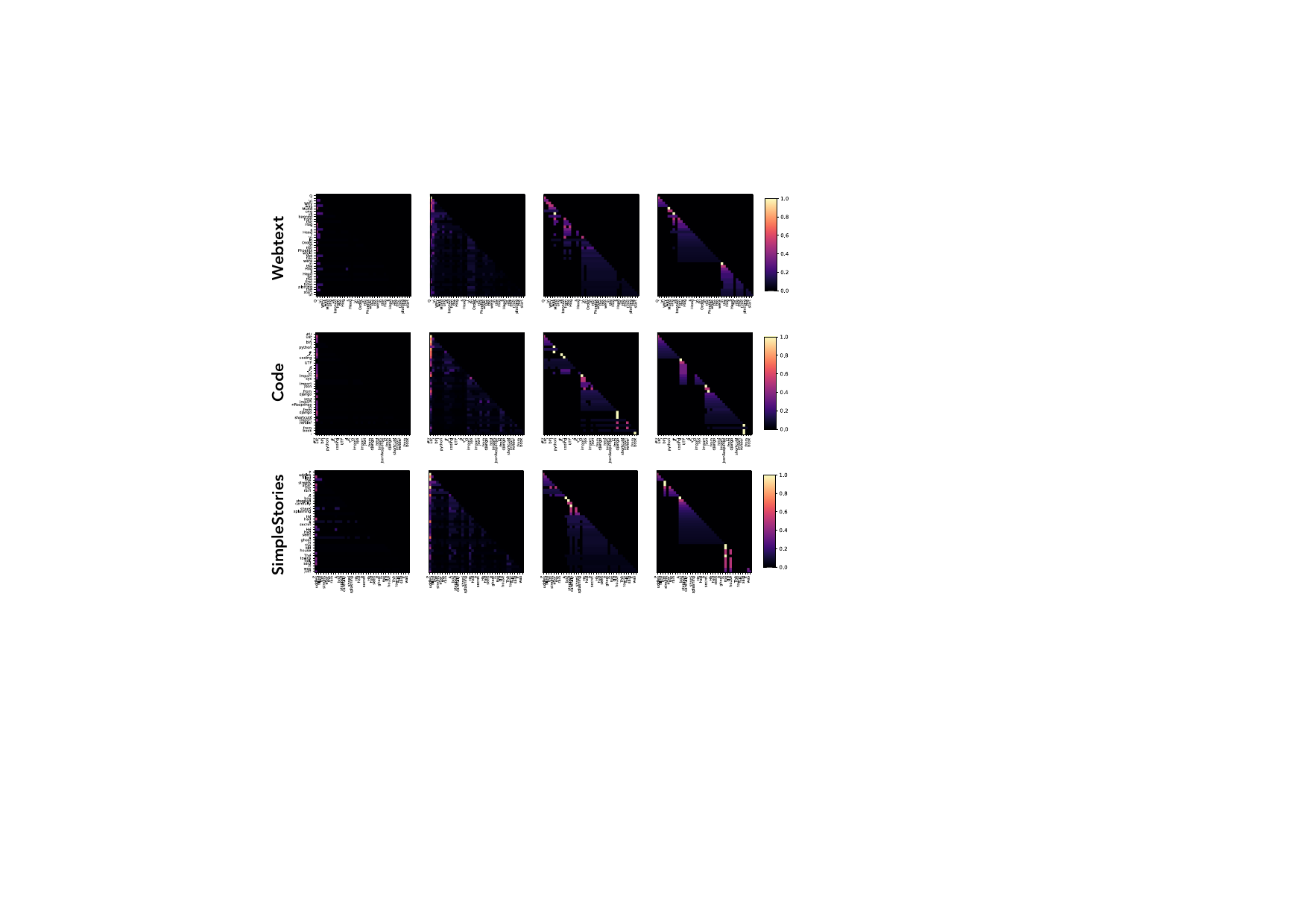}
    \caption{\textbf{Attention Patterns from the Prediction-only Ablation.} Attention weights for single sequences from three domains. For reference see the more coherent attention patterns of TemporalSAE, where the attention layer is trained jointly with a subsequent SAE in Figure~\ref{fig:attn_pattern_pred}.}
    \label{fig:pred_only_attn_pattern_viz}
\end{figure}

\clearpage
\section{Further Results: Emergent Separation of Predictive and Novel Code in Temporal Feature Analysis}
\vspace{10pt}

\begin{figure}[H]
    \centering
    \includegraphics[width=1\linewidth]{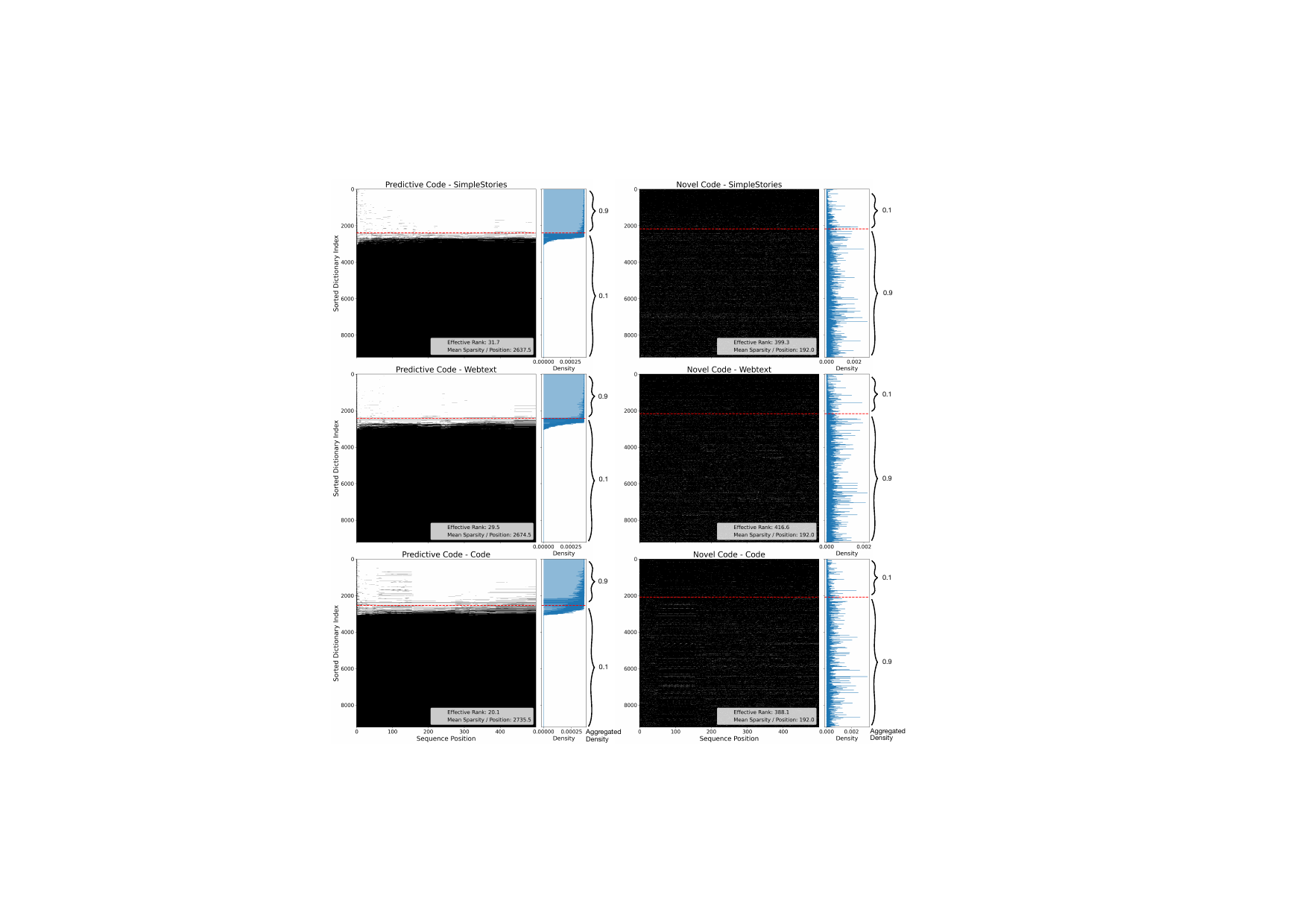}
    \caption{\textbf{Separation of Predictive and Novel Dictionaries.}
    Predictive and novel codes for sequences drawn from three datasets---SimpleStories (top), Webtext (middle), and Code (bottom)---are binarized to zero (black) and non-zero (white). The dictionary elements are sorted by the sum of non-zero activations in the predictive code across the sequence. The ordering obtained from the predictive code is also applied to the novel code. The red dashed line marks the separation of 90\% of non-zero activation counts; only 10\% are overlapping into the other subset. Additionally, the effective rank of the predictive codes is two orders of magnitudes lower than the average number of non-zero dictionary elements (Mean L0).
    }
    \label{fig:separation_pred_novel}
\end{figure}

The Temporal Feature Analyzers studied in the main paper share a dictionary for both predictive and novel codes. 
We now investigate if there is any shared structure in the two dictionaries. 
In particular, given the predictive and novel codes play a different role, it is plausible the SAE learns to approximately split the dictionary into two parts: one responsible for computing the predictive code and the other for novel one.
Specifically, we investigate a Temporal Feature Analyzer trained on Gemma-2-2B Layer 12 residual stream activations. 
Results are shown in Fig.~\ref{fig:separation_pred_novel} and show a separation of dictionary elements for sequences drawn from three datasets: SimpleStories, Webtext, and Code. 
Specifically, we find $\sim$2K dictionary elements participate in defining the predictive code, while the remaining $\sim$8K primarily participate in defining the novel code.
However, the absolute count is merely indicative of the L0 sparsity of a code, which may not reflect how many \textit{directions} are actually used to define the code---estimating latter requires computing the rank of the code (in fact, we note that rank-sparsity is a well-known alternative to L0 sparsity in dictionary learning literature~\citep{elad2010sparse}).
As we show in Fig.~\ref{fig:separation_pred_novel}, the effective rank of the predictive codes ($\sim$20--30) is substantially lower than both the effective rank ($\sim$390--410) and absolute L0 sparsity of novel codes.

Overall, the posthoc analysis above shows that predictive and novel codes largely use separate subsets of dictionary elements. 
This emergent disentanglement of the two components motivates one to preemptively split the dictionary into two components---one responsible for the predictive code and other for novel one. 
Our preliminary experiments show a Temporal SAE trained with this split-dictionary architecture is similarly performant as the tied dictionary one, resulting in predictive / novel codes with effective ranks $\sim$20--30 / $\sim$390--420, and a slightly better overall loss.
It is possible optimizing hyperparameters (e.g., having different expansion factors for the two components) can make this architecture more performant than the tied dictionary one, but we leave a further characterization to future work.

\clearpage
\section{Proofs}

\subsection{Priors on the Sparse Code for various SAEs}
\label{appsec:saepriors-derived}
We restate and prove the proposition on independence priors of SAEs over time (Proposition \ref{thm:priors-over-time}) below.
\begin{proposition}[Independence priors over time] Consider the SAE maximum aposteriori (MAP) objective for ReLU, JumpReLU, TopK and BatchTopK SAEs. The sparsity constraints for these SAEs are additive over time, resulting in:
\begin{equation}
\begin{split}
    \argmin_{\D, \z} \frac{1}{T} &\sum_{i=1}^{T} \|\x_{i} - \D \z_{i}\|_2^2 + \lambda \regsparse(\z_{i}),\\
    \text{s.t. } \z_{k} &= f_{\mathtt{SAE}}(\x_{k})\; \forall k, \; \Tilde{g}(\z_1, \dots, \z_T)=\sum_i \Tilde{g}(\z_i)=0. \; 
\end{split}
\end{equation}
This MAP objective has an independent and identically distributed (i.i.d.) prior over time i.e., 
\begin{align*}
    P(\z_1, \dots, \z_T) &\propto \prod_{t=1}^T \exp \left( - \lambda \regsparse(\z_i) -\Tilde{\lambda}\Tilde{g}(\z_i) \right) =  \prod_i P(\z_i),
\end{align*}
\end{proposition}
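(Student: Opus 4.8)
The plan is to run the standard Bayesian re-derivation of sparse coding \citep{olshausen1997sparse} at the level of an entire token sequence and then simply read off the factorization of the resulting prior. First I would write the SAE training objective in its MAP form, $\argmin_{\{\z_t\}} -\log P(\z_1,\dots,\z_T \mid \x_1,\dots,\x_T)$, and apply Bayes' rule to split the negative log posterior (up to a $\z$-independent constant) into $-\log P(\x_1,\dots,\x_T\mid \z_1,\dots,\z_T) - \log P(\z_1,\dots,\z_T)$. Assuming the usual isotropic-Gaussian observation model in which the dictionary acts linearly and observations are conditionally independent across positions given their latents, the first term equals a constant multiple of $\sum_t \|\x_t - \D\z_t\|_2^2$, which is exactly the data term of Eq.~\ref{eq:sae-sparsecoding}; the remaining terms of the objective, $\lambda\sum_t \regsparse(\z_t) + \Tilde\lambda\sum_t \Tilde g(\z_t)$, must therefore account for $-\log P(\z_1,\dots,\z_T)$, again up to a constant.

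The second step is to exponentiate this identification. Since the matched expression is a \emph{sum} of per-position terms of identical functional form, the prior factorizes as
\[
P(\z_1,\dots,\z_T)\ \propto\ \prod_{t=1}^{T}\exp\!\big(-\lambda\,\regsparse(\z_t)-\Tilde\lambda\,\Tilde g(\z_t)\big),
\]
and normalizing each factor by the common constant $Z=\int \exp(-\lambda\regsparse(\z)-\Tilde\lambda\Tilde g(\z))\,d\z$ produces a single marginal $P(\z_t)$ with $P(\z_1,\dots,\z_T) = \prod_t P(\z_t)$ — i.e.\ the latents are i.i.d.\ across time, which is the claim. For $\regsparse = L_1$ this is a product of Laplace priors and $Z<\infty$; when $\Tilde g$ encodes a hard support constraint I would read $\exp(-\Tilde\lambda \Tilde g)$ as the indicator of the constraint set, which leaves the product structure intact (a union-of-subspaces prior shared identically across positions).

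The only architecture-specific content is verifying the hypothesis that $\Tilde g(\z_1,\dots,\z_T)=\sum_t \Tilde g(\z_t)$. For the ReLU SAE there is no hard constraint ($\Tilde g\equiv 0$) and sparsity lives entirely in $\regsparse$; for JumpReLU, and for BatchTopK in its deployed, globally-thresholded form, $\Tilde g$ is a per-token magnitude threshold, so the sequence-level constraint is the conjunction (hence the sum of $\{0,\infty\}$-valued indicators) of per-token ones; for TopK the constraint $\|\z_t\|_0 = K$ is already per token. I expect the main obstacle to be BatchTopK, whose \emph{training-time} rule selects the top $K\cdot B$ activations jointly over a batch of $B$ tokens and so does not literally decompose position by position — I would resolve this either by passing to the inference-time per-token threshold (as above) or by noting that the batch is an exchangeable collection, so the constraint is permutation-symmetric and concentrates onto a per-token threshold as $B\to\infty$. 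Everything else is the routine bookkeeping of matching a quadratic-plus-separable-regularizer objective to a Gaussian-likelihood-plus-factorized-prior model, which are precisely the per-architecture computations that complete the argument.
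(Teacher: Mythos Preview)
Your proposal is correct and follows essentially the same route as the paper: both identify the SAE objective with a negative log-posterior under a Gaussian likelihood, read off the prior from the remaining additive penalty terms, and then verify architecture by architecture that the penalty decomposes as $\sum_t$. The only notable difference is in the per-architecture bookkeeping: the paper treats JumpReLU as having $\regsparse=\|\cdot\|_0$ with $\Tilde g\equiv 0$ (rather than a threshold constraint), and for BatchTopK it simply takes the batch to be the full sequence, writes the mean-sparsity constraint as $\sum_i \tfrac{1}{T}(\|\z_i\|_0-K)=0$, and attaches a single Lagrange multiplier---which is already a sum of identical per-token terms, so no appeal to the inference-time threshold or a large-$B$ concentration argument is needed.
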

\begin{proof}
    The sparsity constraints and sparsity-promoting regularizers for the SAEs under study are specified in the table below.
\begin{table}[ht]
    \centering
    \begin{tabular}{c c c}
    \toprule
     SAE  &  \makecell{Regularizer \\$\regsparse (\z_i)$} & \makecell{Sparsity Constraint \\$\Tilde{g} (\z_1, \dots, \z_T)=0$} \\
    \midrule 
      ReLU & $\|\z_i\|_1$ & 0 \\
      JumpReLU & $\|\z_i\|_0$ & 0 \\
      TopK & 0 & $\sum_{i=1}^T(\|\z_i\|_0-K)^2$ \\
      BatchTopK & 0 & $\frac{1}{T}\sum_{i=1}^T \|\z_i\|_0 - K$
    \end{tabular}
    \caption{Sparsity constraints and regularizers for SAEs}
    \label{tab:sae-reg-sparsityconstraints}
\end{table}

Note that TopK imposes a pointwise hard sparsity constraint, which has been restated using sum-of-squares above for convenience. While BatchTopK imposes the fixed mean sparsity for each mini batch, we take the batch to capture the entire timeseries in the above formulation. The above table shows us that the sparsity constraint is additive over time in all cases:
\begin{align}
    \Tilde{g} (\z_1, \dots, \z_T) &= \sum_{i=1}^T \Tilde{g}(\z_i) = 0, \\
    \text{where } \Tilde{g}(\z_i) &= \begin{cases}
        (\|\z_i\|_0 - K)^2  & \text{TopK} \\
        \frac{1}{T}(\|\z_i\|_0 - K) & \text{BatchTopK} \\
        0 & \text{ReLU, JumpReLU}
    \end{cases}
\end{align}
Recall that SAEs solve the following constrained optimization problem (restated from Eq.~\ref{eq:sae-sparsecoding}).
\begin{equation}
\begin{split}
    \argmin_{\D, \z} \frac{1}{N} &\sum_{i=1}^{T} \|\x_{i} - \D \z_{i}\|_2^2 + \lambda \regsparse(\z_{i}),\\
    \text{s.t. } \z_{k} &= f_{\mathtt{SAE}}(\x_{k})\; \forall k, \; \Tilde{g}(\z_1, \dots, \z_T)=0 \;.
\end{split}
\end{equation}
We rewrite the above problem using Lagrange multipliers on the sparsity constraints, and further simplify using the above result on constraints being additive over time, as:
\begin{equation}
\begin{split}
    \argmin_{\D, \z} \frac{1}{N} &\sum_{i=1}^{T} \|\x_{i} - \D \z_{i}\|_2^2 + \lambda \regsparse(\z_{i}) + \Tilde{\lambda} \Tilde{g}(\|\z_i\|_0),\\
    \text{s.t. } \z_{k} &= f_{\mathtt{SAE}}(\x_{k})\; \forall k.
\end{split}
\end{equation}
Note that we don't use Lagrange multipliers on the SAE architecture constraint $\z=f_{\mathtt{SAE}}(\x)$ since we only care about $\z$-specific constraints (which don't include the data $\x$) for the prior. 

\paragraph{Bayesian Interpretation.} The objective function above (sans the SAE architecture constraint) can be thought of as minimizing the negative log posterior, which is proportional to log prior added to log likelihood:
\begin{align}
    -\log P(\z_1, \dots, \z_T \mid \x_1, \dots, \x_T) &\propto \underbrace{\frac{1}{N} \sum_{i=1}^{T} \|\x_{i} - \D \z_{i}\|_2^2}_{-\log P(\x_1, \dots, \x_T \mid \z_1, \dots, \z_T)} + \underbrace{\frac{1}{N} \sum_{i=1}^{T}\lambda \regsparse(\z_{i}) + \Tilde{\lambda} \Tilde{g}(\|\z_i\|_0)}_{-\log P(\z_1, \dots, \z_T)}
\end{align}

The prior over latents $\z$ is:
\begin{align}
    \log P(\z_1, \dots, \z_T) &= - \frac{1}{N} \sum_{i=1}^{T}\lambda \regsparse(\z_{i}) + \Tilde{\lambda} \Tilde{g}(\|\z_i\|_0), \\
    \implies P(\z_1, \dots, \z_T) &= \prod_{i=1}^T \exp \left( - \lambda \regsparse(\z_{i}) + \Tilde{\lambda} \Tilde{g}(\|\z_i\|_0) \right) = \prod_i P(\z_i).
\end{align}
Therefore, the prior is multiplicative over time, implying independence, and the distribution at each time $t$ has the same form, implying that the prior is independent and identically distributed (i.i.d.). This completes the proof.

\end{proof}

\subsection{Sparsity over time and implications}
\label{app:proofs-sparsity-supportswitch}
The i.i.d. prior over time leads to similar i.i.d. assumptions about the sparsity of representations over time for SAEs. The following corollary of Prop. \ref{thm:priors-over-time} states this assumption.
\begin{corollary}[Assumptions of time-invariant sparsity] As a consequence of the i.i.d. priors over time from Prop. \ref{thm:priors-over-time}, standard SAEs assume that sparsity of representations emerges from a fixed distribution independently over time (i.i.d.), i.e.,
\begin{align}
    P(\|\z_1\|_0, \dots, \|\z_T\|_0) &= \prod_t P(\|\z_t\|_0), \\
    \implies P(\|\z_t\|_0) &= P(\|\z_{t'}\|_0) \;\; t\neq t'
\end{align}
\end{corollary}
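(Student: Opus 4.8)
The plan is to obtain this as an immediate pushforward of the product prior established in Proposition~\ref{thm:priors-over-time}. Write $L_t := \|\z_t\|_0$, a discrete, $\{0,1,\dots,M\}$-valued random variable obtained by applying the fixed (time-independent) measurable map $g(\cdot) := \|\cdot\|_0$ to the latent $\z_t$. The joint law of $(L_1,\dots,L_T)$ is then, by definition, the pushforward of the joint prior $P(\z_1,\dots,\z_T)$ under the coordinatewise map $G := (g,\dots,g)$.

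First I would invoke Proposition~\ref{thm:priors-over-time}, which gives $P(\z_1,\dots,\z_T) = \prod_{t=1}^{T} P(\z_t)$ with all factors equal to a single common law, call it $P_{\z}$. Next I would use the elementary fact that the pushforward of a product measure under a coordinatewise product of maps is the product of the pushforwards: if $\mu = \bigotimes_t \mu_t$ and $G=(g_t)_t$ acts coordinatewise, then $G_{*}\mu = \bigotimes_t (g_t)_{*}\mu_t$. Applying this with $\mu_t = P_{\z}$ and $g_t = g$ for every $t$ yields
\[
P\bigl(\|\z_1\|_0,\dots,\|\z_T\|_0\bigr) \;=\; \prod_{t=1}^{T} g_{*}P_{\z} \;=\; \prod_{t=1}^{T} P\bigl(\|\z_t\|_0\bigr),
\]
which is the claimed factorization; the ``identically distributed'' conclusion $P(\|\z_t\|_0) = P(\|\z_{t'}\|_0)$ for $t\neq t'$ follows since each marginal equals $g_{*}P_{\z}$, which does not depend on $t$. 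Equivalently, one may simply cite the standard fact that measurable functions of i.i.d.\ random variables are again i.i.d., applied to $L_t = g(\z_t)$.

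There is no substantive obstacle here: the content is carried entirely by Proposition~\ref{thm:priors-over-time}, and the corollary is a one-line consequence of the stability of product measures under coordinatewise pushforward. The only points meriting a sentence of care are (i) making explicit that $\|\cdot\|_0$ is a single, time-independent map, so that both the independence and the identical-distribution conclusions transfer, and (ii) noting that for TopK (and, taking the batch to be the full sequence, BatchTopK) the sparsity constraint forces $g_{*}P_{\z}$ to concentrate at $K$, so the statement holds in the degenerate-but-still-i.i.d.\ sense of a shared point mass. I would include these two remarks and conclude.
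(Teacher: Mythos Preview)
Your proposal is correct and takes essentially the same approach as the paper: the paper's proof simply observes that since the prior over latents $\z_t$ is i.i.d.\ and sparsity $\|\z_t\|_0$ is a (time-independent) function of $\z_t$, the sparsities are i.i.d.\ as well. You have spelled this out more carefully via pushforwards of product measures and added the helpful remark about the degenerate TopK/BatchTopK case, but the underlying argument is identical.
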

\begin{proof}
    This result essentially follows from Prop. \ref{thm:priors-over-time}. Since the prior over latents $\z$ is i.i.d., and sparsity is a function of the latents, sparsity must also be i.i.d. over time. As a consequence, the distributions at different times are identical.
\end{proof}

Due to the observation of increase in number of concepts over time in language (Fig. \ref{fig:intro}), in the following proposition, we show that time invariant sparsity priors in SAEs lead to SAEs losing out on important structural information in model activations.

\begin{proposition}[Restrictive Sparsity Budget Leads to Support Switching in SAEs] Suppose data $\x$ lies on a $C^1$ manifold $\mathcal{M}\subset \mathbb{R}^d$. For $\x \in \mathcal{M}$, let $T_{\x}\mathcal{M}$ denote the tangent space at $\x$, and $m(\x)=\dim T_{\x}\mathcal{M}$ its dimension. Suppose a Sparse Autoencoder (SAE) has a sparsity budget $|S(\x)|=K$. . If $K < m(\x)$, under the assumption of low mean-squared error (MSE) of the SAE, in some neighborhood $\mathcal{N}_{\x}$ of $\x$, $\exists \x_1, \x_2 \in \mathcal{N}_{\x}$ s.t. $S(\x_1)\neq S(\x_2)$, i.e., support switching occurs in the SAE latents. 
\end{proposition}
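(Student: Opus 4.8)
The plan is to argue by contradiction, exploiting the elementary fact that an SAE code supported on a fixed set $\Lambda$ of at most $K$ atoms can only ever reconstruct vectors lying in the subspace $V_\Lambda=\mathrm{span}\{\D_{j}:j\in\Lambda\}$, which has dimension at most $K$. If ``no support switching near $\x$'' held, this subspace would have to capture an entire neighborhood of $\x$ on $\mathcal{M}$, hence contain the tangent space $T_{\x}\mathcal{M}$ — impossible once $K<m(\x)$. So I would first translate the hypothesis ``$S(\x_1)=S(\x_2)$ for all $\x_1,\x_2$ in every neighborhood of $\x$'' into locally constant support (take $\x_2=\x$), and then extract the dimension clash.

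Concretely, suppose toward a contradiction that there is $\rho>0$ and an index set $\Lambda$ with $|\Lambda|\le K$ such that $\mathrm{supp}(S(\x'))=\Lambda$ for all $\x'\in\mathcal{M}\cap B_\rho(\x)$; then $\D S(\x')\in V_\Lambda$ on this neighborhood and $\dim V_\Lambda\le K$. Since $\dim T_{\x}\mathcal{M}+\dim V_\Lambda^{\perp}\ge m(\x)+(d-K)>d$ under $K<m(\x)$, the subspaces $T_{\x}\mathcal{M}$ and $V_\Lambda^{\perp}$ meet nontrivially; fix a unit vector $v\in T_{\x}\mathcal{M}\cap V_\Lambda^{\perp}$. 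Using that $\mathcal{M}$ is $C^1$, I would lift $v$ to a $C^1$ curve $\gamma:(-\delta,\delta)\to\mathcal{M}$ with $\gamma(0)=\x$, $\gamma'(0)=v$, so $\gamma(t)=\x+tv+r(t)$ with $\|r(t)\|=o(t)$; for $|t|$ small, $\gamma(t)\in\mathcal{M}\cap B_\rho(\x)$, hence $\D S(\gamma(t))\in V_\Lambda$. Letting $P$ be orthogonal projection onto $V_\Lambda^{\perp}$ and using $Pv=v$ and $P\,\D S(\gamma(t))=0$, the reconstruction error satisfies
\[
\|\gamma(t)-\D S(\gamma(t))\|\ \ge\ \|P\gamma(t)\|\ =\ \|P\x+tv+Pr(t)\|\ \ge\ |t|-\|P\x\|-\|r(t)\|.
\]
In the idealized exact-reconstruction case this already contradicts $K<m(\x)$ cleanly: $\|P\x\|=0$ gives $\x\in V_\Lambda$, so $\mathcal{M}\cap B_\rho(\x)\subseteq V_\Lambda$, whence $T_{\x}\mathcal{M}\subseteq V_\Lambda$ and $m(\x)\le K$. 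For the stated low-error version, $\|\x-\D S(\x)\|\le\epsilon$ gives $\|P\x\|\le\epsilon$, so the error at $\gamma(t)$ is at least $|t|-\epsilon-o(t)$; choosing $|t|$ in the range $(3\epsilon,\rho)$ with $|o(t)|<|t|/2$ (a nonempty range once $\epsilon$ is small relative to $\rho$ and the $C^1$ modulus of $\gamma$) makes this exceed $\epsilon$, contradicting low error. Hence no such $\rho$ exists: the support of $S$ is not locally constant at $\x$, i.e., every neighborhood of $\x$ contains $\x_1,\x_2$ with $\mathrm{supp}(S(\x_1))\ne\mathrm{supp}(S(\x_2))$.

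The main obstacle will be making ``low error'' quantitative enough for the $\epsilon>0$ case: one needs the SAE's error to be small not just absolutely but relative to the radius $\rho$ of the hypothesized constant-support neighborhood and to the first-order (curvature-like) remainder controlling $\|r(t)\|$, so the cleanest route is to state and prove the exact-reconstruction version first and then present the displayed inequality as its robust refinement. A minor point to dispatch is that the locally constant support could be a proper subset of size strictly below $K$; this changes nothing since the argument only uses $\dim V_\Lambda\le K<m(\x)$. I would also pin down precisely what ``neighborhood'' means (open subsets of $\mathcal{M}$ containing $\x$) so that the logical negation invoked at the outset is unambiguous.
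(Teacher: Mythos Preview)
Your proposal is correct and follows essentially the same contradiction-by-dimension-mismatch argument as the paper: assume locally constant support, observe that all reconstructions then lie in a linear subspace of dimension at most $K<m(\x)$, and conclude this forces non-trivial error against an $m(\x)$-dimensional patch of $\mathcal{M}$. Your version is considerably more explicit---you isolate a tangent direction $v\in T_{\x}\mathcal{M}\cap V_\Lambda^{\perp}$, build a $C^1$ curve along it, and derive a quantitative lower bound on the error---whereas the paper's proof is a brief dimension-counting sketch that does not spell out the $\epsilon$-robust inequality; both reach the same contradiction.
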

\begin{proof}
    Suppose not: this implies that for every neighborhood $\mathcal{N}_{\x}$ of $\x$, the SAE has the same support at all points: $\x_1, \x_2 \in \mathcal{N}_{\x} \implies S(\x_1)=S(\x_2)$. Since the SAE decoder is linear (from latents $\z$ to output $\hat{\x}$ conditioned on the support $S(\x)$), fixed support with size $|S(\x)|=K$ in the neighborhood $\mathcal{N}_{\x}$ corresponds to a $K$-dimensional region $\mathcal{R}_{S(\x)}$ in $\mathbb{R}^d$ in the SAE outputs (which aim to reconstruct the input $\x$). Since $m(x)=\dim T_{\x}$, $\mathcal{N}_{\x}$ spans an $m(x)$-dimensional region in Euclidean space (using a local chart of the manifold $\mathcal{M}$ at $\x$). Since $m(\x)=\dim \mathcal{N}_{\x}>K=\dim \mathcal{R}_{S(\x)}$, the data (inputs to SAE) and SAE reconstructions belong to regions with different dimensions---with the reconstructions $\hat{\x}$ belonging to a lower dimensional region than inputs $\x$---which implies that the reconstruction error of the SAE (MSE) is high. This contradicts the assumption that SAE reconstruction error is low, which implies that our assumption about same support within a neighborhood of $\x$ is false. This implies that there is a neighborhood of $\x$ where the support of the SAE switches. This completes the proof. 
\end{proof}

The implications of support switching have been discussed in the main text: the local geometry in data $\x$ is not preserved in SAE latents $\z$, so concepts that correspond to local geometry (over time, which indicates contextual information) may not be observed in the SAE latents.

\clearpage
\section{Further Theoretical Results: SAE-wise Characterization of Priors over concepts and time}
We can further think of the priors of each SAE over concepts as well as over time in a generative fashion. In some cases, this mainfests as a hierarchical latent variable model $n\rightarrow S\rightarrow \z$, where $n=\|\z\|_0$ is the sparsity, $S=\mathrm{supp}(\z)=\{k: z^k>0\}$ is the support, and $\z$ is the SAE latent code.

\begin{proposition}[SAE Priors on Sparse Code]
\label{thm:saepriors}
Let $S_t = \mathrm{supp}(\z_t)=\{k: z^k_t>0\}$ be the set of active latents in the sparse code $\z$ at time $t$, and $n_t = |S_t|$ be the cardinality of $S_t$ (the number of active latents). Each SAE imposes a prior distribution on the sparse code $\z$, arising from its sparsity penalty $\regsparse(\z)$ or implicit conditions imposed on the sparse code. These conditions are highlighted in Table \ref{tab:saepriors}.
\begin{table}[ht]
\centering
    \caption{Priors over concept interactions and dynamics for various SAEs}
    \label{tab:saepriors}
    \footnotesize
    \begin{tabular}{c c c} 
    \toprule
        $f_{\mathtt{SAE}}, \regsparse(\z) $ & Across-Concept Prior (interaction) & Across-time Prior (dynamics) \\ 
        \midrule 
        \makecell{ReLU, \\ $L_1$-norm} & 
        $z^1_t, \dots, z^M_t \overset{\text{i.i.d.}}{\sim} \text{Laplace}(0, \cdot)$ & 
        $\z_1, \dots, \z_t \overset{\text{i.i.d.}}{\sim} P_{\z}$\\
        TopK & 
        \makecell{$z^{i_1}_t, \ldots, z^{i_K}_t \mid S_t \;\overset{\text{i.i.d.}}{\sim} U(0, \cdot)\;\forall i_\cdot\in S_t,$ \\ $S_t \sim U([M]^{K})$} & 
         \makecell{$(\z_1, S_1), \dots, (\z_t,S_t) \overset{\text{i.i.d.}}{\sim} P_{S}P_{\z \mid S},$ \\ $S_1, \dots, S_t \overset{\text{i.i.d.}}{\sim} U([M]^K)$} \\
        \makecell{JumpReLU,\\ $L_0$-norm} &
        \makecell{$z^{i_1}_t, \ldots, z^{i_{n_t}}_t \mid S_t \;\overset{\text{i.i.d.}}{\sim} U(0, \cdot)\;\forall i_\cdot\in S_t,$ \\ $S_t \mid n_t \sim U([M]^{n_t})$}
        & 
        \makecell{$(\z_1, S_1, n_1), \dots, (\z_t, S_t, n_t) \overset{\text{i.i.d.}}{\sim} P_{n}P_{S \mid n} P(\z \mid S),$ \\ $n_1, \dots, n_t \overset{\text{i.i.d.}}{\sim} P_n$} \\
        BatchTopK & 
        \makecell{$z^{i_1}_t, \ldots, z^{i_{n_t}}_t \mid S_t \;\overset{\text{i.i.d.}}{\sim} U(0, \cdot)\;\forall i_\cdot\in S_t,$ \\ $S_t  \mid n_t \sim U([M]^{n_t})$}
         & 
         \makecell{$(\z_1, S_1, n_1), \dots, (\z_t,S_t, n_t) \overset{\text{i.i.d.}}{\sim} P_{n}P_{S \mid n} P(\z \mid S),$ \\ $n_{1}, \cdots, n_{t} \overset{\text{i.i.d.}}{\sim} P_{n}, \; \mathbb{E}[n_t]=K$}
    \end{tabular}

\end{table}

\end{proposition}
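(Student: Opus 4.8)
The plan is to build directly on Proposition~\ref{thm:priors-over-time}: since the sparsity constraints of all four SAEs are additive over time, the MAP objective corresponds to a prior that already factorizes as $P(\z_1,\dots,\z_T)=\prod_t P(\z_t)$ with $P(\z_t)\propto \exp(-\lambda\regsparse(\z_t)-\tilde\lambda\tilde g(\z_t))$. It therefore suffices to analyze a single time slice and then tensorize. For the within-slice analysis I would apply the chain rule $P(\z_t)=P(n_t)\,P(S_t\mid n_t)\,P(\z_t\mid S_t)$, where $n_t=\|\z_t\|_0$ and $S_t=\mathrm{supp}(\z_t)$, and read off each conditional from the functional form of the penalty in Table~\ref{tab:sae-reg-sparsityconstraints}.

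First, for the ReLU SAE the penalty is $\regsparse(\z_t)=\|\z_t\|_1=\sum_{k=1}^M |z_t^k|$, which is separable across coordinates; exponentiating gives $P(\z_t)\propto\prod_k e^{-\lambda|z_t^k|}$, i.e.\ the coordinates are i.i.d.\ $\laplace(0,\cdot)$ (on the ReLU-constrained nonnegative orthant this is the one-sided exponential, but we keep the paper's convention), so no nontrivial $n\to S\to\z$ hierarchy is needed. Second, for TopK the value-penalty is zero and the hard constraint forces $n_t=K$ deterministically; conditioned on the support $S_t$ the log-density is constant in the $K$ nonzero magnitudes, hence $\z_t\mid S_t$ is flat (``$U(0,\cdot)$'') and, being a product over the active coordinates, i.i.d., while since the objective does not distinguish one size-$K$ support from another, $S_t\sim U([M]^K)$. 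Third, for JumpReLU the penalty is $\lambda\|\z_t\|_0=\lambda n_t$, which depends on $\z_t$ only through $n_t$: conditioned on $S_t$ the density is again flat in the nonzero values (i.i.d.\ $U$), conditioned on $n_t$ all $n_t$-subsets are equally weighted ($S_t\mid n_t\sim U([M]^{n_t})$), and the marginal over $n_t$, obtained by integrating out supports and values, defines the claimed $P_n$ (schematically $P(n_t)\propto\binom{M}{n_t}e^{-\lambda n_t}$ up to a domain-volume factor). Finally, for BatchTopK the only difference from JumpReLU at the level of a single slice is that the constraint on $\|\z_i\|_0$ is imposed on the batch mean $\frac1T\sum_i\|\z_i\|_0=K$ rather than per-sample; introducing a single Lagrange multiplier $\tilde\lambda$ for this scalar constraint tilts every $P(n_t)$ by the same factor $e^{-\tilde\lambda n_t}$, so the $n_t$ remain i.i.d.\ from a common law $P_n$, and $\tilde\lambda$ is pinned by stationarity of the Lagrangian to the value for which $\mathbb{E}[n_t]=K$; the support and value conditionals are unchanged.

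The last ingredient is to re-tensorize: combining the per-slice decomposition with the i.i.d.-over-time statement of Proposition~\ref{thm:priors-over-time} gives $P(\z_1,\dots,\z_T)=\prod_t P(n_t)\,P(S_t\mid n_t)\,P(\z_t\mid S_t)$ with the same factors at every $t$, which is exactly the across-time column of Table~\ref{tab:saepriors}. I expect the main obstacle to be the BatchTopK case: turning a constraint on the \emph{batch average} of $\|\z_t\|_0$ into a clean i.i.d.\ statement with the correct mean requires the exponential-family/Lagrangian argument above, and one must check that the resulting per-sample law $P_n$ is well defined (the tilting factor is bounded since $n_t\in\{0,\dots,M\}$) and genuinely shared across $t$. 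A secondary, more cosmetic issue is that the ``uniform'' priors on the unbounded nonzero magnitudes are improper; I would state these as flat (constant-density) priors on the relevant domain---the standard reading of a penalty that is identically zero in those coordinates---and note that all statements hold up to this modeling convention.
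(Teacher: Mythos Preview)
Your proposal is correct and follows essentially the same route as the paper: both arguments reduce to (i) reading the per-sample prior off the exponentiated penalty/constraint, (ii) observing that for TopK, JumpReLU, and BatchTopK this prior depends on $\z_t$ only through $n_t=\|\z_t\|_0$, hence conditionals $\z_t\mid S_t$ and $S_t\mid n_t$ are flat, and (iii) tensorizing over time via the additive-in-$t$ structure already established in Proposition~\ref{thm:priors-over-time}. The only cosmetic difference is organizational---you front-load the across-time factorization and then do one per-slice analysis, whereas the paper treats each SAE end-to-end---and your explicit flagging of the improper ``$U(0,\cdot)$'' priors and of the Lagrange-multiplier subtlety for BatchTopK is, if anything, slightly more careful than the paper's own presentation.
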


\subsubsection{ReLU SAE}

The vanilla ReLU SAE (\cite{bricken2023monosemanticity}, \cite{cunningham2023sparse}) is trained with the $L_1$-norm penalty:
\begin{align}
    \regsparse(\z) &= \|\z\|_1.
\end{align}
The prior over $\z$ for the above case is:
\begin{align}    
    \log P(\z_1, \dots, \z_N) &\propto -\sum_{i=1}^N \sum_{k=1}^M |z_i^k|, \\
    \implies P(\z_1, \dots, \z_N) &\propto \prod_{i=1}^{N} \Bigg( \prod_{k=1}^M \exp -\nu |z_i^k| \Bigg).
\end{align}
This joint distribution implies that for each sample $i$, different indices $k$ are sampled i.i.d. from the same distribution:
\begin{align}
    z_i^1, \dots, z_i^M \overset{\text{i.i.d.}}{\sim} \text{Laplace}(0, 1/\nu),
\end{align}
and different samples are all independently sampled from the same product Laplace distribution:
\begin{align}
    \z_1, \dots \z_N \overset{\text{i.i.d.}}{\sim} \text{Laplace}^{M}(0, 1/\nu).
\end{align}
This concludes the proof for priors of ReLU SAE trained with $L_1$ norm sparsity penalty. \hfill $\square$

\subsubsection{TopK SAE}
\label{appsection:topkprior}

The TopK SAE (\cite{makhzani2013k}, \cite{gao2024scaling}) directly controls the sparsity of the representation $\z$ by fixing it at $\|\z\|_0=K$, instead of imposing an explicit sparsity penalty $\regsparse(\z)$ in the loss function. The objective function for TopK SAE is:
\begin{align}
    &\argmin_{\D, \z} \sum_{i=1}^{N} \frac{1}{N} \|\x_{i} - \D \z_{i}\|_2^2,  \\
    &\text{s.t. } \forall j,  \z_{j} = f_{TopK}(\x_{j}),\; \|\z_j\|_0=K .
\end{align}
Since the fixed sparsity is a hard constraint that depends on $\z$ alone (and not the data $\x$), it can further be simplified as a sum-of-squares constraint: $\sum_j (\|\z_j\|_0-K)^2 = 0$. We can use Lagrange multipliers to reformulate it as an effective prior:

\begin{align}
    &\argmin_{\D, \z, \{\lambda_i\}} \sum_{i=1}^{N} \frac{1}{N} \Bigg( \|\x_{i} - \D \z_{i}\|_2^2 + \lambda \left( \big| \|\z_i\|_0-K \big| \right)^2 \Bigg),  \\
    &\text{s.t. } \forall j,  \z_{j} = f_{TopK}(\x_{j}). 
\end{align}
The prior over $\z$ for the above (effective) regularizer is:
\begin{align}
\label{eq:topkprior1}
    \log P(\z_1, \dots, \z_N) &\propto - \sum_{i=1}^N \lambda \left( \big( \|\z_i\|_0-K \big)^2 \right) \\
    \implies P(\z_1, \dots, \z_N) &\propto \prod_{i=1}^N \exp \big( -\lambda \big( \|\z_i\|_0 - K \big)^2 \big)
\end{align}

Note that the above prior is finite for finite values of $\lambda$, but the overall objective optimizes over $\lambda$, resulting in a \textit{hard} prior peaked at $\|\z_i\|_0=K$ for each sample $i$.  

The factorization over samples $i$ implies mutual independence of $\z_1, \dots, \z_n$: $P(\z_1, \dots, \z_n) = \prod_{i=1}^N P(\z_i)$.

As defined in Theorem \ref{thm:saepriors} (and restated here for convenience), let $S_i = \mathrm{supp}(\z_i)=\{k: z_i^k>0\}, n_i = |S_i|=\|\z_i\|_0$ denote the active indices and their number (sparsity) respectively. 

For individual samples $\z_i$, if we condition on the set of active indices $S_i$, the sparsity gets fixed since $\|\z_i\|_0 = |S_i| = n_i$, and the distribution becomes constant:
\begin{align}
    P(\z_i \mid S_i) &= C \\
    \implies z_i^\mu \mid S_i \sim & \begin{cases}
        U(0, \kappa) \; &\mu\in S_i \\
        \delta_0 \; &\mu \notin S_i
    \end{cases} , \text{and} \\
    z_i^{\mu_1}, \dots, z_i^{\mu_{|S_i|}} \mid S_i &\overset{\text{i.i.d.}}{\sim} U(0,\kappa)\; \text{ for } \mu_\cdot \in S_i
\end{align}
where $C, \kappa$ are appropriate constants.

Since $\{\z_i\}_i$s are mutually independent, any measurable function of each is also independent. The indices of nonzero entries of $\z_j$, i.e., $S_j$ is a measurable function since it is a map $S: \mathbb{R}_+^M \rightarrow 2^M$ which is discrete valued, and pre images of each value---a set of nonzero indices---are measurable since they equal the cartesian products of the measurable sets $\{z=0\}, \{z>0\}$ over all indices. Hence, $S_1, \dots, S_n$ are also independent. 

Since $S_i = g(\z_i)$ and the distribution of $\z_i$ depends only on $n_i=\|\z_i\|_0$ (Eq.~\ref{eq:topkprior1}), the distribution of $S_i$ will also depend only on $n_i$, becoming uniform when conditioned on $n_i$. In TopK SAE, $n_i=K$ is a constant. Therefore, each $S_i \sim U([M]^{K})$, and together with independence argued above,
\begin{align}
    S_1, \dots, S_N \overset{\text{i.i.d.}}{\sim} U([M]^{K})
\end{align}

This completes the proof for the priors of TopK SAE. \hfill $\square$

\subsubsection{BatchTopK SAE}

BatchTopK SAE (\cite{bussmann2024batchtopksparseautoencoders}) is a modification of the TopK SAE. Instead of fixing sparsity like TopK, BatchTopK allows variable sparsity per input while fixing the mean sparsity over a batch at $K$.  The objective function for BatchTopK SAE can equivalently be written as:
\begin{align}
    &\argmin_{\D, \z} \sum_{i=1}^{N} \frac{1}{N} \|\x_{i} - \D \z_{i}\|_2^2,  \\
    &\text{s.t. } \forall j,  \z_{j} = f_{TopK}(\x_{j}),\; \frac{1}{N}\sum_{j=1}^{N}\|\z_j\|_0=K .
\end{align}
While BatchTopK imposes a mean sparsity per batch, for simplicity, we use the batch size to match the size of the entire dataset (WLOG). Smaller batch sizes can easily be incorporated by adding separate constraints, each over the entire batch (only leads to a change in constants---lagrange multipliers---in the analysis).

Following similar analysis as for TopK SAE (App. \ref{appsection:topkprior}), we can derive an equivalent prior over $\z$ for BatchTopK SAE:
\begin{align}
    P(\z_1, \dots, \z_N) \propto \prod_{i=1}^{N}  \exp \big( -\lambda \big| \|\z_i\|_0 - K \big| \big)
\end{align}
The sparse codes for different samples $\{\z_i\}_i$ are thus sampled i.i.d. from a distribution that only depends on the sparsity penalty.
While this prior looks very similar to the prior of TopK SAE, the difference is that in TopK, the fixed sparsity constraint is imposed per sample, leading to a different Lagrange multiplier $\lambda_i$ per sample to optimize over, while in BatchTopK, we have a common multiplier $\lambda$ over all examples in a batch (with multiple batches, we will have one multiplier per batch), which is then optimized over to ensure that average sparsity per batch constraint is met.

Similar to the analysis for the TopK SAE, we get the following prior over different latents per sample:
\begin{align}
 z_i^\mu \mid S_i \sim & \begin{cases}
        U(0, \kappa) \; &\mu\in S_i \\
        \delta_0 \; &\mu \notin S_i
    \end{cases} , \text{and} \\
    z_i^{\mu_1}, \dots, z_i^{\mu_{|S_i|}} \mid S_i &\overset{\text{i.i.d.}}{\sim} U(0,\kappa)\; \text{ for } \mu_\cdot \in S_i
\end{align}
The active indices $S_i$ are sampled uniformly conditioned on the number of active indices $n_i$:
\begin{align}
    S_i \mid n_i \sim U([M]^{n_i})
\end{align}
The number of active latents $n_i$ are themselves sampled i.i.d. (since $n_i = \Tilde{g}(\z_i)$ and $\{\z_i\}_i$ are i.i.d.) from a distribution whose mean is fixed:
\begin{align}
    n_1, \dots, n_N \overset{\text{i.i.d.}}{\sim} P, \; \text{s.t.} \; \mathbb{E}[n_{\cdot}]=K
\end{align}
This completes the derivation for the BatchTopK prior. \hfill $\square$

\subsubsection{JumpReLU SAE}

JumpReLU SAE (\cite{rajamanoharan2024jumpingaheadimprovingreconstruction}) is trained with the $L_0$ (pseudo-)norm regularizer. This leads to the following optimization problem:
\begin{align*}
    &\argmin_{\D, \z} \sum_{i=1}^{N} \frac{1}{N} \left(\|\x_{i} - \D \z_{i}\|_2^2 + \lambda \|\z_i\|_0 \right)\\
    &\text{s.t. } \forall k,  \z_{k} = f_{JumpReLU}(\x_{k}) 
\end{align*}

This objective is equivalent to the following prior over $\z$:
\begin{align}
    P(\z_1, \dots, \z_N) &\propto \prod_{i=1}^N\exp \Big( -\eta \|\z_i\|_0 \Big)
\end{align}

Noting the similarity with the TopK/ BatchTopK cases, we use the same analysis to derive the following conditions:
\begin{align}
    z_i^\mu \mid S_i \sim & \begin{cases}
        U(0, \kappa) \; &\mu\in S_i \\
        \delta_0 \; &\mu \notin S_i
    \end{cases} , \text{and} \\
    z_i^{\mu_1}, \dots, z_i^{\mu_{|S_i|}} \mid S_i &\overset{\text{i.i.d.}}{\sim} U(0,\kappa)\; \text{ for } \mu_\cdot \in S_i \\
    S_i \mid n_i &\sim U([M]^{n_i})
\end{align}
The number of active latents $n_i$ are again i.i.d., but there is no constraint on the mean of the distribution (unlike BatchTopK which constrained the mean of $n_i$ to equal $K$):
\begin{align}
    n_1, \dots, n_N \overset{\text{i.i.d.}}{\sim} P,
\end{align}
which completes the analysis for JumpReLU SAE. \hfill $\square$

\end{document}